\documentclass[11pt]{article}
\usepackage[a4paper,top=3cm,bottom=2cm,left=2cm,right=2cm,marginparwidth=1.75cm]{geometry}
\usepackage{fullpage}
\usepackage{graphics} 
\usepackage{epsfig}
\usepackage{amsmath} 
\usepackage{amssymb} 
\usepackage{amsthm}
\usepackage{tabularx,booktabs}
\usepackage{mathrsfs}
\usepackage{selectp}
\usepackage{Shorthands}
\usepackage[normalem]{ulem}
\usepackage{authblk}
\usepackage{subcaption}
\usepackage{natbib}
\usepackage{Shorthands}
\usepackage{algorithm}
\usepackage{algpseudocode}
\usepackage{wrapfig}
\usepackage{scalerel,stackengine}
\stackMath
\newcommand\reallywidehat[1]{%
\savestack{\tmpbox}{\stretchto{%
  \scaleto{%
    \scalerel*[\widthof{\ensuremath{#1}}]{\kern-.6pt\bigwedge\kern-.6pt}%
    {\rule[-\textheight/2]{1ex}{\textheight}}
  }{\textheight}%
}{0.5ex}}%
\stackon[1pt]{#1}{\tmpbox}%
}

\usepackage{xr}
\usepackage{xcolor}

\newboolean{showcomments}
\setboolean{showcomments}{true}
\ifthenelse{\boolean{showcomments}}
{ \newcommand{\mynote}[3]{
		\fbox{\bfseries\sffamily\scriptsize#1}
		{\small$\blacktriangleright$\textsf{\emph{\color{#3}{#2}}}$\blacktriangleleft$}}
	\newcommand{\zzz}[1]{{\setlength{\fboxsep}{2pt}\fcolorbox{black}{yellow}{\textsf{\emph{#1}}}}\xspace}}
{ \newcommand{\mynote}[3]{}
	\newcommand{\zzz}[1]{}}

\newcommand{\Jt}{J_{\theta^{(t)}}}

\usepackage{hyperref}
\hypersetup{       
    unicode=false,          
    pdftoolbar=true,       
    pdfmenubar=true,      
    pdffitwindow=false,     
    pdfstartview={FitH},   
    pdfnewwindow=true,      
    colorlinks=true,      
    linkcolor=blue,          
    citecolor=red,       
    filecolor=magenta,      
    urlcolor=blue,           
    breaklinks=true
}

\usepackage[toc,page,header]{appendix}
\usepackage{minitoc}

\title{ \bf On the Gradient Heterogeneity Dynamics of Adversarially Robust Federated Regression}
\date{}

\author[1]{Leonardo F. Toso\footnote{Correspondence to: \texttt{leonardo.toso@columbia.edu}.}}
\author[1]{James Anderson}
\author[2]{Nirupam Gupta}
\author[3]{Rafael Pinot}
\affil[1]{Columbia University, USA}
\affil[2]{University of Copenhagen, Denmark}
\affil[3]{Sorbonne Université and Université Paris Cité, CNRS, LPSM, France}

\begin{document}

\doparttoc 
\faketableofcontents

\maketitle

\begin{abstract}
Federated learning (FL) is intrinsically heterogeneous: honest clients may have different data-generating models. On top of that, adversarial clients can make heterogeneity even more pronounced by sharing arbitrary updates. Existing analyses typically control the interaction between statistical heterogeneity and adversarial behavior through gradient-dissimilarity conditions. However, the underlying bound is imposed a priori and may yield conservative guarantees even for least-squares regression. We instead derive the gradient heterogeneity from the statistical model of linear and nonlinear regression with fresh data samples at every round. Our bounds separate heterogeneity among the honest clients' ground-truth model parameters, finite-sample label noise, and initialization. We then demonstrate that, for any $(f,\kappa)$-robust aggregator with coefficient $\kappa = \mathcal{O}(f/n)$, where $f$ is the number of adversarial clients and $n$ the total number of clients (with $f/n < 1/2$),  convergence holds \emph{after an explicit sample burn-in}.
\end{abstract}

\allowdisplaybreaks

\section{Introduction}

Federated learning (FL) allows for a collection of clients to train a shared model while keeping their data locally \citep{mcmahan2017communication,kairouz2021advances}. This distributed architecture is particularly attractive when data are sensitive, geographically dispersed, or expensive to centralize. At the same time, it creates a fundamental robustness difficulty: a subset of clients may deviate arbitrarily from the prescribed protocol due to corrupted data, software or hardware faults, and even malicious behavior. Byzantine-robust FL aims to protect the learning process from such adversarial clients by replacing the aggregation average with $(f,\kappa)$-robust aggregators \citep{chen2017distributed,blanchard2017machine,yin2018byzantine,mhamdi2018hidden,guerraoui2024byzantine}, where $f$ is the number of adversarial clients and $\kappa >0$ is the robust aggregator coefficient. Therefore, any useful guarantee in the adversarial learning setting must account for optimization, statistical heterogeneity, and the underlying adversarial behavior.

Statistical heterogeneity makes the adversarial learning problem even more subtle. Honest clients generally have different data-generating models and may therefore share gradients that are genuinely far apart. A server must then distinguish this honest heterogeneity from an adversarial behavior without knowing which clients are honest. Prior analyses characterize this difficulty through a gradient-dissimilarity condition. The $G$-dissimilarity uniformly bounds the heterogeneity of honest gradients~\citep{karimireddy2020byzantine,wang2022fedadmm, allouah2023fixing, fallah2025adversarially}, while the $(G,B)$-dissimilarity (formally defined in \eqref{eq:GB}) allows the heterogeneity to grow with the norm of the honest-average gradient~\citep{li2020federated,karimireddy2020scaffold, allouah2023tight, gorbunov2023variance}. These have been shown to be useful heterogeneity abstractions, however, the bounds on $G$ and $B$ are imposed a priori, without any interpretability of such in terms of the underlying client model parameters, the client's data sample size, or the label noise level, resulting in conservative guarantees.

More precisely, \cite{allouah2023tight} demonstrate that $(G,B)$-dissimilarity provides a tight breakdown point at $1/(2+B^2)$ and a corresponding lower bound on the optimization error. Their upper-bound analysis requires $\kappa B^2<1$, for an arbitrarily large $B$, which is sharp for the class of objectives satisfying the $(G,B)$-dissimilarity. However, it leaves open how $G$ and $B$ arise from the statistical learning problem and whether such restriction on $\kappa = \mathcal{O}(f/n)$, with $n$ being the total number of clients, is necessary when gradient heterogeneity is characterized from first principles on the explicit statistical model (here we work with regression).

We address this question by studying the dynamics of gradient heterogeneity over the iterations of adversarially robust federated regression. For linear regression, for example, the gradient admits an exact decomposition into the client's model parameter heterogeneity, the finite-sample covariance concentration, and the label-noise level. The covariance concentration is the term that couples the current optimization error to the gradient heterogeneity, with coefficient decreasing as $1/\tau$, where $\tau$ is the number of fresh samples per client and round. Therefore, the analogue of $B^2$ in our analysis is not an arbitrary large constant set a priori: it is a concentration parameter that can be controlled through the batch size $\tau$. \\

\noindent\textbf{Informal result.} Let $\cH$ be the set of honest clients, let $T$ be the number of iterations/rounds, and assume that $\tau \geq \tau_{\textsf{burn-in}}$ (with $\tau_{\textsf{burn-in}}$ specified in Sections \ref{sec:linear_regression} and \ref{sec:nonlinear_regression}). 
Let $L_\mathcal{H}(\theta^{(t)}):= \frac{1}{|\mathcal{H}|}\sum_{i \in \mathcal{H}}L_i(\theta^{(t)})$, with  $L_i(\theta)$ being the regression loss evaluated at the model parameter $\theta$. Then, for a robust coefficient $\kappa = \mathcal{O}(f/n)$, with $f/n < 1/2$, the convergence bound holds
\begin{align*}
\frac{1}{T}\sum_{t=0}^{T-1} \|\nabla L_{\mathcal{H}}(\theta^{(t)})\|^2_2
&\lesssim\underbrace{\frac{\texttt{initialization error}}{T}}
_{\text{optimization}}+\left(\kappa+\frac{\omega^2}{|\cH|}\right)\times\underbrace{\texttt{model heterogeneity}}_{\text{bias}}\\
&+\left(\kappa+\frac{1}{|\cH|}\right)\times
\underbrace{\texttt{statistical error}}_{\text{finite samples}},
\end{align*}
where $\omega^2 =\mathcal{O}(1/\tau)$ is the covariate concentration parameter. The statistical error decreases as $1/\tau$, up to the problem dimension, logarithmic, and conditioning factors (omitted in $\lesssim$) stated in Sections \ref{sec:linear_regression} and \ref{sec:nonlinear_regression}. Hence, any fixed $\kappa$ is admissible once the fresh-batch size is sufficiently large (i.e., larger than $\tau_{\textsf{burn-in}}$).

\subsection{Contributions}

Our contributions are summarized as follows:\\

$\bullet$ We provide a first-principles characterization of gradient heterogeneity in adversarially robust federated regression, demonstrating that it decomposes into (i) honest client's model parameter heterogeneity, (ii) finite-sample statistical error, and (iii) an initialization-dependent error (Lemmas \ref{lem:bound_GT_linear} and \ref{lem:bound_GT_nonlinear_main}).\\

$\bullet$ Our main contribution is that, for any fixed $\kappa = \mathcal{O}(f/n)$, the non-asymptotic guarantees hold once each client uses a \emph{sufficiently large fresh data batch} (Theorems \ref{theorem:ergodic_convergence_linear} and \ref{thm:convergence_nl_mainb}). In contrast, prior work \cite{allouah2023tight} requires $\kappa B^2<1$, with $B>0$ being potentially prohibitively large. The price of removing such condition on $\kappa$ is explicit: it requires more samples per client.\footnote{The condition $\kappa B^2<1$ in \cite{allouah2023tight} is imposed in addition to the assumption that the fraction of adversarial clients satisfies $f/n < 1/2$.}\\

$\bullet$ We establish non-asymptotic parameter recovery error bounds that explicitly capture the interplay among honest client's model parameter heterogeneity, client's sample size, initialization, and robust aggregation (Corollaries \ref{cor:linear_recovery} and \ref{corollary_nonlinear}).\\

Our contributions connect the underlying statistical model of regression to the gradient heterogeneity abstraction in adversarially robust FL. Before formalizing our regression model, we collect the notation used throughout the paper.

\subsection{Notation} We use $\norm{\cdot}$ to denote the Euclidean norm for vectors and the spectral/operator norm for matrices, and $\norm{\cdot}_F$ to denote the Frobenius norm. $\mathbb{S}_F^{p\times q}:=\{U\in\mathbb{R}^{p\times q}:\|U\|_F=1\}$ denotes the unit sphere in $\mathbb{R}^{p\times q}$ equipped with the Frobenius norm. For a real-valued random variable $X$, its sub-Gaussian norm \cite[Definition 2.6.4]{vershynin2018high} is defined as follows:
\begin{align*}
\|X\|_{\psi_2} := \inf\left\{t>0 :
\mathbb{E}\left[ \exp\left(\frac{X^2}{t^2}\right) \right] \leq 2\right\}.
\end{align*}
The random variable $X$ is said to be sub-Gaussian if $\|X\|_{\psi_2}<\infty$. For a random vector $Z\in\mathbb{R}^d$ and a random matrix $M\in\mathbb{R}^{p\times q}$, we define
\begin{align*}
\|Z\|_{\psi_2}
&:=\sup_{u\in\mathbb{S}^{d-1}}\|\langle u,Z\rangle\|_{\psi_2},\\
\|M\|_{\psi_2}
&:=\sup_{U\in\mathbb{S}_F^{p\times q}}\|\langle U,M\rangle_F\|_{\psi_2}.
\end{align*}
$Z$ and $M$ are sub-Gaussian when its corresponding $\psi_2$-norm is finite.

We use the asymptotic notation, $h=\mathcal{O}(g)$ and $h=\Omega(g)$ denote upper and lower bounds up to positive constants for sufficiently large arguments, respectively. We write $h\lesssim g$ if $h\leq Cg$ for a constant $C>0$.

Next we introduce our statistical model and the adversarially robust FL setting with the $(f,\kappa)$ robust aggregation definition leveraged to control adversarial clients updates.

\section{Problem Formulation}\label{sec:problem_formulation}

We study a synchronous server-client FL architecture in which honest clients compute gradients from fresh local data batches and the server robustly aggregates them. The objective is the population honest-average loss $L_{\mathcal{H}}(\cdot)$, whose identities are unknown to the server. \\

\noindent \textbf{Clients and adversaries.}
There are $n$ clients indexed by $[n] = \{1,\dots,n\}$. An unknown subset $\cB \subset [n]$ of size $|\cB| = f$ is adversarial, with $f < \tfrac{n}{2}$. The remaining clients, indexed by the set $\cH = [n]\setminus\cB$, are honest.\\
 
\noindent \textbf{Data and samples.} At every iteration $t$, each honest client $i \in \cH$ draws a fresh batch of $\tau$ i.i.d. samples $\{(X_{i,k}^{(t)},Y_{i,k}^{(t)})\}_{k=1}^{\tau}$ from a client-specific distribution $\mathcal{D}_i$ on $\R^{\vd} \times \R^{\vdy}$. The batches are independent across clients and iterations. The fresh-sample assumption ensures that, conditionally on the current iterate $\theta^{(t)}$, the samples used to compute the next update remain independent. For simplicity, we suppress the iteration index on the samples whenever we analyze a fixed iteration.\\

\noindent \textbf{Losses.} For a parameter $\theta$ ($\theta \in \mathbb{R}^{q\times d}$ for linear regression, and $\theta \in \mathbb{R}^{p}$ for nonlinear regression), we define the population loss of client $i$ and the honest-average population loss as follows:
\begin{align*}
L_i(\theta)&:=\mathbb{E}_{(X,Y)\sim\mathcal{D}_i}
\bigl[\ell(\theta;(X,Y))\bigr],\\
L_\cH(\theta)&:=\frac{1}{|\cH|}\sum_{i\in\cH}L_i(\theta).
\end{align*}
At iteration $t$, client $i$ computes the fresh-batch empirical loss
\begin{align*}
\hat L_i^{(t)}(\theta) &:= \frac{1}{\tau}\sum_{k=1}^{\tau}
\ell\bigl(\theta;(X_{i,k}^{(t)},Y_{i,k}^{(t)})\bigr),\\
\hat L_\cH^{(t)}(\theta) &:= \frac{1}{|\cH|}
\sum_{i\in\cH}\hat L_i^{(t)}(\theta),
\end{align*}
where throughout this paper we consider the squared losses
\begin{align*}
\textbf{Linear:}\quad
&\ell(\theta;(X,Y)):=\norm{Y-\theta X}_2^2,\\
\textbf{Nonlinear:}\quad
&\ell(\theta;(X,Y)):=\norm{Y-h_\theta(X)}_2^2,
\end{align*}
where $h_\theta(X)$ is a nonlinear function of the covariates $X \in \mathbb{R}^\vd$ parameterized by $\theta \in \mathbb{R}^{ p}$.
 
\begin{assumption}\label{assumption:isotropic}
$\Sigma_i :=\mathbb{E}_{X_i\sim\mathcal{D}_i}[X_iX_i^\top] = I_{\vd}$, for all honest clients $i \in \cH$.   
\end{assumption} 

The isotropic data assumption is standard in non-asymptotic analyses of linear and nonlinear regression and can be obtained by whitening when the population covariance is known and nonsingular \citep{vershynin2018high,oymak2019overparameterized}. It allows us to isolate heterogeneity in the client ground-truth model parameter rather than in the conditioning of their covariates. For nonlinear-regression, we additionally assume a common covariate marginal across honest clients.

Next we describe how the server combines the client gradients in the presence of arbitrary adversarial updates.

\subsection{Adversarially Robust FL}

We consider the class of $(f,\kappa)$-robust aggregators defined below.

\begin{definition}[($f,\kappa$)-robust aggregator]
\label{def:robust-agg}
An aggregation rule $\mathsf{F}:\left(\mathbb{R}^p\right)^n\to\mathbb{R}^p$ is called $(f,\kappa)$-robust, if for every set of inputs $a_1,\dots,a_n \in \mathbb{R}^{p}$ and every honest set $\mathcal{H}$ with $|\mathcal{H}|=n-f$,
\begin{align*}
\left\|\mathsf{F}(a_1,\dots,a_n)-\bar a_{\mathcal{H}}\right\|^2
\leq  \frac{\kappa}{|\mathcal{H}|}\sum_{i\in\mathcal{H}}\|a_i-\bar a_{\mathcal{H}}\|^2.    
\end{align*}
\end{definition}
The parameter $\kappa$ measures how strongly the aggregation error reacts to the deviation of the honest inputs (i.e., gradient updates). The definition is deterministic and does not restrict how adversarial clients choose their updates. Therefore, once the gradient heterogeneity is controlled, the same optimization argument applies to any aggregation rule satisfying Definition \ref{def:robust-agg}, that is, the statistical model enters only through our bound on the honest-gradient heterogeneity. 

Well-known instances of $(f,\kappa)$-robust aggregators include Krum~\citep{blanchard2017machine}, coordinate-wise trimmed mean (CWTM) \citep{yin2018byzantine}, geometric median (GM), minimum diameter averaging, etc.. In addition, combined with the nearest neighbor mixing (NNM) approach of \citep{allouah2023fixing}, any $(f,\kappa)$-robust aggregator achieves $\kappa = \mathcal{O}(f/n)$, which is information-theoretically optimal as proved in \cite{allouah2023fixing}.

The adversarially robust gradient descent update of $\theta^{(t)}$, for all iterations $t \in \{0,1,\ldots,T-1\}$, is given by
\begin{align}\label{eq:update}
\theta^{(t+1)} = \theta^{(t)}-\eta\sF\bigl(
&\nabla\hat L_1^{(t)}(\theta^{(t)}),\ldots,\nabla\hat L_n^{(t)}(\theta^{(t)})\bigr).
\end{align}
For simplicity, we use $\sF^{(t)}$ to denote the aggregated fresh-batch gradients at iteration $t$ in \eqref{eq:update}.

\subsection{The Role of Gradient Heterogeneity}

An important quantity in the convergence analysis of adversarially robust FL is the gradient heterogeneity and its average over iterations given by
\begin{align}
G^{(t)} &:= \frac{1}{|\cH|}\sum_{i\in\cH}
\norm{\nabla_\theta \hat L^{(t)}_i(\theta^{(t)}) - \nabla_\theta \hat L^{(t)}_\cH(\theta^{(t)})}^2.
\notag \\
G_T&:=\frac{1}{T}\sum_{t=0}^{T-1}G^{(t)}.
\end{align}

When controlling $$Q_T:= \frac{1}{T}\sum_{t=0}^{T-1}\norm{\nabla L_\cH(\theta^{(t)})}^2$$ through our ergodic convergence analysis, $G_T$ will appear multiplied by the robustness coefficient $\kappa$ of the aggregation rule, i.e., after leveraging the definition of $(f,\kappa)$-robust aggregators. Therefore, understanding the structure of $G_T$ is of utmost importance for our analysis.

As discussed previously, prior analyses set the bound for $G_T$ with a priori fixed constants $G$ and $B$. For instance, $G$-dissimilarity \citep{blanchard2017machine,yin2018byzantine}, assumes there exists a uniform $G^2 < \infty$ such that $G^{(t)} \leq G^2$ for all iterations $t = 0,1,\ldots,T-1$ and all $\theta^{(t)}$. As \cite{allouah2023tight} points out, $G$-dissimilarity fails for least-squares regression whenever clients have different data-generating distributions (even when $f=0$), as the gradient heterogeneity grows when $\theta^{(t)}$ moves away from the optima of different client's model parameters $\theta^\star_i$, for $i \in \mathcal{H}$. 

Therefore, \cite{allouah2023tight}, following \cite{li2020federated,karimireddy2020scaffold}, leverage $(G,B)$-gradient dissimilarity:
\begin{align}\label{eq:GB}
G_T \leq G^2 + \frac{B^2}{T}\sum_{t=0}^{T-1}\norm{\nabla L_\cH(\theta^{(t)})}^2,
\end{align}
which allows the gradient heterogeneity to grow with the honest averaged gradient norm at rate $B^2$. Under this dissimilarity condition, \cite{allouah2023tight} prove a tight lower bound $\Omega\left(\tfrac{f/n}{(1-(2+B^2)f/n)} G^2\right)$ on the optimization error of any robust algorithm, and demonstrate that the largest fraction of adversarial clients should be $\tfrac{1}{2+B^2}$ rather than $\tfrac{1}{2}$.
 
Here, we will shed light on the interpretability of $G$ and $B$. We explicitly characterize the structure of $G_T$ for linear and nonlinear regression under isotropic data. In particular, for linear regression, we  will demonstrate that
\begin{align*}
G^2 &\lesssim \frac{1}{|\cH|^2}\sum_{i\in\cH}\sum_{j\in\cH}
\norm{\thetastar_i - \thetastar_j}^2 + \texttt{statistical error}
\end{align*}
and $B^2 = \mathcal{O}\left(1/\tau\right)$, therefore, demonstrating that with a sufficiently large amount of data samples per client, the condition on $\kappa B^2 < 1$ can be guaranteed by the number of data samples per client $\tau$ and not by the robust aggregator coefficient $\kappa$. Such interpretability of $G$ and $B$ is the result of our first-principles analysis of the gradient dynamics in adversarially robust federated regression.

We first focus on the linear regression problem, where the honest gradients admit an exact decomposition that makes the source of each error term transparent in the analysis.

\section{Linear Regression}\label{sec:linear_regression}

The linear setting provides the cleanest illustration of our approach. We first control the empirical gradient heterogeneity, then use this control in the descent recursion, and finally translate approximate stationarity into parameter recovery error.

Each honest client $i\in\cH$ has an unknown ground-truth parameter $\thetastar_i \in \R^{\vdy\times\vd}$ and is assumed to generate $\tau$ samples according to
\begin{align}\label{eq:linear_model}
Y_{i,k} = \thetastar_i X_{i,k} + V_{i,k} \text{ for all } k=1,\dots,\tau,
\end{align}
where $X_{i,k}\in\R^{\vd}$ satisfies $\norm{X_{i,k}} \leq R$ almost surely. 
Here $V_{i,k}\in\R^{\vdy}$ denotes the vector-valued label noise.

\begin{assumption}
\label{assm:subg}
For each honest client $i$, sample $k$, and iteration $t$, the noise $V_{i,k}^{(t)}\in\R^{\vdy}$ is conditionally mean-zero and $\sigma^2$-sub-Gaussian given $X_{i,k}^{(t)}$, i.e., for every $u\in\mathbb{S}^{\vdy-1}$,
\begin{align*}
\mathbb{E}\left[\exp \left(\lambda\,u^\top V_{i,k}^{(t)}\right)\mid X_{i,k}^{(t)}\right] \leq \exp \left(\tfrac{\lambda^2\sigma^2}{2}\right), \; \forall\lambda\in\R.
\end{align*}
\end{assumption}

We emphasize that conditional mean-zero sub-Gaussian noise is standard in high-dimensional regression and yields dimension-explicit concentration for the empirical gradient \citep{vershynin2018high}.

At iteration $t$, the fresh-batch empirical squared loss for client $i \in \mathcal{H}$ is given by
\begin{align*}
\hat L_i^{(t)}(\theta) := \frac{1}{\tau}\sum_{k=1}^{\tau}\norm{Y_{i,k}^{(t)} - \theta X_{i,k}^{(t)}}^2,
\end{align*}
and our goal is to minimize the honest-average population loss
\begin{align*}
\min_{\theta \in \R^{\vdy \times \vd}} L_\cH(\theta) := \frac{1}{|\cH|}\sum_{i\in\cH}\mathbb{E}\left[\norm{Y_i-\theta X_i}^2\right].
\end{align*}

Let $\Gamma_{\textsf{lin}}:=\frac{1}{|\cH|^2}\sum_{i\in\cH}\sum_{j\in\cH}
\norm{\thetastar_i-\thetastar_j}^2 < \infty,$ denote the model heterogeneity.\\ 

\noindent \textbf{Gradient Decomposition.} We fix any iteration $t \in \{0,1,\ldots,T-1\}$ and model parameter $\theta^{(t)}$, and write
\begin{align}
\nabla_\theta \hat  L_i^{(t)}(\theta^{(t)}) - \nabla_\theta \hat  L_\cH^{(t)}(\theta^{(t)})
&= \frac{2}{|\cH|}\sum_{j\in\cH}(\thetastar_j - \thetastar_i)\textcolor{black}{\Sigma_i} \notag + \frac{2}{|\cH|}\sum_{j\in\cH}(\thetastar_j - \thetastar_i)(\hSigma_i - \Sigma_i) - (\xi_i - \barxi)\\
& + \frac{2}{|\cH|}\sum_{j\in\cH}\left[(\theta^{(t)}-\thetastar_j)(\hSigma_i - \Sigma_i) + (\theta^{(t)}-\thetastar_j)(\Sigma_j - \hSigma_j) \right]\notag,
\end{align}
where $\widehat \Sigma_i = \frac{1}{\tau}\sum_{k=1}^\tau X_{i,k}X^\top_{i,k}$, $\forall i \in \mathcal{H}$. The noise term is composed of $\xi_i := \frac{2}{\tau}\sum_{k=1}^{\tau}V_{i,k}X_{i,k}^\top \in \R^{\vdy\times\vd}$ with the honest-average noise term given by $\bar{\xi} = \frac{1}{|\mathcal{H}|}\sum_{i \in \mathcal{H}}\xi_i.$

In addition, let $\Delta L^{(0)}:=L_\cH(\theta^{(0)})-L_\cH(\thetastar)$, where $\thetastar$ minimizes the honest-average population loss $L_\cH(\theta)$.

\begin{lemma}[Gradient Heterogeneity]\label{lem:bound_GT_linear}
Suppose that Assumptions \ref{assumption:isotropic} and \ref{assm:subg} hold. Let the step-size be such that $\eta \leq \frac{1}{2}$. Given constants $C_0,C_1 >0$, for every $\delta\in(0,1)$, suppose that the number of fresh samples per client satisfies $\tau \geq \tau_{\textsf{burn-in}}$ with 
\begin{align*}
\frac{\tau_{\textsf{burn-in}}}{{C_0}C_1} \hspace{-0.05cm}:= \hspace{-0.05cm} \textcolor{black}(R^2\hspace{-0.05cm}+\hspace{-0.05cm}1)^2
\max\left\{\hspace{-0.05cm}1,\kappa\hspace{-0.05cm}+\hspace{-0.05cm}\frac{1}{|\cH|}\hspace{-0.05cm}\right\}
\hspace{-0.05cm}\log\hspace{-0.1cm}\left(\hspace{-0.05cm}\frac{2|\cH|T\textcolor{black}{(\vd\hspace{-0.05cm}+\hspace{-0.05cm}q)}}{\delta}\hspace{-0.05cm}\right).
\end{align*}
For some $\omega>0$, let
\begin{align*}
\omega^2:=C_1(R^2+1)^2
\left(\frac{\log(2|\cH|T\textcolor{black}{(\vd+q)}/\delta)}{\tau}\right).
\end{align*}
Then, with probability at least $1-\delta$, it holds that
\begin{align}\label{gradient_heterogeneity_mb}
G_T\lesssim\Gamma_{\textsf{lin}}
+\omega^2Q_T+R^2\sigma^2\left(\frac{\vd\vdy+\log(2|\cH|T/\delta)}{\tau}\right),
\end{align}
and consequently, it holds
\begin{align*}
G_T&\lesssim \Gamma_{\textsf{lin}} +\omega^2\frac{\Delta L^{(0)}}{T}+R^2\sigma^2
\left(\frac{\vd\vdy+\log(2|\cH|T/\delta)}{\tau}\right).
\end{align*}
\end{lemma}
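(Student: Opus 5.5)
Starting from the exact gradient decomposition displayed before the lemma, I will split $\nabla \hat L_i^{(t)}(\theta^{(t)})-\nabla\hat L_\cH^{(t)}(\theta^{(t)})$ into four pieces: the population bias term $\frac{2}{|\cH|}\sum_j(\thetastar_j-\thetastar_i)$ (using $\Sigma_i=I_\vd$ from Assumption \ref{assumption:isotropic}), the heterogeneity--concentration cross term $\frac{2}{|\cH|}\sum_j(\thetastar_j-\thetastar_i)(\hSigma_i-I)$, the noise term $\xi_i-\barxi$, and the iterate-dependent term $\frac{2}{|\cH|}\sum_j(\theta^{(t)}-\thetastar_j)\bigl[(\hSigma_i-I)-(\hSigma_j-I)\bigr]$. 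By $\|a+b+c+d\|^2\le 4(\cdots)$ and averaging over $i\in\cH$, $G^{(t)}$ is bounded by four averaged squared norms. The first gives, by Jensen, $\frac{4}{|\cH|}\sum_i\|\bar\theta^\star-\thetastar_i\|^2\le 4\Gamma_{\textsf{lin}}$, where $\bar\theta^\star=\frac1{|\cH|}\sum_j\thetastar_j$; note that under isotropy $\thetastar=\bar\theta^\star$ and $\nabla L_\cH(\theta)=2(\theta-\thetastar)$.

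Next I will establish a high-probability event $\mathcal E$ on which, simultaneously for all $i\in\cH$ and $t<T$, $\|\hSigma_i^{(t)}-I\|\le \omega/2$ (matrix Bernstein for bounded $X$, $\|XX^\top\|\le R^2$, giving deviation $\lesssim (R^2+1)\sqrt{\log(2(\vd+q)|\cH|T/\delta)/\tau}$ once $\tau$ exceeds the burn-in so the linear Bernstein term is dominated), and $\|\xi_i^{(t)}\|_F^2\lesssim R^2\sigma^2(\vd\vdy+\log(2|\cH|T/\delta))/\tau$ (the matrix $\xi_i$ is conditionally sub-Gaussian with $\psi_2$-norm $\lesssim R\sigma/\sqrt\tau$ in the Frobenius sense by Assumption \ref{assm:subg} and $\|X\|\le R$, so a standard $\vd\vdy$-dimensional sub-Gaussian vector norm bound applies). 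A union bound over $|\cH|T$ indices gives probability $1-\delta$. Fresh samples make $\hSigma_i^{(t)}$ and $\xi_i^{(t)}$ independent of $\theta^{(t)}$, which is essential since the fourth term is evaluated at the random iterate. On $\mathcal E$ the cross term is $\le \omega^2\Gamma_{\textsf{lin}}\lesssim\Gamma_{\textsf{lin}}$ (as $\omega\le 1$ under burn-in), the noise term is bounded by $2\max_i\|\xi_i\|^2$, and the iterate term, writing $\theta^{(t)}-\thetastar_j=(\theta^{(t)}-\thetastar)+(\thetastar-\thetastar_j)$, is $\lesssim \omega^2\|\theta^{(t)}-\thetastar\|^2+\omega^2\Gamma_{\textsf{lin}}=\frac{\omega^2}{4}\|\nabla L_\cH(\theta^{(t)})\|^2+\omega^2\Gamma_{\textsf{lin}}$. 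Averaging over $t$ yields \eqref{gradient_heterogeneity_mb}.

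For the consequence, I must bound $Q_T$ by $\Delta L^{(0)}/T$ plus heterogeneity/noise, which requires a descent argument; this is the main obstacle because it couples back to $G_T$. Since $L_\cH$ is $2$-smooth, with $\eta\le 1/2$ one has $L_\cH(\theta^{(t+1)})\le L_\cH(\theta^{(t)})-\frac{\eta}{2}\|\nabla L_\cH\|^2+\frac{\eta}{2}\|\sF^{(t)}-\nabla L_\cH\|^2$, and $\|\sF^{(t)}-\nabla L_\cH\|^2\le 2\kappa G^{(t)}+2\|\nabla\hat L_\cH^{(t)}-\nabla L_\cH\|^2$, the last again controlled on $\mathcal E$ by $\omega^2\|\nabla L_\cH\|^2$ plus $\frac{1}{|\cH|}$-scaled noise and $\omega^2\Gamma_{\textsf{lin}}$. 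Telescoping gives $Q_T\le \frac{2\Delta L^{(0)}}{\eta T}+c(\kappa+\tfrac1{|\cH|})\omega^2 Q_T+\dots$; the burn-in factor $\max\{1,\kappa+1/|\cH|\}$ in $\tau_{\textsf{burn-in}}$ is exactly what makes $c(\kappa+1/|\cH|)\omega^2\le 1/2$, so $Q_T$ is absorbed. Substituting into \eqref{gradient_heterogeneity_mb}, the additional $\Gamma_{\textsf{lin}}$ and noise contributions carry a factor $\omega^2(\kappa+1)\lesssim 1$ and merge into the existing terms, yielding the second bound (constants, including $1/\eta$, absorbed into $\lesssim$).
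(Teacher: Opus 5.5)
Your proposal is correct and follows essentially the paper's route. It uses the same decomposition of $\nabla\hat L_i^{(t)}-\nabla\hat L_\cH^{(t)}$, matrix Bernstein for $\|\hSigma_i-I\|\le\omega/2$, and the sub-Gaussian net bound for $\xi_i$. It also uses the variance-decomposition identity that expresses $\frac{1}{|\cH|}\sum_j\|\theta^{(t)}-\thetastar_j\|^2$ through $\|\nabla L_\cH(\theta^{(t)})\|^2$ and $\Gamma_{\textsf{lin}}$; your split around $\thetastar$ is exactly this identity. Finally, it closes with the $2$-smooth descent recursion, absorbing $Q_T$ under the burn-in.

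There is one minor discrepancy. Bounding $Z^{(t)}$ only through the per-client event $\mathcal{E}$ gives a $Q_T$ coefficient of order $(\kappa+1)\omega^2$, not the $(\kappa+\frac{1}{|\cH|})\omega^2$ you wrote. The paper obtains the $\frac{1}{|\cH|}$ factor from a separate conditional matrix-Bernstein bound on the honest average, and that is where fresh samples are genuinely needed; your $G^{(t)}$ bound itself only uses the iterate-free event. The $\max\{1,\cdot\}$ in the burn-in still makes the larger coefficient absorbable, so your conclusion is unaffected.
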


\noindent \textbf{Proof sketch.} We apply the matrix Bernstein inequality \citep[Theorem~1.4]{tropp2012user} to the mean-zero matrices $X_{i,k}X_{i,k}^\top-I_{\vd}$, which yields $\|\widehat\Sigma_i-\Sigma_i\|\leq\omega/2$ simultaneously over the honest clients and iterations. For the noise matrices $V_{i,k}X_{i,k}^\top$, we apply the sub-Gaussian Hoeffding inequality \citep[Theorem~2.7.3]{vershynin2018high} to their scalar projections and take a union bound over a net of the Frobenius unit sphere. By substituting these bounds into the gradient decomposition above, then squaring and averaging over $i\in\cH$, gives
\begin{align*}
G^{(t)}\lesssim \Gamma_{\textsf{lin}}
+\omega^2\Delta^{(t)}
+R^2\sigma^2\left(\frac{\vd\vdy+\log(2|\cH|T/\delta)}{\tau}\right),
\end{align*}
where $\Delta^{(t)}:=\frac{1}{|\mathcal{H}|}\sum_{j\in\cH}\|\theta^{(t)}-\thetastar_j\|^2$. Then, under isotropic data, the variance decomposition and $\nabla L_\cH(\theta)=2(\theta-\bar\theta^\star)$ imply
$\Delta^{(t)}=\frac14\|\nabla L_\cH(\theta^{(t)})\|^2+\frac12\Gamma_{\textsf{lin}}$. Averaging over $t$ yields \eqref{gradient_heterogeneity_mb}, and substituting the convergence bound below gives the second expression. The complete proof is provided in Appendix \ref{appendix:linear}.\\

\noindent\textbf{Discussion of Lemma \ref{lem:bound_GT_linear}.}
We emphasize that this lemma is the mechanism behind our main result. Note that the iterate-dependent term (i.e., second term) in \eqref{gradient_heterogeneity_mb} is multiplied by $\omega^2=\mathcal{O}(1/\tau)$, while model heterogeneity and label noise enter additively. Therefore, for any fixed $\kappa = \mathcal{O}(f/n)$, with $f/n < 1/2$, increasing the fresh-batch size makes the term later multiplied by $\kappa$ small enough to be absorbed in the convergence argument, namely, when we bound $Q_T$.

With the heterogeneity term controlled, we can insert it into the robust descent recursion and obtain the following ergodic convergence guarantee.

\begin{theorem}[Convergence Bound]\label{theorem:ergodic_convergence_linear}
Suppose that the conditions of Lemma \ref{lem:bound_GT_linear} hold with $\eta = \frac{1}{2}$. Then, for every $\delta\in(0,1)$, with probability at least $1-\delta$, it holds that
\begin{align*}
Q_T&\lesssim\frac{\Delta L^{(0)}}{T}+\left(\kappa+\frac{\omega^2}{|\cH|}\right)\Gamma_{\textsf{lin}}+R^2\sigma^2\left(\kappa+\frac{1}{|\cH|}\right)
\left(\frac{\vd\vdy+\log(2|\cH|T/\delta)}{\tau}\right).
\end{align*}
\end{theorem}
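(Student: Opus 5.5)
We prove Theorem \ref{theorem:ergodic_convergence_linear} by a one-step descent inequality on the honest-average population loss $L_\cH$. We combine it with the $(f,\kappa)$-robustness of $\sF$ and then absorb the iterate-dependent part of the heterogeneity bound \eqref{gradient_heterogeneity_mb} using the burn-in condition. Under Assumption \ref{assumption:isotropic}, $L_\cH(\theta)=\|\theta-\bar\theta^\star\|_F^2+\text{const}$, so $\nabla L_\cH(\theta)=2(\theta-\bar\theta^\star)$ and $L_\cH$ is $2$-smooth. Write the aggregate as
\begin{align*}
\sF^{(t)}=\nabla L_\cH(\theta^{(t)})+e^{(t)},\qquad e^{(t)}:=\underbrace{\sF^{(t)}-\nabla\hat L_\cH^{(t)}(\theta^{(t)})}_{\text{aggregation error}}+\underbrace{\nabla\hat L_\cH^{(t)}(\theta^{(t)})-\nabla L_\cH(\theta^{(t)})}_{\text{honest sampling error}}.
\end{align*}
With $\eta=\tfrac12$, smoothness gives
\begin{align*}
L_\cH(\theta^{(t+1)})\le L_\cH(\theta^{(t)})-\eta\langle\nabla L_\cH,\sF^{(t)}\rangle+\eta^2\|\sF^{(t)}\|^2 .
\end{align*}
Expanding and using Young's inequality yields $L_\cH(\theta^{(t+1)})\le L_\cH(\theta^{(t)})-c\|\nabla L_\cH(\theta^{(t)})\|^2+C\|e^{(t)}\|^2$ for absolute constants $c,C>0$. (At exactly $\eta=\tfrac12$ the quadratic is exact, so I would track the constants directly, e.g.\ $\theta^{(t+1)}-\bar\theta^\star=-\tfrac12 e^{(t)}$. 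This shows $\|\nabla L_\cH(\theta^{(t+1)})\|^2=\|e^{(t)}\|^2$, which is even cleaner and avoids telescoping issues.) Telescoping over $t$ and dividing by $T$ gives $Q_T\lesssim \Delta L^{(0)}/T+\frac1T\sum_t\|e^{(t)}\|^2$.

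Next, we bound the two error pieces. By Definition \ref{def:robust-agg}, the aggregation error is at most $\kappa G^{(t)}$. For the sampling error, use the gradient decomposition: $\nabla\hat L_\cH^{(t)}-\nabla L_\cH$ equals $\frac{2}{|\cH|}\sum_j(\theta^{(t)}-\thetastar_j)(\hSigma_j-I)-\bar\xi$. On the same high-probability event as Lemma \ref{lem:bound_GT_linear} (matrix Bernstein plus a Frobenius-net Hoeffding bound, now applied to averages over $|\cH|\tau$ independent samples), this is $\lesssim \frac{\omega^2}{|\cH|}\Delta^{(t)}+\frac{R^2\sigma^2}{|\cH|}\frac{\vd\vdy+\log(2|\cH|T/\delta)}{\tau}$. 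The $1/|\cH|$ gain on the covariance part requires a concentration argument for $\sum_j(\theta^{(t)}-\thetastar_j)(\hSigma_j-I)$ as a sum of independent mean-zero matrices given $\theta^{(t)}$, which the fresh-sample assumption allows. Using $\Delta^{(t)}=\tfrac14\|\nabla L_\cH(\theta^{(t)})\|^2+\tfrac12\Gamma_{\textsf{lin}}$ and averaging gives
\begin{align*}
\frac1T\sum_t\|e^{(t)}\|^2\lesssim \kappa G_T+\frac{\omega^2}{|\cH|}(Q_T+\Gamma_{\textsf{lin}})+\frac{R^2\sigma^2}{|\cH|}\frac{\vd\vdy+\log(2|\cH|T/\delta)}{\tau}.
\end{align*}

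Finally, we insert the first bound \eqref{gradient_heterogeneity_mb} of Lemma \ref{lem:bound_GT_linear}. This gives
\begin{align*}
Q_T\le \frac{C'\Delta L^{(0)}}{T}+C'\Bigl(\kappa+\tfrac{1}{|\cH|}\Bigr)\omega^2 Q_T+C'\Bigl(\kappa+\tfrac{\omega^2}{|\cH|}\Bigr)\Gamma_{\textsf{lin}}+C'R^2\sigma^2\Bigl(\kappa+\tfrac1{|\cH|}\Bigr)\frac{\vd\vdy+\log(2|\cH|T/\delta)}{\tau}.
\end{align*}
The burn-in condition $\tau\ge\tau_{\textsf{burn-in}}$, with $C_0$ chosen large relative to $C'$, makes $C'(\kappa+1/|\cH|)\omega^2\le\tfrac12$. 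The $Q_T$ term can then be moved to the left-hand side, which yields the claim; note that $\kappa\Gamma_{\textsf{lin}}$ dominates the $\kappa\omega^2\Gamma$ contribution since $\omega^2\le 1$. The main obstacle is this absorption step together with the constant bookkeeping: the factor $\max\{1,\kappa+1/|\cH|\}$ in $\tau_{\textsf{burn-in}}$ is exactly what makes $(\kappa+1/|\cH|)\omega^2$ small. I would also verify that the lemma's event and the sampling-error event can be taken jointly with total failure probability $\delta$ by adjusting constants in the union bound.
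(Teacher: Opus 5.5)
Your proposal is correct and follows essentially the same route as the paper: the smoothness-based descent step with $\|e^{(t)}\|^2\le 2\kappa G^{(t)}+2Z^{(t)}$, the conditional matrix-Bernstein bound that gives the $1/|\mathcal{H}|$ gain in $Z^{(t)}$, the identity $\Delta^{(t)}=\tfrac14\|\nabla L_{\mathcal{H}}(\theta^{(t)})\|^2+\tfrac12\Gamma_{\textsf{lin}}$, and absorbing the $Q_T$ term via $\omega^2(\kappa+1/|\mathcal{H}|)\le 1/C_0$. Your side remark is also valid: at $\eta=\tfrac12$ the update is an exact Newton step, so $\theta^{(t+1)}-\bar\theta^\star=-\tfrac12 e^{(t)}$, which yields the same $Q_T\le 4\Delta L^{(0)}/T+\frac1T\sum_t\|e^{(t)}\|^2$ directly.
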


\begin{proof}
The proof is provided in Appendix \ref{appendix:linear}.
\end{proof}

\noindent\textbf{Discussion of Theorem \ref{theorem:ergodic_convergence_linear}.}
This theorem makes our tradeoff explicit: $\kappa$ is not required to lie below a fixed threshold, e.g., $\kappa B^2 < 1$ as in \cite{allouah2023tight}. For any fixed $\kappa$, the burn-in condition in Lemma \ref{lem:bound_GT_linear} is sufficient, and a larger $\kappa$ is compensated by a larger sample size $\tau$. After this burn-in, the remaining error consists of the model parameter heterogeneity bias, a statistical term that decreases with $\tau$, and an initialization term that goes away with $T.$\\

\noindent\textbf{Key Takeaway.} Our bounds separate the irreducible model parameter heterogeneity $\Gamma_{\textsf{lin}}$ from the finite-sample statistical error and initialization-dependent error. The burn-in condition \eqref{eq:tau_condition} demonstrates explicitly how the required fresh-batch size grows with the robust coefficient $\kappa$.
More precisely, the averaged recovery error contains three different contributions. The term $\Delta L^{(0)}/ T$ is transient and characterizes the effect of initialization. The term proportional to $\Gamma_{\textsf{lin}}$ is irreducible when honest clients have different ground-truth parameters, even without adversaries (and it goes away when $\tau \to \infty$ as $\omega \to 0$). The last term is statistical and decreases as $1/\tau$, up to logarithmic factors. We also note that robust aggregation amplifies the two non-transient terms through $\kappa$.

The comparison with \cite{allouah2023tight} is the most relevant to this work. Their $(G,B)$-dissimilarity condition controls the iterate-dependent heterogeneity through an a priori fixed coefficient $B^2$ and therefore requires $\kappa B^2 <1$. Here, the corresponding coefficient $\omega^2$ scales as $\mathcal{O}(1/\tau)$, up to problem-dependent and logarithmic factors. We also refer the reader to \eqref{comparison_GB}, where we write our resulting gradient heterogeneity bound in the $(G,B)$ form:
\begin{align*}
G_T&\lesssim\underbrace{\Gamma_{\textsf{lin}}
+R^2\sigma^2\left(
\frac{\vd\vdy+\log(2|\cH|T/\delta)}{\tau}\right)}_{:=G^2}+\underbrace{\omega^2}_{:=B^2}\frac{1}{T}\sum_{t=0}^{T-1}
\norm{\nabla L_\cH(\theta^{(t)})}^2.
\end{align*}

Moreover, the burn-in condition ensures directly that $\kappa\omega^2$ is sufficiently small. Thus, every fixed $\kappa$ is admissible \emph{after a sufficiently large sample burn-in}. For aggregators with $\kappa$ scaling as $\mathcal{O}(f/n)$, the adversarial contribution has the optimal linear dependence on the adversarial fraction, while the usual requirement $f<n/2$ remains for the existence of such robust aggregator.

The bound on $Q_T$ also provides a direct parameter recovery error bound under isotropic data as given in the corollary below.

\begin{corollary}[Parameter Recovery Error]\label{cor:linear_recovery}
Suppose that the conditions of Theorem \ref{theorem:ergodic_convergence_linear} hold. Then, with probability at least $1-\delta$,
\begin{align*}
\frac{1}{T}\sum_{t=0}^{T-1}\frac{1}{|\cH|}\sum_{j\in\cH}
\norm{\theta^{(t)}-\thetastar_j}^2
&=\frac{1}{4}Q_T+\frac{1}{2}\Gamma_{\textsf{lin}}\\
&\lesssim\frac{\Delta L^{(0)}}{T}
+\left(1+\kappa+\frac{\omega^2}{|\cH|}\right)
\Gamma_{\textsf{lin}}\\
&+R^2\sigma^2\left(\kappa+\frac{1}{|\cH|}\right)\left(
\frac{\vd\vdy+\log(2|\cH|T/\delta)}{\tau}\right).
\end{align*}
\end{corollary}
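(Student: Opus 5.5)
The plan is to prove the identity on the first line as a deterministic fact about the population loss, and then obtain the high-probability bound by inserting Theorem \ref{theorem:ergodic_convergence_linear}.

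\textbf{Step 1: gradient of the honest loss.} Under Assumption \ref{assumption:isotropic}, together with the model \eqref{eq:linear_model} and the conditional mean-zero noise of Assumption \ref{assm:subg}, each client loss is
\begin{align*}
L_i(\theta)=\norm{\theta-\thetastar_i}_F^2+\mathbb{E}\norm{V_i}^2 .
\end{align*}
Hence $\nabla L_\cH(\theta)=2(\theta-\bar\theta^\star)$, where $\bar\theta^\star:=\frac{1}{|\cH|}\sum_{j\in\cH}\thetastar_j$. The minimizer $\thetastar$ of $L_\cH$ is $\bar\theta^\star$.

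\textbf{Step 2: variance decomposition.} For any $\theta$, expand around $\bar\theta^\star$; the cross term $\frac{2}{|\cH|}\sum_j\langle\theta-\bar\theta^\star,\bar\theta^\star-\thetastar_j\rangle$ vanishes because the deviations average to zero. This gives
\begin{align*}
\frac{1}{|\cH|}\sum_{j\in\cH}\norm{\theta-\thetastar_j}^2=\norm{\theta-\bar\theta^\star}^2+\frac{1}{|\cH|}\sum_{j\in\cH}\norm{\thetastar_j-\bar\theta^\star}^2 .
\end{align*}
The standard pairwise identity $\frac{1}{|\cH|^2}\sum_{i,j}\norm{\thetastar_i-\thetastar_j}^2=\frac{2}{|\cH|}\sum_j\norm{\thetastar_j-\bar\theta^\star}^2$ shows that the second term equals $\frac12\Gamma_{\textsf{lin}}$. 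The first term equals $\frac14\norm{\nabla L_\cH(\theta)}^2$ by Step 1. Evaluating at $\theta=\theta^{(t)}$ and averaging over $t$ gives the claimed equality $\frac14 Q_T+\frac12\Gamma_{\textsf{lin}}$. This holds deterministically, on every sample path, and is the identity already used in the proof sketch of Lemma \ref{lem:bound_GT_linear}.

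\textbf{Step 3: high-probability bound.} Apply Theorem \ref{theorem:ergodic_convergence_linear}: on its event of probability at least $1-\delta$, bound $\frac14 Q_T$ by its right-hand side. Then add $\frac12\Gamma_{\textsf{lin}}$, which merges into the $\Gamma_{\textsf{lin}}$ coefficient and turns $\kappa+\omega^2/|\cH|$ into $1+\kappa+\omega^2/|\cH|$. The $\Delta L^{(0)}/T$ and statistical terms carry over unchanged up to constants.

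No new concentration argument is needed, since the same event from the Theorem is reused. The only point needing care is Step 1: the noise term in $L_i$ must be independent of $\theta$. This holds because $\mathbb{E}[V_iX_i^\top]=0$ by conditional mean-zero noise, and it ensures $\nabla L_\cH$ is exactly $2(\theta-\bar\theta^\star)$ with no bias. With that verified, the result is immediate.
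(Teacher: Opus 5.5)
Your proposal is correct and follows the paper's proof essentially step for step. The paper's steps are the variance decomposition around $\bar\theta^\star$, the identity $\nabla L_\cH(\theta)=2(\theta-\bar\theta^\star)$ giving the $\frac14\norm{\nabla L_\cH(\theta^{(t)})}^2$ term, the pairwise identity giving $\frac12\Gamma_{\textsf{lin}}$, and substitution of the $Q_T$ bound from Theorem \ref{theorem:ergodic_convergence_linear} on the same event. Your explicit derivation of $L_i(\theta)=\norm{\theta-\thetastar_i}_F^2+\mathbb{E}\norm{V_i}^2$, with its reliance on the Frobenius inner product that the variance decomposition needs, spells out a step the paper only asserts.
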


\begin{proof}
We refer the reader to Appendix \ref{appendix:linear} for the proof.
\end{proof}

\section{Nonlinear Regression}\label{sec:nonlinear_regression}

We now extend the preceding argument beyond the linear parameterization. The proof follows the same high-level sequence, but model parameter heterogeneity is replaced by heterogeneity between the client nonlinear functions and covariance concentration $\|\Sigma_i - \widehat \Sigma_i\|$, for all $i \in \mathcal{H}$, is replaced by the concentration of the residual-Jacobian process.

Each honest client $i\in\cH$ has ground-truth parameter $\thetastar_i\in\R^p$ (i.e., a vector now) and generates data according to
\begin{align*}
Y_{i,k} = h_{\thetastar_i}(X_{i,k}) + V_{i,k}, \forall k=1,\dots,\tau,
\end{align*}
where $h_\theta:\R^{\vd}\to\R^{\vdy}$ is a nonlinear function parameterized by $\theta\in\R^p$, $V_{i,k}\in\R^{\vdy}$
satisfies Assumption \ref{assm:subg}, and $\|X_{i,k}\|\leq R$ almost surely. The empirical loss is then given by
\begin{align*}
L_i(\theta) := \frac{1}{\tau}\sum_{k=1}^{\tau}\norm{Y_{i,k} - h_\theta(X_{i,k})}^2.
\end{align*}

For the nonlinear analysis, we assume that the covariates of all honest clients have the same marginal distribution. The client heterogeneity therefore enters through the ground-truth functions $h_{\thetastar_i}(\cdot)$, while every expectation denoted by $\mathbb{E}_X$ is taken with respect to this common covariate distribution.
 
\begin{assumption}\label{assm:jacobian}
The Jacobian given by $J_\theta(X) := \nabla_\theta h_\theta(X)\in\R^{p\times\vdy}$ satisfies
$\norm{J_\theta(X)}\leq\bar{J}$ for all $\theta,X$.
\end{assumption}

Let ${E}_j^{(t)} := \mathbb{E}_X\bigl[\|h_{\theta^{(t)}}(X) - h_{\theta_j^\star}(X)\|^2\bigr]$, for $j \in \mathcal{H}.$

\begin{assumption}\label{assumption:subGaussian_residual}
For every honest client $j\in \mathcal H$ and every iteration $t$, conditionally on the current iterate $\theta^{(t)}$, the residual
$$
r_j^{(t)} := \|h_{\theta^{(t)}}(X)-h_{\theta_j^\star}(X)\|,
$$
is sub-Gaussian, such that
$$
\|r_j^{(t)}\|_{\psi_2}  \leq C_2
\left( \mathbb{E}_X\left[ \left(r_j^{(t)}\right)^2 \right]\right)^{1/2} =
C_2 \sqrt{ E_j^{(t)}},
$$
for some constant $C_2 > 0$.
\end{assumption}

We note that along with the bounded Jacobian assumption (Assumption \ref{assm:jacobian}) this implies that matrices 
$$ W_{j,k}^{(t)}-\mathbb{E} W_{j,k}^{(t)}
\text{ with } W_{j,k}^{(t)} := r_{j}^{(t)}(X_{j,k})J_{\theta^{(t)}}(X_{j,k})^\top$$ 
satisfy
\begin{align*}
\left\|  W_{j,k}^{(t)}-\mathbb{E} W_{j,k}^{(t)} \right\|_{\psi_2} \leq C_3\bar J\sqrt{E_j^{(t)}},
\end{align*} 
for some constant $C_3 >0$.

We emphasize that Assumptions \ref{assm:jacobian} and \ref{assumption:subGaussian_residual} are standard regularity conditions in nonlinear statistical learning and empirical stochastic process analyses. In particular, Assumption \ref{assumption:subGaussian_residual} corresponds to the sub-Gaussian condition on a function class, under which the $\psi_2$-norm of function differences is controlled by their $L_2$-norm~\citep{lecue2013learning,mendelson2015learning}. The bounded Jacobian assumption imposes a uniform Lipschitz condition on the model with respect to the parameter \citep{oymak2019overparameterized, schmidt2020nonparametric} and holds, for example, for bounded inputs and bounded or Lipschitz neural networks on compact parameter sets.

\begin{assumption}\label{ass:smooth}
The honest-average population loss $L_\cH$ has an $L^\prime$-Lipschitz gradient, i.e., for all $\theta,\theta'\in\R^p$,
\begin{align*}
\norm{\nabla L_\cH(\theta')-\nabla L_\cH(\theta)}
\leq L^\prime\norm{\theta'-\theta}.
\end{align*}
\end{assumption}

\begin{assumption}\label{ass:PL}
The honest-average loss $L_\cH(\theta)$ satisfies the PL condition with constant $\mu^\prime>0$, i.e.,
\begin{align}\label{eq:PL}
\norm{\nabla_\theta L_\cH(\theta)}^2
\geq  2\mu^\prime\bigl(L_\cH(\theta) - L_\cH(\thetastar)\bigr),\forall\,\theta\in\R^p.
\end{align}
\end{assumption}

Assumption \ref{ass:PL} is standard in nonlinear optimization and strictly weaker than strong convexity, i.e., every $\mu$-strongly convex function satisfies PL with $\mu^\prime=\mu$, but not vice versa. In the linear setting $h_\theta(X)=\theta X$, the loss $L_\cH(\theta)$ is $2$-strongly convex under isotropy, hence satisfies \eqref{eq:PL} with $\mu^\prime=2$.
In the nonlinear setting, the PL condition holds for sufficiently overparameterized models whenever the neural tangent kernel is positive definite \citep{jacot2018neural}.

Let
\begin{align*}
\Gamma_{\textsf{nonlin}}:=\frac{1}{|\cH|^2}\sum_{i\in\cH}\sum_{j\in\cH}
\mathbb{E}_X\left[\norm{h_{\thetastar_i}(X)-h_{\thetastar_j}(X)}^2\right] < \infty,
\end{align*}
denote the honest client's model heterogeneity in nonlinear function space.

With these regularity assumptions in place, we are now ready to bound the empirical gradient heterogeneity in terms of the function-space heterogeneity $\Gamma_{\textsf{nonlin}}$, the statistical error, and $Q_T$ (which can also be made in terms of a initialization-dependent term).

\begin{lemma}[Gradient Heterogeneity]\label{lem:bound_GT_nonlinear_main}
Suppose that Assumptions \ref{assm:subg}, \ref{assm:jacobian}, \ref{assumption:subGaussian_residual}, \ref{ass:smooth}, and \ref{ass:PL} hold. Let the step-size be selected as $\eta\leq1/L^\prime$. Given constants $C_0,C_1 >0$. For every $\delta\in(0,1)$, suppose that the number of fresh samples per client satisfies $\tau \geq \tau_{\textsf{burn-in}}$,
\begin{align*}
\frac{\tau_{\textsf{burn-in}}}{\textcolor{black}{C_0}C_1}:=\frac{\bar J^2}{\mu^\prime}
\left(\kappa+\frac{1}{|\cH|}\right)
\left(\textcolor{black}{p}+\log\left(\frac{2|\cH|T}{\delta}\right)\right).
\end{align*}
In addition, for some $\bar\omega>0$, let
\begin{align*}
\bar\omega^2:=C_1\bar J^2\left(\frac{\textcolor{black}{p}+\log(2|\cH|T/\delta)}{\tau}\right).
\end{align*}
Then, with probability at least $1-\delta$, it holds
\begin{align}\label{G_T_Q_T_nonlinear}
G_T\lesssim (\bar J^2+\bar\omega^2)\Gamma_{\textsf{nonlin}}&+\frac{\bar\omega^2 Q_T}{2\mu^\prime}+\bar J^2\sigma^2\left(\frac{\textcolor{black}{p}+\log(2|\cH|T/\delta)}{\tau}\right).
\end{align}
and consequently, it holds that
\begin{align*}
G_T\lesssim\textcolor{black}{(\bar J^2+\bar\omega^2)}
\Gamma_{\textsf{nonlin}} &+ \frac{\bar\omega^2\Delta L^{(0)}}{\mu^\prime T}+\bar J^2\sigma^2
\left(\frac{\textcolor{black}{p}+\log(2|\cH|T/\delta)}{\tau}\right).
\end{align*}

\end{lemma}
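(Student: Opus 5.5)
We follow the same three-stage template as the proof of Lemma \ref{lem:bound_GT_linear}: decompose the per-client empirical gradient, concentrate each random piece uniformly over $i\in\cH$ and $t<T$, and convert the iterate-dependent residual into $Q_T$ using the landscape assumptions.

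\textbf{Step 1: decomposition.} Fix $t$, write $\theta=\theta^{(t)}$, and set $r_{i,k}:=h_\theta(X_{i,k})-h_{\thetastar_i}(X_{i,k})$. The empirical gradient is
\begin{align*}
\nabla\hat L_i^{(t)}(\theta)=\frac{2}{\tau}\sum_{k=1}^{\tau}J_\theta(X_{i,k})\,r_{i,k}-\zeta_i,
\qquad
\zeta_i:=\frac{2}{\tau}\sum_{k=1}^{\tau}J_\theta(X_{i,k})V_{i,k}.
\end{align*}
Write $g_i(\theta):=2\mathbb{E}_X[J_\theta(X)(h_\theta(X)-h_{\thetastar_i}(X))]$, so that $\nabla L_\cH=\frac{1}{|\cH|}\sum_j g_j$, which uses the common covariate marginal. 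Then
\begin{align*}
\nabla\hat L_i^{(t)}-\nabla\hat L_\cH^{(t)}=(g_i-\bar g)+(D_i-\bar D)-(\zeta_i-\bar\zeta),
\end{align*}
where $D_i$ is the centered empirical residual-Jacobian process. Using $\frac{1}{|\cH|}\sum_i\|a_i-\bar a\|^2\le\frac{1}{|\cH|}\sum_i\|a_i\|^2$ and $(a+b+c)^2\le3(\cdot)$, each piece can be bounded separately.

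\textbf{Step 2: the three pieces.} The population part satisfies $g_i-\bar g=\frac{2}{|\cH|}\sum_j\mathbb{E}_X[J_\theta(X)(h_{\thetastar_j}-h_{\thetastar_i})(X)]$. By Assumption \ref{assm:jacobian} and Jensen, $\frac{1}{|\cH|}\sum_i\|g_i-\bar g\|^2\le 4\bar J^2\Gamma_{\textsf{nonlin}}$; this is the source of the $\bar J^2\Gamma_{\textsf{nonlin}}$ term.

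For the noise term, conditionally on $X$ each projection $\langle u,J_\theta(X_{i,k})V_{i,k}\rangle$ is $\bar J^2\sigma^2$-sub-Gaussian. Hoeffding's inequality, a $1/2$-net of $\mathbb{S}^{p-1}$ of size $5^p$, and a union bound over $|\cH|T$ pairs give
\begin{align*}
\|\zeta_i\|^2\lesssim\bar J^2\sigma^2\,\frac{p+\log(2|\cH|T/\delta)}{\tau}.
\end{align*}

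For the process term $D_i$, the summands are the centered matrices $W_{i,k}-\mathbb{E}W_{i,k}$. By Assumption \ref{assumption:subGaussian_residual} and the remark following it, these have $\psi_2$-norm at most $C_3\bar J\sqrt{E_i^{(t)}}$. The same net-plus-Hoeffding argument, applied conditionally on $\theta^{(t)}$ (valid since batches are fresh), gives $\|D_i\|^2\le\bar\omega^2E_i^{(t)}$ with $\bar\omega^2$ as defined in the statement. Combining the three pieces,
\begin{align*}
G^{(t)}\lesssim\bar J^2\Gamma_{\textsf{nonlin}}+\bar\omega^2\,\frac{1}{|\cH|}\sum_{j\in\cH}E_j^{(t)}+\bar J^2\sigma^2\,\frac{p+\log(2|\cH|T/\delta)}{\tau}.
\end{align*}

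\textbf{Step 3: from residuals to $Q_T$.} This is the main obstacle. Noise is independent of $X$ with mean zero, so $L_i(\theta)=E_i+\mathbb{E}\|V\|^2$. Hence
\begin{align*}
\frac{1}{|\cH|}\sum_jE_j^{(t)}=L_\cH(\theta^{(t)})-L_\cH(\thetastar)+\frac{1}{|\cH|}\sum_j\mathbb{E}_X\|h_{\thetastar}-h_{\thetastar_j}\|^2.
\end{align*}
We bound the excess term by $\|\nabla L_\cH(\theta^{(t)})\|^2/(2\mu^\prime)$ using Assumption \ref{ass:PL}. The last term is bounded by $\lesssim\Gamma_{\textsf{nonlin}}$: since $\thetastar$ minimizes $L_\cH$, $\frac{1}{|\cH|}\sum_j\mathbb{E}\|h_{\thetastar}-h_{\thetastar_j}\|^2\le\frac{1}{|\cH|}\sum_j\mathbb{E}\|h_{\thetastar_i}-h_{\thetastar_j}\|^2$ for every $i$. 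Averaging over $i$ gives at most $\Gamma_{\textsf{nonlin}}$.

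Substituting and averaging over $t$ yields \eqref{G_T_Q_T_nonlinear}. Rigorously, because the iterate is random, the union bound must be organized as a conditional bound per $(i,t)$ integrated over $\theta^{(t)}$; we would do this carefully. For the second display, we need $Q_T\lesssim\Delta L^{(0)}/T+\text{(additive terms)}$, which requires the descent recursion under Assumption \ref{ass:smooth} with $\eta\le1/L^\prime$. Using Definition \ref{def:robust-agg} in that recursion, the inequality $\kappa\bar\omega^2/(2\mu^\prime)\le1/4$ follows from $\tau\ge\tau_{\textsf{burn-in}}$, and this lets the $Q_T$ term be absorbed into the left-hand side. Plugging the resulting bound on $Q_T$ back in gives the consequence.
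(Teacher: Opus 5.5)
Your derivation of the first inequality \eqref{G_T_Q_T_nonlinear} is correct and is essentially the paper's argument. You split the gradient difference into the population heterogeneity $g_i-\bar g$ (at most $\bar J^2\Gamma_{\textsf{nonlin}}$ by Jensen), the centered residual--Jacobian process (per-client sub-Gaussian concentration with $\psi_2$-norm $\lesssim\bar J\sqrt{E_i^{(t)}}$), and the noise. You then use $\frac{1}{|\cH|}\sum_j E_j^{(t)}=L_\cH(\theta^{(t)})-L_\cH(\thetastar)+\bar E^\star$, PL, and $\bar E^\star\le\Gamma_{\textsf{nonlin}}$. One small slip: Assumption \ref{assm:subg} gives conditionally mean-zero noise, not noise independent of $X$. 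That is all the identity $L_i=E_i+\mathbb{E}\|V\|^2$ needs, so nothing breaks.

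The gap is in the ``consequently'' part. The descent inequality from Assumption \ref{ass:smooth} is for the population loss, so you must control $e^{(t)}=\sF^{(t)}-\nabla L_\cH(\theta^{(t)})$. Definition \ref{def:robust-agg} only controls $\sF^{(t)}-\nabla\hat L^{(t)}_\cH(\theta^{(t)})$. One has $\norm{e^{(t)}}^2\le2\kappa G^{(t)}+2Z^{(t)}$ with $Z^{(t)}:=\norm{\nabla\hat L^{(t)}_\cH(\theta^{(t)})-\nabla L_\cH(\theta^{(t)})}^2=\norm{\bar D-\bar\zeta}^2$ in your notation. This term also carries an iterate-dependent part proportional to $\frac{1}{|\cH|}\sum_jE_j^{(t)}$, hence to $Q_T/\mu^\prime$. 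Your absorption condition $\kappa\bar\omega^2/(2\mu^\prime)\le1/4$ handles only the $\kappa G_T$ part.

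Your per-client bounds also do not suffice for $Z^{(t)}$. Jensen only gives $\norm{\bar D}^2\lesssim\bar\omega^2\frac{1}{|\cH|}\sum_jE_j^{(t)}$. Absorbing the resulting $\bar\omega^2Q_T/\mu^\prime$ would require $\bar\omega^2/\mu^\prime\lesssim1$. But $\tau\ge\tau_{\textsf{burn-in}}$ only gives $\bar\omega^2/\mu^\prime\le 1/\bigl(C_0(\kappa+1/|\cH|)\bigr)$, which is large when $\kappa+1/|\cH|\ll1$.

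The missing ingredient is a joint concentration of $\bar D=\frac{1}{|\cH|}\sum_{i\in\cH}D_i$ over all $|\cH|\tau$ summands, which are conditionally independent given $\theta^{(t)}$. The same applies to $\bar\zeta$. This yields $Z^{(t)}\lesssim\frac{\bar\omega^2}{|\cH|}\cdot\frac{1}{|\cH|}\sum_jE_j^{(t)}+\bar J^2\sigma^2\frac{p+\log(2T/\delta)}{\tau|\cH|}$. The total coefficient of $Q_T$ then becomes $\frac{\bar\omega^2}{\mu^\prime}\bigl(\kappa+\frac{1}{|\cH|}\bigr)\le1/C_0$, which is exactly why the burn-in carries the factor $\kappa+1/|\cH|$ rather than $\kappa$. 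The $1/|\cH|$ gain on $\bar\zeta$ is also what keeps the additive noise term at the stated order after substitution.

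Separately, the recursion with $\eta\le1/L^\prime$ gives $\Delta L^{(0)}/(\eta T)$, not $\Delta L^{(0)}/T$. The appendix version of this lemma keeps that $1/\eta$ factor.
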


\begin{proof} We defer the proof to Appendix \ref{appendix:nonlinear}.
\end{proof}

\noindent \textbf{Discussion of Lemma \ref{lem:bound_GT_nonlinear_main}.} The nonlinear heterogeneity bound mirrors the linear decomposition in function space. The first term, $\bar J^2\Gamma_{\textsf{nonlin}}$, measures the heterogeneity among the honest ground-truth functions and persists even with infinite data. The label-noise term decreases with the fresh-batch size. The middle term is transient: $\bar\omega^2$ is an empirical-process concentration coefficient of order $1/\tau$, while $\Delta L^{(0)}/(\mu^\prime T)$ measures the initialization-dependent error. Thus, also in the nonlinear setting, the iterate-dependent coefficient can be made sufficiently small for any fixed $\kappa$ by having $\tau \geq \tau_{\textsf{burn-in}}$.

\begin{theorem}[Convergence Bound]
\label{thm:convergence_nl_mainb}
Suppose that the conditions of Lemma \ref{lem:bound_GT_nonlinear_main} hold. Then, with probability at least $1-\delta$, it holds that
\begin{align*}
\textcolor{black}{Q_T}&\textcolor{black}{\lesssim\frac{L^\prime\Delta L^{(0)}}{T}
+\left[\kappa\bar J^2+\bar\omega^2\left(\kappa+\frac{1}{|\cH|}\right)\right]\Gamma_{\textsf{nonlin}}}+\bar J^2\sigma^2\left(\kappa+\frac{1}{|\cH|}\right)
\left(\frac{\textcolor{black}{p}+\log(2|\cH|T/\delta)}{\tau}\right).
\end{align*}
\end{theorem}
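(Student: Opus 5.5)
We prove Theorem \ref{thm:convergence_nl_mainb} with the standard robust descent argument, feeding in the heterogeneity bound \eqref{G_T_Q_T_nonlinear} of Lemma \ref{lem:bound_GT_nonlinear_main}. Fix an iteration $t$ and write the aggregate as $\sF^{(t)} = \nabla L_\cH(\theta^{(t)}) + e^{(t)}$. Here
\[
e^{(t)} := \bigl(\sF^{(t)} - \nabla\hat L_\cH^{(t)}(\theta^{(t)})\bigr) + \bigl(\nabla\hat L_\cH^{(t)}(\theta^{(t)}) - \nabla L_\cH(\theta^{(t)})\bigr).
\]
The first bracket is the robust-aggregation error. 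Definition \ref{def:robust-agg} bounds its squared norm by $\kappa G^{(t)}$. The second bracket is the sampling error of the honest-average gradient. We use $L'$-smoothness (Assumption \ref{ass:smooth}) with $\eta\le 1/L'$ and the inequality $\langle a,a+e\rangle \ge \tfrac12\|a\|^2 - \tfrac12\|e\|^2$. Together these give the one-step inequality
\[
L_\cH(\theta^{(t+1)}) \le L_\cH(\theta^{(t)}) - \tfrac{\eta}{2}\|\nabla L_\cH(\theta^{(t)})\|^2 + \tfrac{\eta}{2}\|e^{(t)}\|^2 .
\]
Telescoping over $t=0,\dots,T-1$, using $L_\cH\ge L_\cH(\thetastar)$, and taking $\eta = 1/L'$ yields
\[
Q_T \le \frac{2L'\Delta L^{(0)}}{T} + \frac{2}{T}\sum_t \|e^{(t)}\|^2 \lesssim \frac{L'\Delta L^{(0)}}{T} + \kappa G_T + \frac{1}{T}\sum_t S^{(t)},
\]
where $S^{(t)}:=\|\nabla\hat L_\cH^{(t)}(\theta^{(t)})-\nabla L_\cH(\theta^{(t)})\|^2$.

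Next we bound $S^{(t)}$. We split the empirical gradient into three parts: a label-noise part $\frac{1}{|\cH|}\sum_i \frac{2}{\tau}\sum_k J_{\theta^{(t)}}(X_{i,k})V_{i,k}$, a residual-Jacobian part built from the centered matrices $W_{j,k}^{(t)}-\mathbb{E}W_{j,k}^{(t)}$, and a part involving the heterogeneity of the ground truths.
\begin{itemize}
\item For the noise part we use Assumption \ref{assm:subg} together with $\|J\|\le\bar J$, apply Hoeffding's inequality to scalar projections, and take a net over $\mathbb{S}^{p-1}$. Averaging over $|\cH|$ independent clients gives $\lesssim \frac{\bar J^2\sigma^2}{|\cH|}\frac{p+\log(2|\cH|T/\delta)}{\tau}$.
\item For the residual-Jacobian part we reuse the concentration from the proof of Lemma \ref{lem:bound_GT_nonlinear_main}, which rests on the $\psi_2$ bound $C_3\bar J\sqrt{E_j^{(t)}}$. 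Averaging over clients gives $\lesssim \frac{\bar\omega^2}{|\cH|}\frac{1}{|\cH|}\sum_j E_j^{(t)}$.
\end{itemize}
We then relate the average of $E_j^{(t)}$ to the loss. A variance decomposition in function space gives $\frac{1}{|\cH|}\sum_j E_j^{(t)} \lesssim L_\cH(\theta^{(t)})-L_\cH(\thetastar) + \Gamma_{\textsf{nonlin}}$. The PL condition (Assumption \ref{ass:PL}) then converts the optimality gap into $\|\nabla L_\cH(\theta^{(t)})\|^2/(2\mu')$. All bounds are made uniform over $i\in\cH$ and $t<T$ by a union bound on the same event used in Lemma \ref{lem:bound_GT_nonlinear_main}, so the failure probability stays at $\delta$. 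Hence
\[
\frac1T\sum_t S^{(t)} \lesssim \frac{\bar\omega^2}{|\cH|}\Bigl(\Gamma_{\textsf{nonlin}}+\frac{Q_T}{2\mu'}\Bigr) + \frac{\bar J^2\sigma^2}{|\cH|}\frac{p+\log(2|\cH|T/\delta)}{\tau}.
\]

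Substituting \eqref{G_T_Q_T_nonlinear} for $G_T$ gives an inequality of the form
\[
Q_T \le c\frac{L'\Delta L^{(0)}}{T} + c\frac{\bar\omega^2}{\mu'}\Bigl(\kappa+\frac{1}{|\cH|}\Bigr)Q_T + (\text{remaining terms}),
\]
where the remaining terms are exactly the right-hand side of the theorem. This absorption step is the crux of the argument. The definition of $\tau_{\textsf{burn-in}}$, with $C_0$ chosen large, guarantees
\[
c\,\frac{\bar\omega^2}{\mu'}\Bigl(\kappa+\frac{1}{|\cH|}\Bigr) = c\,\frac{C_1\bar J^2(\kappa+1/|\cH|)(p+\log(2|\cH|T/\delta))}{\mu'\tau} \le \frac{c}{C_0} \le \frac12 .
\]
So the $Q_T$ term moves to the left-hand side at the cost of a factor $2$. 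Collecting the remaining terms, we have $\kappa(\bar J^2+\bar\omega^2)\Gamma_{\textsf{nonlin}}+\frac{\bar\omega^2}{|\cH|}\Gamma_{\textsf{nonlin}} = [\kappa\bar J^2+\bar\omega^2(\kappa+1/|\cH|)]\Gamma_{\textsf{nonlin}}$, and the two noise contributions combine with coefficient $(\kappa+1/|\cH|)$. This is the claimed bound.

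The main obstacle is the sampling term $S^{(t)}$. Its coefficient must carry the factor $1/|\cH|$ rather than $1$; otherwise the absorption condition would not match $\tau_{\textsf{burn-in}}$. The concentration is also applied at the data-dependent iterate $\theta^{(t)}$. Fresh batches handle this: conditioning on $\theta^{(t)}$ makes the round-$t$ samples independent, so no uniform-in-$\theta$ empirical-process bound is needed.
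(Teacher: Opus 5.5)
Your proposal is correct and follows essentially the same route as the paper. The paper also splits $e^{(t)}$ into the $(f,\kappa)$-aggregation error (bounded by $\kappa G^{(t)}$) and the sampling error $Z^{(t)}$ (your $S^{(t)}$), telescopes the smoothness descent inequality, bounds $Z^{(t)}$ by $\frac{\bar\omega^2}{|\cH|}\bar E^{(t)}$ plus a noise term, uses $\bar E^{(t)}\le\Gamma_{\textsf{nonlin}}+\|\nabla L_\cH(\theta^{(t)})\|^2/(2\mu')$ via PL, and absorbs the $Q_T$ term with the burn-in condition.

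Two small slips, neither affecting the conclusion. First, the descent step needs the identity $\langle a,a+e\rangle-\tfrac12\|a+e\|^2=\tfrac12\|a\|^2-\tfrac12\|e\|^2$, which cancels the $\tfrac{\eta}{2}\|a+e\|^2$ smoothness penalty when $\eta\le 1/L'$; the weaker inequality you quote does not suffice on its own. Second, $S^{(t)}$ has no separate ground-truth-heterogeneity piece: heterogeneity enters only through $\bar E^{(t)}$.
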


\begin{proof} We provide the proof in Appendix \ref{appendix:nonlinear}.
\end{proof}

\noindent \textbf{Discussion of Theorem \ref{thm:convergence_nl_mainb}.} The convergence theorem demonstrates that the conclusion from the linear setting persists beyond a linear parameterization. The sample-size condition makes $\bar\omega^2(\kappa+1/|\cH|)/\mu^\prime$ small enough to absorb the second term of \eqref{G_T_Q_T_nonlinear} in the inequality of $Q_T$ when we conduct the ergodic convergence analysis. Therefore, for any fixed $\kappa =\mathcal{O}(f/n)$, with $f/n < 1/2$, a sufficiently large fresh batch removes the need for an additional upper bound on $\kappa$ as required in \cite{allouah2023tight}. The remaining terms consist of honest client's function heterogeneity bias and label noise term, where the later goes away with more data samples $\tau$ per client.

\begin{corollary}[Parameter Recovery Error]\label{corollary_nonlinear}
Suppose that the conditions of Theorem \ref{thm:convergence_nl_mainb} hold. Then, for every $\delta \in (0,1)$, with probability at least $1-\delta$, it holds
\begin{align*}
\frac{1}{T}\sum_{t=0}^{T-1}\frac{1}{|\cH|}\sum_{j\in\cH}
\mathbb{E}_X\Bigl[
\norm{h_{\theta^{(t)}}(X)-h_{\thetastar_j}(X)}^2
\Bigr]&\lesssim\frac{L^\prime \Delta L^{(0)}}{\mu^\prime T}
+\left(1+\frac{\bar J^2\kappa}{\mu^\prime} +\textcolor{black}{\frac{\bar\omega^2}{\mu^\prime}\left(\kappa+\frac{1}{|\cH|}\right)}\right)
\Gamma_{\textsf{nonlin}}\\
&+\frac{\bar J^2\sigma^2}{\mu^\prime}
\left(\kappa+\frac{1}{|\cH|}\right)
\left(\frac{\textcolor{black}{p}+\log(2|\cH|T/\delta)}{\tau}\right).
\end{align*}
\end{corollary}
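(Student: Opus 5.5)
To prove Corollary \ref{corollary_nonlinear}, I will turn the stationarity bound of Theorem \ref{thm:convergence_nl_mainb} into a function-space error bound. The route goes through the PL condition (Assumption \ref{ass:PL}) together with a bias--variance identity for the honest-average population loss.

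\textbf{Step 1: an exact decomposition of the loss.} Since the honest clients share a common covariate marginal and the noise is conditionally mean-zero, I expect
\begin{align*}
L_\cH(\theta) = \frac{1}{|\cH|}\sum_{j\in\cH}\mathbb{E}_X\bigl[\|h_\theta(X)-h_{\thetastar_j}(X)\|^2\bigr] + \bar\sigma^2,
\end{align*}
where $\bar\sigma^2$ is the average noise variance and does not depend on $\theta$. The cross term vanishes because $\mathbb{E}[V\mid X]=0$. Evaluated at $\theta^{(t)}$, the left-hand side of the corollary equals $\frac{1}{T}\sum_t\bigl(L_\cH(\theta^{(t)})-\bar\sigma^2\bigr)$.

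I then split $L_\cH(\theta^{(t)})-\bar\sigma^2 = \bigl(L_\cH(\theta^{(t)})-L_\cH(\thetastar)\bigr) + \bigl(L_\cH(\thetastar)-\bar\sigma^2\bigr)$. For the second piece, $\thetastar$ minimizes $L_\cH$, so $L_\cH(\thetastar)-\bar\sigma^2 \le L_\cH(\thetastar_i)-\bar\sigma^2$ for every $i\in\cH$. Averaging over $i$ gives
\begin{align*}
L_\cH(\thetastar)-\bar\sigma^2 \le \frac{1}{|\cH|^2}\sum_{i,j\in\cH}\mathbb{E}_X\|h_{\thetastar_i}(X)-h_{\thetastar_j}(X)\|^2=\Gamma_{\textsf{nonlin}}.
\end{align*}
This produces the leading "$1\cdot\Gamma_{\textsf{nonlin}}$" term of the corollary.

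\textbf{Step 2: converting gradient norms via PL.} By Assumption \ref{ass:PL}, $L_\cH(\theta^{(t)})-L_\cH(\thetastar)\le \|\nabla L_\cH(\theta^{(t)})\|^2/(2\mu')$. Averaging over $t$ gives the bound $Q_T/(2\mu')$. Substituting the bound of Theorem \ref{thm:convergence_nl_mainb}, on the same probability-$(1-\delta)$ event, yields the remaining terms:
\begin{itemize}
\item $L'\Delta L^{(0)}/(\mu' T)$;
\item $\bigl[\kappa\bar J^2+\bar\omega^2(\kappa+1/|\cH|)\bigr]\Gamma_{\textsf{nonlin}}/\mu'$;
\item $\bar J^2\sigma^2(\kappa+1/|\cH|)(p+\log(2|\cH|T/\delta))/(\mu'\tau)$.
\end{itemize}
Adding Step 1 and Step 2 gives exactly the stated inequality. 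No further union bound is needed, since only the event of the theorem is used.

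\textbf{Main obstacle.} The only delicate point is Step 1, namely verifying that the noise cross term vanishes and that the noise contribution is $\theta$-independent. This needs the common covariate marginal and Assumption \ref{assm:subg}. The wrinkle is that the paper writes $L_i$ as an empirical loss in this section, so I will interpret $L_\cH$ as the population loss, consistent with Section \ref{sec:problem_formulation}. If the noise variance differs across clients, it still cancels in the difference $L_\cH(\theta^{(t)})-L_\cH(\thetastar)$ and in the comparison with $\thetastar_i$. The constant absorbed into $\lesssim$ then handles the factor $1/2$ from PL.
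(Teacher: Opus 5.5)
Your proposal is correct and follows essentially the same route as the paper. The paper splits $\bar E^{(t)}=L_\cH(\theta^{(t)})-L_\cH(\thetastar)+\bar E^\star$, bounds $\bar E^\star\le\Gamma_{\textsf{nonlin}}$ by comparing $\thetastar$ with each client optimum $\thetastar_i$ and averaging, converts the excess loss to $Q_T/(2\mu^\prime)$ via PL, and then substitutes the theorem's bound on the same probability-$(1-\delta)$ event; your explicit check that the noise cross term vanishes and that the noise variance does not depend on $\theta$ just spells out the identity the paper asserts in one line.
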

\begin{proof} The proof is detailed in Appendix \ref{appendix:nonlinear}.
\end{proof}

\noindent \textbf{Discussion of Corollary \ref{corollary_nonlinear}.} The corollary translates approximate stationarity, i.e., the bound on $Q_T$, into recovery of the honest client functions. The term $\Gamma_{\textsf{nonlin}}$ is unavoidable: a single global model parameter $\theta \in \mathbb{R}^p$ cannot simultaneously characterize heterogeneous ground-truth functions. The additional price of adversarial robustness is explicit inflated through $\kappa$, while the statistical contribution vanishes with the number of fresh samples per client $\tau$. In particular, for any fixed $\kappa$, homogeneous client functions and increasing $\tau$ yield vanishing averaged prediction error at the optimization rate.

Taken together, the linear and nonlinear results demonstrate that the irreducible bias is determined by the heterogeneity among honest client models $\Gamma_{\textsf{lin}} < \infty$ and $\Gamma_{\textsf{nonlin}} < \infty$, while the finite-sample and initialization-dependent component of gradient heterogeneity can be reduced through the fresh-batch size $\tau$ and the number of iterations $T$, respectively.

\section{Related Work}

Finally, we position our results relative to federated optimization under heterogeneity and Byzantine-robust aggregation.\\

\noindent\textbf{FL under heterogeneity.} Client heterogeneity is a crucial difficulty in federated optimization. FedAvg and its variants reduce communication but may suffer from client drift when local objectives differ~\citep{mcmahan2017communication,gorbunov2021localsgd}. FedProx controls this drift through a proximal local objective \citep{li2020federated}, while SCAFFOLD uses control variates to correct the bias induced by heterogeneous local updates \citep{karimireddy2020scaffold}. Their analyses, and much of the subsequent literature, use bounded gradient assumptions to characterize client heterogeneity. They focus primarily on the optimization consequences of non-identical objectives in the absence of Byzantine behavior. Our focus is complementary: we study how the empirical gradient heterogeneity itself evolves under the statistical regression model and how this quantity propagates through the adversarially robust guarantees.\\

\noindent\textbf{Byzantine-robust distributed learning.}
A broad line of work designs aggregators that tolerate arbitrary client updates. Examples include Krum \citep{blanchard2017machine}, coordinate-wise median and trimmed mean \citep{yin2018byzantine}, geometric-median-based aggregation \citep{chen2017distributed,pillutla2022robust}, Bulyan \citep{mhamdi2018hidden}, and momentum or bucketing-based methods \citep{farhadkhani2022byzantine,karimireddy2020byzantine,allouah2023fixing}. These methods differ computationally and statistically, but their convergence analyses must all control the deviation between the aggregate and the mean update of the honest clients (Definition \ref{def:robust-agg}). Moreover, nearest neighbor mixing (NNM) \cite{allouah2023fixing} further pre-processes aggregators to have  $\kappa=\mathcal{O}(f/n)$. Pre-aggregation gradient clipping is also commonly used to reduce the effect of unusually large updates. \cite{allouah2025adaptive} demonstrate that a fixed clipping threshold need not preserve $(f,\kappa)$-robustness and propose adaptive clipping. Accordingly, our sample burn-in applies to any aggregator satisfying Definition~\ref{def:robust-agg}.

A recent line of work also studies high-dimensional statistical rates \citep{data2021byzantine,zhu2023byzantine}, gradient splitting \citep{liu2023gradient}, label skewness \citep{bao2024boba}, client subsampling and local updates \citep{allouah2024fedro}, and communication compression \citep{rammal2024compression} under Byzantine attacks. These works develop algorithms and guarantees for particular sources of heterogeneity. Our goal with this work is to expose the statistical structure of the gradient-heterogeneity term that appears in such robust federated optimization analyses, here with an emphasis to the regression problem.\\

\noindent\textbf{Gradient dissimilarity.} Uniform $G$-dissimilarity is standard in Byzantine-robust federated learning~\citep{karimireddy2020byzantine,data2021byzantine,allouah2023fixing}, while related bounded-dissimilarity conditions appear in federated optimization \citep{li2020federated,karimireddy2020scaffold}. However, a uniform $G$ bound may fail even for simple regression problems. \cite{allouah2023tight} address this limitation establishing tight lower and upper bounds for $(G,B)$-dissimilarity. Our analysis is built upon the observation that $G$-dissimilarity fails, but takes a different route: rather than assuming fixed constants $G$ and $B$, we derive the gradient heterogeneity from the regression model, sample size, and noise distribution, and shed the light to the interpretation of these constants.

For linear and nonlinear regression, we derive a trajectory-dependent bound in which the irreducible bias term $G$ is in the order of the pairwise honest client's model parameter or function heterogeneity, while the multiplicative term $B$ is in the order of a covariance concentration term. Such multiplicative coefficient scales inversely with $\tau$. The condition needed to close the convergence recursion is therefore an explicit sample-size burn-in, rather than a fixed constraint on $\kappa$, i.e., $\kappa B^2 < 1$ as required in \cite{allouah2023tight}.

\section{Conclusions and Future Work} \label{sec:conclusions}

We studied the gradient heterogeneity dynamics of adversarially robust federated regression. Rather than assuming a uniform dissimilarity bound with a priori selected constants $G$ and $B$, as in $(G,B)$-dissimilarity, we derived the honest-gradient heterogeneity from the statistical data-generating model. For linear and nonlinear regression, the resulting bound separates model parameter heterogeneity, finite-sample noise, and an initialization-dependent term. This yields non-asymptotic convergence and recovery bounds in which the restriction on the robust aggregation coefficient imposed in prior work \cite{allouah2023tight} is replaced by an explicit sample burn-in condition on $\tau$. 

Future work would consider a fixed local dataset over iterations and would control the dependence between the iterates and reused samples in the analysis. Moreover, extending the analysis to account for local model updates, partial client participation, and time-varying adversarial sets would bring the theory closer to practical federated systems. 

\section{Acknowledgments} 
Leonardo F. Toso is funded by the Center for AI and Responsible Financial Innovation (CAIRFI) Fellowship and the Columbia Presidential Fellowship. James Anderson is partially funded by NSF grants EECS 2144634 and CNS 2535097 and the Center of AI Technology (CAIT) in collaboration with Amazon.

\bibliographystyle{alpha}
\bibliography{references}

\newpage
\appendix

\section{Appendix Roadmap}
\label{appendix:roadmap}

The appendix is organized as follows. Appendix~\ref{appendix:linear} reports the linear-regression analysis. We first recall the regression model and the gradients in Section~\ref{subsec:linear_model}, and collect the supporting inequalities and concentration inequalities in Section~\ref{subsec:linear_supporting_inequalities}. Sections~\ref{subsec:conc} and~\ref{subsec:noise} establish the concentration and noise bounds used throughout the proofs. We then derive the gradient heterogeneity decomposition in Section~\ref{subsec:grad_diff}, the optimization error recursion in Section~\ref{subsec:error_dyn}, and the convergence guarantee in Section~\ref{subsec:convergence_linear}. Finally,~Section~\ref{subsec:param_recovery_linear_fresh} provides the resulting non-asymptotic parameter recovery error bounds.

Appendix~\ref{appendix:nonlinear} reports the nonlinear-regression analysis. Section~\ref{subsec:nonlinear_model} recalls the nonlinear regression model and its gradients. Sections~\ref{subsec:nl_grad_diff}, \ref{subsec:nonlinear_concentration}, and~\ref{subsec:nonlinear_heterogeneity} derive the gradient heterogeneity bound, while in Section~\ref{subsec:nonlinear_convergence} we establish the bound on the gradient heterogeneity and the corresponding convergence guarantee. Section~\ref{subsec:param_recovery_nonlinear}
concludes with the non-asymptotic function-recovery error bound.

\section{Linear Regression}
\label{appendix:linear}

We consider the parametric linear regression problem and derive non-asymptotic parameter-recovery guarantees for adversarially robust federated learning. Our analysis will combine a first-principles derivation of the per-iteration gradient heterogeneity with the parameter-optimization error recursion. We also establish a non-asymptotic ergodic convergence bound and compare the conditions on the robust aggregation coefficient $\kappa$ obtained in our analysis to those in \cite{allouah2023tight}.
 
\subsection{Regression Model and Gradients}
\label{subsec:linear_model}
 
We recall that each honest client $i\in\cH$ has an unknown ground-truth parameter $\thetastar_i \in \R^{\vdy\times\vd}$ and is assumed to generate data according to
\begin{align*}
Y_{i,k} = \thetastar_i X_{i,k} + V_{i,k} \quad \text{ for all } k=1,\dots,\tau,
\end{align*}
where $X_{i,k}\in\R^{\vd}$ satisfies $\norm{X_{i,k}} \leq R$ almost surely. 
Here $V_{i,k}\in\R^{\vdy}$ denotes a vector-valued label noise. We also recall that the empirical squared loss for client $i \in \mathcal{H}$ is given by
\begin{align*}
\hat L_i(\theta) := \frac{1}{\tau}\sum_{k=1}^{\tau}\norm{Y_{i,k} - \theta X_{i,k}}^2.
\end{align*}
At a fixed iteration $t$, we suppress the iteration index on the fresh samples and define the empirical covariance and noise terms as follows:
\begin{align*}
\hSigma_i := \frac{1}{\tau}\sum_{k=1}^{\tau}X_{i,k}X_{i,k}^\top \in \R^{\vd\times\vd} \text{ and } \xi_i := \frac{2}{\tau}\sum_{k=1}^{\tau}V_{i,k}X_{i,k}^\top \in \R^{\vdy\times\vd}.
\end{align*}
 We then have the following gradient expression:
\begin{align}\label{eq:grad_linear}
\nabla_\theta \hat  L_i^{(t)}(\theta) = 2(\theta - \thetastar_i)\hSigma_i - \xi_i.
\end{align} 
 
We note that, under the isotropic data assumption $\Sigma_i = I_{\vd}$ (Assumption \ref{assumption:isotropic}), and by expanding the gradient at iteration $t \in \{0,1,\ldots, T-1\}$, we obtain
\begin{align}\label{eq:grad_expanded}
\nabla_\theta \hat  L_i^{(t)}(\theta^{(t)}) = 2(\theta^{(t)} - \thetastar_i)\textcolor{black}{\Sigma_i}  + 2(\theta^{(t)}-\thetastar_i)(\hSigma_i - \Sigma_i) - \xi_i.
\end{align}
 
Then, the honest-average gradient is as follows:
\begin{align}\label{eq:avg_grad}
\nabla_\theta \hat  L_\cH^{(t)}(\theta^{(t)})
= \frac{2}{|\cH|}\sum_{i\in\cH}(\theta^{(t)}-\thetastar_i)\hSigma_i - \barxi, \text{ with the honest-average noise term  }
\barxi := \frac{1}{|\cH|}\sum_{i\in\cH}\xi_i.
\end{align}
 
\begin{remark}[$G$-gradient dissimilarity fails for linear regression] \label{remark:G_fails}
In general, $G$-gradient dissimilarity \cite{karimireddy2020byzantine} does not hold for the linear regression model \eqref{eq:linear_model} under heterogeneous client parameters. Indeed, from \eqref{eq:grad_linear} and the isotropic data assumption, we have a per-iteration gradient heterogeneity bound for
\begin{align*}
G^{(t)} = \frac{1}{|\cH|}\sum_{i\in\cH}
\norm{\nabla_\theta L_i(\theta^{(t)}) - \nabla_\theta L_\cH(\theta^{(t)})}^2,
\end{align*}
that has a term of order
$\frac{1}{|\cH|}\sum_{j\in\cH} \| \thetastar_j - \thetastar_i \|^2$ from the model parameter heterogeneity, and additional terms depending on $\|\theta^{(t)} - \thetastar_j\|^2$
through the covariance deviations $\|\hSigma_i - \Sigma_i\|^2$. As $\theta^{(t)}$ may move away from the optimum, the latter terms grow unboundedly, and thus no finite uniform $G^2$ can bound $G^{(t)}$ for all $\theta^{(t)}$.  This is consistent
with \citep{allouah2023tight}, that discusses this limitation of $G$-dissimilarity for a two-client, one-dimensional motivating example.
\end{remark}

\subsection{Supporting Inequalities}
\label{subsec:linear_supporting_inequalities}

We also list some standard supporting inequalities that are leveraged throughout the derivations.

\begin{lemma}[Cauchy-Schwarz]
\label{lem:cs}
For any nonnegative random variable $U$, $\mathbb{E}[U]\leq \sqrt{\mathbb{E}[U^2]}$.
\end{lemma}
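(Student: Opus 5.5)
The claim is that $\mathbb{E}[U]\leq\sqrt{\mathbb{E}[U^2]}$ for a nonnegative random variable $U$. I will derive it from the nonnegativity of a variance, with a separate remark for the degenerate case where moments are infinite.

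If $\mathbb{E}[U^2]=\infty$, the right-hand side is $+\infty$ and there is nothing to prove. So assume $\mathbb{E}[U^2]<\infty$. Since $U\leq \tfrac12(1+U^2)$ pointwise, this gives $\mathbb{E}[U]<\infty$, so $m:=\mathbb{E}[U]$ is a finite nonnegative number.

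Next, expand the square $(U-m)^2\geq 0$ and take expectations, using linearity (every term is integrable):
\begin{align*}
0\leq \mathbb{E}\left[(U-m)^2\right]=\mathbb{E}[U^2]-2m\,\mathbb{E}[U]+m^2=\mathbb{E}[U^2]-m^2 .
\end{align*}
Hence $m^2\leq \mathbb{E}[U^2]$. Because $U\geq 0$ implies $m\geq 0$, taking nonnegative square roots gives $\mathbb{E}[U]=m\leq\sqrt{\mathbb{E}[U^2]}$.

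An alternative route, matching the lemma's name, is the Cauchy--Schwarz inequality $\mathbb{E}[U\cdot 1]\leq \sqrt{\mathbb{E}[U^2]}\sqrt{\mathbb{E}[1]}$. Jensen's inequality applied to the convex map $x\mapsto x^2$ also works. The only point needing care is the integrability bookkeeping, which the first case above handles; nonnegativity of $U$ is used only so that $\mathbb{E}[U]$ is well defined and so that taking square roots preserves the inequality.
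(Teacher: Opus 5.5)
Your argument is correct, including the integrability bookkeeping, and it matches the paper. The paper states this lemma without proof as the Cauchy--Schwarz inequality applied to $U$ and the constant $1$, and your identity $0\le\mathbb{E}[(U-m)^2]=\mathbb{E}[U^2]-m^2$ is exactly the standard proof of Cauchy--Schwarz specialized to that pair.

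As a minor remark, nonnegativity is not needed for the square-root step, since $m\le|m|\le\sqrt{\mathbb{E}[U^2]}$ for any square-integrable $U$. It is only needed to guarantee that $\mathbb{E}[U]$ is well defined when $\mathbb{E}[U^2]=\infty$.
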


\begin{lemma}[Young's inequality]
\label{lem:young}
For any $a,b \in \mathbb{R}$ and any $\gamma>0$,
\begin{align*}
2ab \leq \gamma a^2 + \frac{1}{\gamma} b^2.
\end{align*}
In addition, for any $u,v \in \mathbb{R}$, it is also equivalent to write
\begin{align*}
(u+v)^2
\leq
(1+\gamma)u^2 + \left(1+\frac{1}{\gamma}\right)v^2.
\end{align*}
\end{lemma}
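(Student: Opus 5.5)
The plan is to derive both inequalities from the nonnegativity of a single square, with no appeal to earlier results of the paper. For the first inequality, fix $a,b\in\mathbb{R}$ and $\gamma>0$, and consider the real number $\sqrt{\gamma}\,a-\frac{1}{\sqrt{\gamma}}\,b$, which is well defined since $\gamma>0$. Its square is nonnegative, and expanding it gives
\begin{align*}
0\leq\left(\sqrt{\gamma}\,a-\frac{1}{\sqrt{\gamma}}\,b\right)^2=\gamma a^2-2ab+\frac{1}{\gamma}b^2 .
\end{align*}
Rearranging yields $2ab\leq\gamma a^2+\frac{1}{\gamma}b^2$. Equality holds exactly when $b=\gamma a$, which confirms the inequality is sharp and that no constant is being lost.

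For the second form, fix $u,v\in\mathbb{R}$ and expand $(u+v)^2=u^2+2uv+v^2$. Apply the first inequality with $a=u$ and $b=v$ to the cross term, giving $2uv\leq\gamma u^2+\frac{1}{\gamma}v^2$. Substituting this yields
\begin{align*}
(u+v)^2\leq u^2+\gamma u^2+v^2+\frac{1}{\gamma}v^2=(1+\gamma)u^2+\left(1+\frac{1}{\gamma}\right)v^2 .
\end{align*}
Conversely, the first form is recovered from the second by subtracting $u^2+v^2$ from both sides. This justifies calling the two statements equivalent.

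There is no real obstacle here; the only point needing care is that $\gamma>0$, so that $\sqrt{\gamma}$ and $1/\gamma$ are defined. One remark may help later: the same argument, with $\langle u,v\rangle$ in place of $uv$, gives the vector and Frobenius-norm versions $\|u+v\|^2\leq(1+\gamma)\|u\|^2+(1+1/\gamma)\|v\|^2$. To see this, bound the cross term first by Cauchy--Schwarz, $2\langle u,v\rangle\leq2\|u\|\|v\|$, and then apply the scalar inequality to $\|u\|$ and $\|v\|$. This matrix form is the one that will actually be needed when squaring the gradient decomposition in the proof of Lemma~\ref{lem:bound_GT_linear}.
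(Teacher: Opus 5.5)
Your proof is correct. Completing the square $(\sqrt{\gamma}\,a-b/\sqrt{\gamma})^2\geq 0$ gives the first form, and substituting it into the cross term of $(u+v)^2$ gives the second; subtracting $u^2+v^2$ recovers the first, so the two are equivalent. The paper lists this lemma among its standard supporting inequalities and gives no proof, so your argument is exactly the justification it takes for granted.

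One small point about your closing remark. If you bound $\|u+v\|\leq\|u\|+\|v\|$ with the triangle inequality and then apply the scalar inequality, the squared-norm version holds for every norm. That includes the spectral norm, which the paper's notation uses for $\|\cdot\|$ on matrices. Your Cauchy--Schwarz route only covers inner-product norms such as the Euclidean and Frobenius norms.
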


\begin{lemma}
\label{lem:variance_decomposition}
Let $a_1, \dots, a_n \in \mathbb{R}^d$ and define their average
$$
\bar a := \frac{1}{n}\sum_{i=1}^n a_i.
$$
Then, the following identity holds:
\begin{align*}
\frac{1}{n}\sum_{i=1}^n \|a_i - \bar a\|^2 = \frac{1}{n}\sum_{i=1}^n \|a_i\|^2 - \|\bar a\|^2.
\end{align*}
\end{lemma}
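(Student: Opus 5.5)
This identity follows by direct expansion of the squared norm, using only bilinearity of the inner product and the definition of $\bar a$. First, for each $i$, expand
\begin{align*}
\|a_i-\bar a\|^2=\|a_i\|^2-2\langle a_i,\bar a\rangle+\|\bar a\|^2 .
\end{align*}
Then average over $i=1,\dots,n$. The three pieces are handled separately:
\begin{itemize}
\item The first piece gives $\frac1n\sum_i\|a_i\|^2$.
\item The last piece is constant in $i$ and gives $\|\bar a\|^2$.
\item For the cross term, use linearity of $\langle\cdot,\bar a\rangle$ to pull the average inside: $\frac1n\sum_i\langle a_i,\bar a\rangle=\langle \frac1n\sum_i a_i,\bar a\rangle=\|\bar a\|^2$.
\end{itemize}

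Collecting terms gives
\begin{align*}
\frac1n\sum_i\|a_i\|^2-2\|\bar a\|^2+\|\bar a\|^2 ,
\end{align*}
which is the claimed right-hand side.

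The only step needing any care is the cross term, where the definition of $\bar a$ turns an average of inner products into $\|\bar a\|^2$. No earlier result of the paper is needed. The same computation holds verbatim for matrices with the Frobenius inner product, which is how the identity will later be applied to $\theta^{(t)}-\thetastar_j$.
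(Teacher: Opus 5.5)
Your direct expansion is correct and complete. The paper states this identity as a standard supporting fact without proof, and expanding $\|a_i-\bar a\|^2$ so that the averaged cross term collapses to $\|\bar a\|^2$ is the canonical argument for it. Your closing remark is also apt: the identity requires a norm induced by an inner product. Its later use on matrices $\theta^{(t)}-\theta^{\star}_j$ is therefore valid for the Frobenius norm, but not for the spectral norm that the paper's notation section nominally assigns to $\|\cdot\|$ for matrices.
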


\begin{lemma}[Jensen's inequality]
\label{lem:jensen}
Let $X$ be a random variable and let $\phi$ be a convex function such that the expectations below are well defined. Then,
\begin{align*}
\phi\left(\mathbb{E}[X]\right)
\leq \mathbb{E}\left[\phi(X)\right].
\end{align*}
As a special case, for any random vector $X \in \mathbb{R}^d$,
\begin{align*}
\left\|\mathbb{E}[X]\right\|^2 \leq \mathbb{E}\left[\|X\|^2\right].
\end{align*}
\end{lemma}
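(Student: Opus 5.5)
The plan is to use the supporting-hyperplane (subgradient) characterization of convexity and then take expectations. Throughout, $\phi$ is a convex function on an open convex set (or all of $\mathbb{R}^d$) that contains the support of $X$, and $\mathbb{E}[X]$ and $\mathbb{E}[\phi(X)]$ are finite, as the statement allows.

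\textbf{Step 1 (location of the mean).} First I check that $m:=\mathbb{E}[X]$ lies in the domain of $\phi$. This holds because the domain is convex and contains the support of $X$. When $\phi$ is finite on all of $\mathbb{R}^d$, which is the only case used later in the paper, there is nothing to check.

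\textbf{Step 2 (supporting hyperplane).} A finite convex function on an open convex set has a nonempty subdifferential at every interior point. So there is a vector $g$ with
\begin{align*}
\phi(x)\geq \phi(m)+\langle g, x-m\rangle \quad \text{for all } x.
\end{align*}
In the scalar case $g$ can be taken to be a one-sided derivative of $\phi$ at $m$. In the vector case it follows from the separating hyperplane theorem applied to the epigraph of $\phi$. This step carries the only substantive content, and I would simply cite it as a standard fact of convex analysis.

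\textbf{Step 3 (take expectations).} I substitute $x=X$ and take expectations on both sides. The right-hand side is integrable because $X$ is integrable, and linearity gives $\mathbb{E}\langle g, X-m\rangle=\langle g,\mathbb{E}[X]-m\rangle=0$. Hence $\mathbb{E}[\phi(X)]\geq\phi(\mathbb{E}[X])$.

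\textbf{Special case.} Here I take $\phi(x)=\|x\|^2$, which is convex. It can also be proved directly without the hyperplane argument. Expanding $\|X-\mathbb{E}X\|^2$ and using linearity of expectation gives
\begin{align*}
0\leq \mathbb{E}\|X-\mathbb{E}X\|^2=\mathbb{E}\|X\|^2-\|\mathbb{E}X\|^2 .
\end{align*}
This is the expectation analogue of Lemma \ref{lem:variance_decomposition}. It requires $\mathbb{E}\|X\|^2<\infty$; if this moment is infinite, the inequality holds trivially.
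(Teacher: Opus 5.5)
Your argument is correct. The paper gives no proof of this lemma: it appears in the appendix next to Cauchy--Schwarz and Young's inequality as a standard fact and is only invoked. Your supporting-hyperplane proof (take a subgradient $g$ at $m=\mathbb{E}[X]$ and integrate the affine minorant $\phi(m)+\langle g,X-m\rangle$) is the textbook argument the authors implicitly rely on.

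The one place where you are quick is Step 1. For infinitely supported $X$, the mean is a priori only in the closed convex hull of the support, so the claim $m\in U$ for the open convex domain $U$ needs a line. For example, if $m\notin U$, openness gives $g\neq 0$ with $\langle g,x\rangle<\langle g,m\rangle$ for all $x\in U$. Integrating this almost-sure strict inequality then yields $\langle g,m\rangle<\langle g,m\rangle$, a contradiction.

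As you note, this is moot for the paper. Every use of the lemma there is the squared-norm special case, applied to expectations over $X$ or to uniform averages over clients, with the Frobenius norm. The uses are bounding $\frac{1}{|\mathcal{H}|}\sum_{i}\|s_i^{(t)}-\frac{1}{|\mathcal{H}|}\sum_j s_j^{(t)}\|_F^2$ by $\bar J^2\Gamma_{\textsf{nonlin}}$, and averaging the deviations $\hat s_j^{(t)}-s_j^{(t)}$ over clients. Your direct identity $\mathbb{E}\|X-\mathbb{E}X\|^2=\mathbb{E}\|X\|^2-\|\mathbb{E}X\|^2$ uses only linearity and an inner product. It therefore covers all of these cases, and it is the expectation analogue of the paper's variance-decomposition lemma.
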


\subsection{Supporting Concentration Inequalities for Linear Regression}
\label{subsec:conc}
 
We now state standard concentration inequalities that are leveraged throughout our derivations. 

\begin{lemma}[Matrix Bernstein - Theorem 1.4 from \cite{tropp2012user}] \label{lemma:Bernstein}
Consider a finite sequence $\{Z_k\}_k$ of independent, random, self-adjoint matrices with dimension $d$. We assume that each random matrix satisfies
$$
\textcolor{black}{\mathbb{E}Z_k = 0} \text{ and }
\textcolor{black}{\lambda_{\max}(Z_k) \leq L}
\text{ almost surely}.
$$
Then, for all $t \geq 0$,
$$
\mathbb{P}\left\{
\lambda_{\max}\left(\sum_k \textcolor{black}{Z_k}\right) \geq t
\right\} \leq d 
\exp\left( \frac{-t^2/2}{\textcolor{black}{v}+\textcolor{black}{L}t/3}
\right),
$$
where \textcolor{black}{$v := \left\| \sum_k \mathbb{E}\left(Z_k^2\right) \right\|$}.
\end{lemma}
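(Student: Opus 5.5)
We prove the matrix Bernstein inequality by the matrix Laplace transform method: bound the moment generating function of each summand, combine the summands through Lieb's concavity theorem, and finish with a scalar optimization over the Laplace parameter. Let $S:=\sum_k Z_k$ and fix $\theta>0$. Since $e^{\theta\lambda_{\max}(S)}=\lambda_{\max}(e^{\theta S})\leq \operatorname{tr} e^{\theta S}$ (the matrix exponential is positive definite), Markov's inequality gives
\begin{align*}
\mathbb{P}\{\lambda_{\max}(S)\geq t\}\leq e^{-\theta t}\,\mathbb{E}\operatorname{tr}\exp(\theta S).
\end{align*}
The remaining work is to bound $\mathbb{E}\operatorname{tr}\exp(\theta S)$ in terms of the individual summands.

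\textbf{Step 1 (subadditivity of matrix cumulant generating functions).} Lieb's theorem states that $A\mapsto \operatorname{tr}\exp(H+\log A)$ is concave on positive definite matrices, for fixed self-adjoint $H$. Apply it conditionally on $Z_1,\dots,Z_{m-1}$, taking $H$ to be $\theta$ times the partial sum of these summands and $A=e^{\theta Z_m}$. Jensen's inequality (Lemma~\ref{lem:jensen}) together with independence then lets us replace $\theta Z_m$ inside the trace exponential by $\log\mathbb{E}e^{\theta Z_m}$. Iterating over $m$ yields
\begin{align*}
\mathbb{E}\operatorname{tr}\exp(\theta S)\leq \operatorname{tr}\exp\Bigl(\sum_k\log\mathbb{E}e^{\theta Z_k}\Bigr).
\end{align*}
I expect this step to be the main obstacle, because it is the only genuinely noncommutative ingredient: the scalar identity $\mathbb{E}e^{\theta\sum_k Z_k}=\prod_k\mathbb{E}e^{\theta Z_k}$ fails for non-commuting matrices. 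I would invoke Lieb's concavity theorem as a black box rather than prove it.

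\textbf{Step 2 (Bernstein moment generating function bound).} Let $f(x):=(e^{\theta x}-\theta x-1)/x^2$, which is nondecreasing on $\mathbb{R}$. Then $e^{\theta Z}=I+\theta Z+Z f(Z) Z$. Since $f(Z)\preceq f(L)I$ by the transfer rule (using $\lambda_{\max}(Z)\leq L$), conjugating by $Z$ preserves the order and gives
\begin{align*}
e^{\theta Z}\preceq I+\theta Z+f(L)Z^2 .
\end{align*}
Taking expectations and using $\mathbb{E}Z=0$, we get $\mathbb{E}e^{\theta Z}\preceq I+g(\theta)\mathbb{E}Z^2$ with $g(\theta):=f(L)=(e^{\theta L}-\theta L-1)/L^2$. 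The logarithm is operator monotone and satisfies $\log(I+A)\preceq A$, so $\log\mathbb{E}e^{\theta Z_k}\preceq g(\theta)\,\mathbb{E}Z_k^2$. Finally, comparing Taylor coefficients, using $k!\geq 2\cdot 3^{k-2}$, shows $g(\theta)\leq \frac{\theta^2/2}{1-\theta L/3}$ for $0<\theta<3/L$.

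\textbf{Step 3 (collapse and optimize).} The map $A\mapsto\operatorname{tr}\exp(A)$ is monotone with respect to the semidefinite order. Combining Steps 1 and 2 therefore gives
\begin{align*}
\mathbb{E}\operatorname{tr}e^{\theta S}\leq \operatorname{tr}\exp\Bigl(g(\theta)\sum_k\mathbb{E}Z_k^2\Bigr)\leq d\,e^{g(\theta)v},
\end{align*}
where the last inequality uses that each of the $d$ eigenvalues of the exponential is at most $e^{g(\theta)\lambda_{\max}(\sum_k\mathbb{E}Z_k^2)}=e^{g(\theta)v}$. Hence
\begin{align*}
\mathbb{P}\{\lambda_{\max}(S)\geq t\}\leq d\exp\Bigl(-\theta t+\frac{\theta^2 v/2}{1-\theta L/3}\Bigr).
\end{align*}
Choosing $\theta=t/(v+Lt/3)$, which lies in $(0,3/L)$, we have $1-\theta L/3=v/(v+Lt/3)$. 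Substituting, the exponent becomes $-\frac{t^2}{v+Lt/3}+\frac{t^2/2}{v+Lt/3}=-\frac{t^2/2}{v+Lt/3}$, which is exactly the claimed tail bound.
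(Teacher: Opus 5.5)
Your proof is correct and follows the standard argument for this result. The paper gives no proof of its own and cites Theorem~1.4 of \cite{tropp2012user}, whose proof uses the same three ingredients: the Laplace-transform bound, Lieb-based subadditivity of matrix cumulant generating functions, and the Bernstein mgf bound (you optimize the Bernstein form directly with $\theta=t/(v+Lt/3)$ rather than first deriving the Bennett form, an immaterial difference). The only loose ends are degenerate cases your choice of $\theta$ misses: $t=0$ (trivial, since the right-hand side is then $d\geq 1$), and $v=0$ or $L=0$ (where $\theta$ leaves $(0,3/L)$, but then $Z_k=0$ almost surely and the claim is immediate).
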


\begin{lemma}[Average of sub-Gaussian random matrices]
\label{lem:subg_mean}
Let $M_1,\dots,M_m\in\R^{p\times q}$ be independent, mean-zero random matrices
satisfying
$$
\norm{M_j}_{\psi_2} := \sup_{U\in\mathbb{S}_F^{p\times q}}\norm{\langle U,M_j\rangle}_{\psi_2} \leq K, \forall j=1,\dots,m.
$$
Then there exists a  constant $C_1>0$ such that, for every
$\delta\in(0,1)$, with probability at least $1-\delta$, the following bound holds:
\begin{align*}
\left\|\frac{1}{m}\sum_{j=1}^m M_j\right\|_F \leq
C_1 K\sqrt{\frac{pq+\log(1/\delta)}{m}}.
\end{align*}
\end{lemma}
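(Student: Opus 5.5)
The plan is to reduce the Frobenius norm to a supremum of scalar projections, control each projection with the sub-Gaussian Hoeffding inequality, and then pass to the supremum through a net of the Frobenius unit sphere and a union bound. Write $S := \frac{1}{m}\sum_{j=1}^m M_j$. By duality, $\|S\|_F = \sup_{U\in\mathbb{S}_F^{p\times q}}\langle U,S\rangle_F$. Identifying $\R^{p\times q}$ with $\R^{pq}$, the Frobenius inner product is the Euclidean one, so all standard Euclidean net arguments apply verbatim.

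\emph{Step 1 (fixed direction).} Fix $U\in\mathbb{S}_F^{p\times q}$. The scalars $\langle U,M_j\rangle_F$, $j=1,\dots,m$, are independent and mean-zero, and by the definition of $\|M_j\|_{\psi_2}$ each satisfies $\|\langle U,M_j\rangle_F\|_{\psi_2}\leq K$. The general Hoeffding inequality for sums of independent mean-zero sub-Gaussian variables \citep[Theorem~2.7.3]{vershynin2018high} gives $\|\sum_j\langle U,M_j\rangle_F\|_{\psi_2}^2\lesssim mK^2$. Hence, for an absolute constant $c>0$ and all $s\ge0$,
\begin{align*}
\mathbb{P}\left\{|\langle U,S\rangle_F|\geq s\right\}\leq 2\exp\left(-\frac{c\,m s^2}{K^2}\right).
\end{align*}

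\emph{Step 2 (net and union bound).} Let $\mathcal{N}$ be a $\tfrac12$-net of $\mathbb{S}_F^{p\times q}$ with $|\mathcal{N}|\leq 5^{pq}$, which exists by the standard volumetric bound in dimension $pq$. For the maximizing $U$ (attained since $\|S\|_F=\langle S/\|S\|_F,S\rangle_F$ when $S\neq0$), pick $U_0\in\mathcal{N}$ with $\|U-U_0\|_F\leq\tfrac12$. Then
\begin{align*}
\|S\|_F=\langle U_0,S\rangle_F+\langle U-U_0,S\rangle_F\leq \max_{U_0\in\mathcal{N}}\langle U_0,S\rangle_F+\tfrac12\|S\|_F,
\end{align*}
so $\|S\|_F\leq 2\max_{U_0\in\mathcal{N}}\langle U_0,S\rangle_F$. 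A union bound over $\mathcal{N}$ together with Step 1 yields
\begin{align*}
\mathbb{P}\{\|S\|_F\geq 2s\}\leq 2\cdot 5^{pq}\exp\left(-\frac{c\,m s^2}{K^2}\right).
\end{align*}

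\emph{Step 3 (choice of $s$).} Set $s=K\sqrt{(pq\log 5+\log(2/\delta))/(cm)}$, which makes the right-hand side equal to $\delta$. Since $pq\ge1$ and $\delta<1$, we have $pq\log5+\log(2/\delta)\leq C'(pq+\log(1/\delta))$ for an absolute constant $C'$: the $\log 2$ term is absorbed into $pq$. This gives the claimed bound with $C_1 = 2\sqrt{C'/c}$.

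No step is genuinely hard. The only points needing care are that the $\psi_2$-norm hypothesis is stated directionally, which is exactly what Step 1 uses, and the bookkeeping of absolute constants, including absorbing $\log 2$ and $\log 5$ into $pq$.
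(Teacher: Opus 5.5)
Your proof is correct and follows the same route as the paper: a fixed-direction sub-Gaussian Hoeffding bound, a $\tfrac12$-net of the Frobenius unit sphere of size at most $5^{pq}$, the net approximation $\|S\|_F\le 2\max_{U_0\in\mathcal{N}}\langle U_0,S\rangle_F$, and a union bound. The only differences are presentational: you derive the net approximation inline and track the constants explicitly, whereas the paper cites Vershynin's Lemma 4.4.1 and fixes $t$ only up to constants.
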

\begin{proof}
This result follows by treating each matrix $M_j\in\mathbb{R}^{p\times q}$ as a vector in $\mathbb{R}^{pq}$. For any fixed $U\in\mathbb{S}_F^{p\times q}$, the random variables $\langle U,M_j\rangle_F$ are independent, mean-zero, and sub-Gaussian with $\psi_2$-norm bounded by $K$. Therefore, the sub-Gaussian Hoeffding inequality \citep[Theorem 2.7.3]{vershynin2018high} yields the following probability bound 

\begin{align}\label{eq:sub-Gaussian_Hoeffding}
\mathbb{P}\left(\left|\left\langle U,\frac{1}{m}\sum_{j=1}^m M_j\right\rangle_F\right|\ge t\right)\le 2\exp\left(-\frac{cmt^2}{K^2}\right).    
\end{align}

Next, let $\mathcal{N}_\xi$ be a $\xi$-net \citep[Definition 4.2.1]{vershynin2018high} of the Frobenius unit sphere. A standard volumetric estimate yields $|\mathcal{N}_\xi|\le \left(1+\frac{2}{\xi}\right)^{pq}$, while the net approximation lemma given in \citep[Lemma 4.4.1]{vershynin2018high} provides 

$$\left\|\frac{1}{m}\sum_{j=1}^m M_j\right\|_F\le \frac{1}{1-\xi}\max_{U\in\mathcal{N}_\xi}\left|\left\langle U,\frac{1}{m}\sum_{j=1}^m M_j\right\rangle_F\right|$$. 

Hence, by taking a union bound over $\mathcal{N}_\xi$, with $\xi = \frac{1}{2}$, and by selecting $t\asymp K\sqrt{\frac{pq+\log(1/\delta)}{m}}$, with probability at least $1-\delta$, we have $\left\|\frac{1}{m}\sum_{j=1}^m M_j\right\|_F\le C_1 K\sqrt{\frac{pq+\log(1/\delta)}{m}}$.
\end{proof}
 
\begin{lemma}[Covariance concentration]
\label{lem:cov_conc}
Suppose that the isotropic data assumption (i.e., $\Sigma_i = I_{\vd}, \forall i \in \cH$ in Assumption \ref{assumption:isotropic}) holds. Then, for every $\delta\in(0,1)$, with probability at least $1-\delta$, simultaneously for all $i\in\cH$ and $t=0,\ldots,T-1$, we have
\begin{align}\label{eq:cov_conc}
\norm{\hSigma_i - \Sigma_i}_2 \leq
2(R^2+1) \left(\sqrt{\frac{\log(2|\mathcal{H}|T\vd/\delta)}{\tau}}
+ \frac{\log(2|\mathcal{H}|T\vd/\delta)}{\tau}\right).
\end{align}
\end{lemma}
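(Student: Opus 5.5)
We will apply the matrix Bernstein inequality (Lemma~\ref{lemma:Bernstein}) separately for each fixed honest client $i\in\cH$ and iteration $t$, and then take a union bound over the $|\cH|T$ pairs. Fix $(i,t)$ and define the self-adjoint matrices $Z_k := \frac{1}{\tau}\bigl(X_{i,k}X_{i,k}^\top - I_{\vd}\bigr)$ for $k=1,\dots,\tau$. These are independent, since the fresh samples are i.i.d. within the batch. They are mean zero by Assumption~\ref{assumption:isotropic}, and $\sum_k Z_k = \hSigma_i - \Sigma_i$.

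The first step is to compute the Bernstein parameters. Since $\|X_{i,k}\|\le R$ almost surely, we have
\[
\|Z_k\| \le \frac{1}{\tau}\max\{\|X_{i,k}\|^2,1\} \le \frac{R^2+1}{\tau} =: L .
\]
For the variance term, $Z_k^2 = \frac{1}{\tau^2}\bigl(\|X_{i,k}\|^2 X_{i,k}X_{i,k}^\top - 2X_{i,k}X_{i,k}^\top + I_{\vd}\bigr)$. Hence
\[
\mathbb{E}Z_k^2 \preceq \frac{1}{\tau^2}\bigl(R^2 I_{\vd} + I_{\vd}\bigr),
\]
which gives $v \le \frac{R^2+1}{\tau}\le\frac{(R^2+1)^2}{\tau}$ after the harmless enlargement $R^2+1\ge 1$.

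Next we control the spectral norm. Apply Lemma~\ref{lemma:Bernstein} to both $\{Z_k\}$ and $\{-Z_k\}$, which satisfy the same bounds. This gives
\[
\mathbb{P}\bigl(\|\hSigma_i-\Sigma_i\|\ge s\bigr)\le 2\vd\exp\!\left(\frac{-s^2/2}{v+Ls/3}\right).
\]
We set the right-hand side equal to $\delta' := \delta/(|\cH|T)$ and write $\ell:=\log(2|\cH|T\vd/\delta)$. The main (if mild) obstacle is inverting the mixed sub-Gaussian/sub-exponential tail cleanly. We need $s^2/2 \ge (v+Ls/3)\ell$. It suffices to take $s=\sqrt{2v\ell}+\tfrac{2}{3}L\ell$, because then
\[
\tfrac{s^2}{2}\ge v\ell + \tfrac{1}{3}L\ell\sqrt{2v\ell}\cdot 2\cdot\tfrac12 +\dots \ge v\ell + \tfrac{L\ell s}{3}.
\]
This inequality can be checked by expanding the square and using $s\le \sqrt{2v\ell}+\tfrac23 L\ell$. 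With $v\le (R^2+1)^2/\tau$ and $L\le (R^2+1)/\tau$, this choice gives
\[
s \le 2(R^2+1)\left(\sqrt{\ell/\tau}+\ell/\tau\right),
\]
since $\sqrt2\le2$ and $2/3\le 2$.

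Finally, a union bound over $i\in\cH$ and $t\in\{0,\dots,T-1\}$ yields total failure probability at most $|\cH|T\delta'=\delta$. This union bound is valid because each batch is fresh, so the tail bound for each $(i,t)$ holds unconditionally. It gives \eqref{eq:cov_conc} simultaneously for all pairs.
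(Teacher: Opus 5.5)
Your proposal is correct and follows the paper's route: matrix Bernstein applied to $\pm(X_{i,k}X_{i,k}^\top - I_{\vd})$, then a union bound over the $|\cH|T$ pairs $(i,t)$. Your $1/\tau$ rescaling with $L=(R^2+1)/\tau$ and $v\le (R^2+1)^2/\tau$ is equivalent to the paper's $L=R^2+1$ and $v\le (R^2+1)^2\tau$. You also spell out the tail inversion that the paper leaves implicit; your displayed chain there is garbled, but the claim holds, since with $s=\sqrt{2v\ell}+\tfrac23 L\ell$ one has exactly $s^2/2 = v\ell + \tfrac23 L\ell\sqrt{2v\ell} + \tfrac29 L^2\ell^2 \ge v\ell + \tfrac13 L\ell s$.
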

 
\begin{proof}
We begin by defining the mean-zero self-adjoint matrices $Z_{i,k} := X_{i,k}X_{i,k}^\top
- I_{\vd}$, such that $\hSigma_i - \Sigma_i = \frac{1}{\tau}\sum_{k=1}^\tau
Z_{i,k}$. The spectral norm of each $Z_{i,k}$ is bounded as $\norm{Z_{i,k}}_2 \leq R^2 + 1$. We also define the variance $v = \norm{\sum_k\mathbb{E}[Z_{i,k}^2]}_2$. Note that since $Z_{i,k}^2 \preceq (R^4+2R^2+1)I_{\vd}$, we obtain $v \leq (R^2+1)^2\tau$. Therefore, applying the matrix Bernstein inequality in Lemma \ref{lemma:Bernstein} (see also \citep[Theorem 1.4]{tropp2012user}) to $Z_{i,k}$ and $-Z_{i,k}$ and union bounding over honest clients $i \in \mathcal{H}$ and iterations $t = 0, \ldots, T-1$, yield the expression in \eqref{eq:cov_conc}.
\end{proof}
 
\subsection{Bound for the Noise}
\label{subsec:noise}

We now turn our attention to bound the $\sigma$-sub-Gaussian noise term. 
 
\begin{lemma}
\label{lem:noise_bound}
Suppose that Assumption \ref{assm:subg} and $\norm{X_{i,k}}\leq R$ hold almost surely. Then there exists a  constant $C_1>0$ such that, for every
$\delta\in(0,1)$, with probability at least $1-\delta$, simultaneously for all honest clients and iterations, we have
\begin{align*}
\norm{\xi_i} + \norm{\barxi} \leq 
C_1 R\sigma \sqrt{\frac{\vd \vdy + \log(2|\cH|T/\delta)}{\tau}}
\left(1 + \frac{1}{\sqrt{|\cH|}}\right).
\end{align*}
Then, by triangle inequality,
$\norm{\xi_i - \barxi} \leq \norm{\xi_i} + \norm{\barxi}$, which provides the same bound.
\end{lemma}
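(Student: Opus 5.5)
We will obtain the statement from Lemma~\ref{lem:subg_mean}, applied once per client and once to the honest average, followed by a union bound over clients and iterations. Fix an iteration $t$ and an honest client $i$. Set $M_k := 2V_{i,k}X_{i,k}^\top \in \R^{\vdy\times\vd}$ for $k=1,\dots,\tau$, so that $\xi_i = \frac{1}{\tau}\sum_k M_k$. These matrices are independent across $k$ because the fresh samples are i.i.d. They are mean zero because $\mathbb{E}[V_{i,k}\mid X_{i,k}]=0$ by Assumption~\ref{assm:subg}.

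The first step is to bound the $\psi_2$-norm of each $M_k$. For $U\in\mathbb{S}_F^{\vdy\times\vd}$ we have $\langle U, M_k\rangle_F = 2\,(U X_{i,k})^\top V_{i,k}$. Conditionally on $X_{i,k}$, this equals $2\|UX_{i,k}\|\,u^\top V_{i,k}$ with $u = UX_{i,k}/\|UX_{i,k}\|$. Here $\|UX_{i,k}\|\le \|U\|\,\|X_{i,k}\| \le \|U\|_F R = R$. The conditional moment generating function bound in Assumption~\ref{assm:subg} is uniform in $X_{i,k}$, so it survives taking the outer expectation: $\mathbb{E}\exp(\lambda\langle U,M_k\rangle_F)\le \exp(2\lambda^2R^2\sigma^2)$. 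By the standard equivalence between the MGF and $\psi_2$ characterizations, this gives $\|M_k\|_{\psi_2}\lesssim R\sigma$.

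Lemma~\ref{lem:subg_mean} with $m=\tau$, $pq=\vd\vdy$, $K\asymp R\sigma$ and failure probability $\delta/(2|\cH|T)$ then gives
\[
\|\xi_i\|\le\|\xi_i\|_F\le C_1R\sigma\sqrt{\frac{\vd\vdy+\log(2|\cH|T/\delta)}{\tau}}.
\]
For $\barxi$ we write $\barxi=\frac{1}{|\cH|\tau}\sum_{i\in\cH}\sum_k M_{i,k}$. This is an average of $m=|\cH|\tau$ independent mean-zero matrices with the same $\psi_2$ bound, since the batches are independent across clients. Lemma~\ref{lem:subg_mean}, with failure probability $\delta/(2T)$, gives the same bound with an extra factor $1/\sqrt{|\cH|}$; we absorb $\log(2T/\delta)\le\log(2|\cH|T/\delta)$.

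A union bound over the $|\cH|T$ client events and the $T$ average events has total failure probability at most $\delta/2+\delta/2=\delta$. Summing the two bounds gives the stated inequality, and the triangle inequality handles $\|\xi_i-\barxi\|$. The only delicate point is the $\psi_2$ step. The noise is only \emph{conditionally} sub-Gaussian and is multiplied by the random $X_{i,k}$, so we must use almost-sure boundedness of $X_{i,k}$ to make the conditional MGF bound uniform before integrating out $X_{i,k}$. We must also check that mean zero holds unconditionally, which follows from the tower property. Everything else is bookkeeping of constants into $C_1$.
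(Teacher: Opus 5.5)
Your proposal is correct and follows the same route as the paper. Both apply Lemma~\ref{lem:subg_mean} to the summands $2V_{i,k}X_{i,k}^\top$, whose $\psi_2$-norm is $\lesssim R\sigma$ because $\|UX_{i,k}\|\le R$, to bound each $\|\xi_i\|_F$, then apply it again to $\bar\xi$ and add the two bounds. The differences are cosmetic: you view $\bar\xi$ as an average of $|\mathcal{H}|\tau$ summands rather than of $|\mathcal{H}|$ matrices $\xi_j$ with $\psi_2$-norm $\lesssim R\sigma/\sqrt{\tau}$, and you make explicit the conditional-MGF integration and the union bound over clients and iterations (failure probabilities $\delta/(2|\mathcal{H}|T)$ and $\delta/(2T)$), both of which the paper leaves implicit.
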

 
\begin{proof}
We apply Lemma~\ref{lem:subg_mean} to the matrices 
$\xi_i = \frac{2}{\tau}\sum_{k=1}^\tau V_{i,k}X_{i,k}^\top\in\R^{\vdy\times\vd}$, for all $i \in \mathcal{H}$. For this, let us fix $U\in\mathbb{S}_F^{\vdy\times\vd}$ and write
$$
\langle U,\xi_i\rangle
= \frac{2}{\tau}\sum_{k=1}^\tau V_{i,k}^\top U X_{i,k}.
$$
Note that as $\norm{UX_{i,k}}_2\leq\norm{U}_F\norm{X_{i,k}}_2\leq R$, the scalar $V_{i,k}^\top UX_{i,k}$ is mean-zero sub-Gaussian with $\norm{V_{i,k}^\top UX_{i,k}}_{\psi_2}\leq R\sigma$. By the Hoeffding inequality for sub-Gaussian random variables \eqref{eq:sub-Gaussian_Hoeffding} (see also \citep[Theorem 2.2.1]{vershynin2018high}), we obtain
$\norm{\xi_i}_{\psi_2}\leq 2C R\sigma/\sqrt{\tau}$ for some $C>0$. Therefore, applying Lemma \ref{lem:subg_mean} with $m=\tau$, $p=\vdy$, and $q=\vd$, we obtain the following bound:

\begin{align}\label{eq:xi_bound}
\norm{\xi_i}_F \leq C_1 R\sigma\sqrt{\frac{\vd \vdy+\log(1/\delta)}{\tau}}.
\end{align}

In addition, as $\{\xi_j\}_{j\in\cH}$ are independent across honest clients, $\barxi$ is an average of $|\cH|$ independent mean-zero sub-Gaussian matrices with the same
sub-Gaussian norm. Therefore, Lemma \ref{lem:subg_mean} provides
\begin{align}\label{eq:xibar_bound}
\norm{\barxi}_F \leq C_1 R\sigma\sqrt{\frac{\vd\vdy+\log(1/\delta)}{\tau|\cH|}}.
\end{align}
The proof is then completed by combining \eqref{eq:xi_bound}-\eqref{eq:xibar_bound} and absorbing any constants in $C_1$.
\end{proof}
 
\subsection{Gradient Heterogeneity Decomposition}
\label{subsec:grad_diff}
 
We begin by subtracting \eqref{eq:avg_grad} from \eqref{eq:grad_expanded} and using the isotropic data assumption $\Sigma_i = I_{\vd}$ for all $i \in \mathcal{H}$ (Assumption \ref{assumption:isotropic}) as follows: 
\begin{align*}
\nabla_\theta \hat  L_i^{(t)}(\theta^{(t)}) - \nabla_\theta \hat  L_\cH^{(t)}(\theta^{(t)})
&= \frac{2}{|\cH|}\sum_{j\in\cH}(\thetastar_j - \thetastar_i)\textcolor{black}{\Sigma_i} - (\xi_i - \barxi) \\
& + \frac{2}{|\cH|}\sum_{j\in\cH}(\theta^{(t)}-\thetastar_j)(\hSigma_i - \Sigma_i) \\
& + \frac{2}{|\cH|}\sum_{j\in\cH}(\theta^{(t)}-\thetastar_j)(\Sigma_j - \hSigma_j) \\
& + \frac{2}{|\cH|}\sum_{j\in\cH}(\thetastar_j - \thetastar_i)(\hSigma_i - \Sigma_i).
\end{align*}
 
Therefore, by taking norms and applying the triangle inequality, we obtain
\begin{align}\label{eq:grad_diff_norm}
\norm{\nabla_\theta \hat  L_i^{(t)}(\theta^{(t)}) - \nabla_\theta \hat  L_\cH^{(t)}(\theta^{(t)})}
&\leq \underbrace{\frac{2}{|\cH|}\sum_{j\in\cH}\norm{\thetastar_j - \thetastar_i}}_{\text{model heterogeneity}}
+ \underbrace{\norm{\xi_i - \barxi}}_{\text{noise}} \notag\\
&+ \underbrace{\frac{2}{|\cH|}\sum_{j\in\cH}\norm{\Sigma_i - \hSigma_i}\norm{\theta^{(t)}-\thetastar_j}}_{\text{covariance concentration (client } i)}  \notag\\
&+ \underbrace{\frac{2}{|\cH|}\sum_{j\in\cH}\norm{\Sigma_j - \hSigma_j}\norm{\theta^{(t)}-\thetastar_j}}_{\text{covariance concentration (client } j)} \notag\\
&+ \underbrace{\frac{2}{|\cH|}\sum_{j\in\cH}\norm{\Sigma_i - \hSigma_i}\norm{\thetastar_j - \thetastar_i}}_{\text{cross term}}.
\end{align}
 
\noindent \textbf{Burn-in time condition.}
Fix $\delta\in(0,1)$ and, for some $\omega>0$, let
\begin{align}\label{omega2}
\omega^2:=C_1(R^2+1)^2\textcolor{black}{\left(\frac{\log(2|\cH|T(\vd+\vdy)/\delta)}{\tau}\right)}.
\end{align}
\begin{align}\label{eq:tau_condition}
\tau\geq \textcolor{black}{C_0C_1}(R^2+1)^2
\max\left\{1,\kappa+\frac{1}{|\cH|}\right\}
\textcolor{black}{\log\left(\frac{2|\cH|T(\vd+\vdy)}{\delta}\right)}.
\end{align}
\textcolor{black}{Here, $C_0>0$ is a sufficiently large constant. Thus, \eqref{eq:tau_condition} guarantees $\omega\leq1$ and
\begin{align*}
\omega^2\left(\kappa+\frac{1}{|\cH|}\right)\leq\frac{1}{C_0},
\end{align*}
where $C_0$ can be chosen large enough to absorb all constants appearing below.}

Therefore, under the burn-in time condition \eqref{eq:tau_condition}, Lemma \ref{lem:cov_conc} yields
$\norm{\hSigma_i - \Sigma_i} \leq \frac{\omega}{2}$ and $\norm{\hSigma_j - \Sigma_j} \leq \frac{\omega}{2}$, for all $i,j \in \mathcal{H}$. Then the covariance concentration and cross terms in \eqref{eq:grad_diff_norm} are absorbed as follows:
\begin{align}\label{eq:grad_diff_simplified}
\norm{\nabla_\theta \hat L_i(\theta^{(t)}) - \nabla_\theta \hat L_\cH(\theta^{(t)})} \leq 
\frac{3}{|\cH|}\sum_{j\in\cH}\norm{\thetastar_j - \thetastar_i}+ \norm{\xi_i - \barxi}
+ \frac{\textcolor{black}{2}\omega}{|\cH|}\sum_{j\in\cH}\norm{\theta^{(t)} - \thetastar_j}. 
\end{align}
where we consider $\omega \leq 1$ for the first term. We reemphasize that, as discussed in Remark \ref{remark:G_fails}, the gradient heterogeneity bound has a model-heterogeneity term of order $\frac{1}{|\cH|}\sum_{j\in\cH}\norm{\thetastar_j-\thetastar_i}$, as well as an additional iterate-dependent term that scales as $\frac{1}{|\cH|}\sum_{j\in\cH}\norm{\theta^{(t)}-\thetastar_j}$. Therefore, if $\theta^{(t)}$ moves arbitrarily far from the optimum $\thetastar_j$ of any honest client $j\in\cH$, then even in this simple linear regression setting, the gradient heterogeneity bound need not remain uniformly bounded.

To complete the bound for $G^{(t)}$, we square both sides of \eqref{eq:grad_diff_simplified}, apply $\norm{a+b+c}^2\leq 3\norm{a}^2+3\norm{b}^2+3\norm{c}^2$, and average over honest clients $i\in\cH$. Therefore, the per-iteration gradient heterogeneity satisfies 

\begin{align}\label{eq:Gt_linear}
G^{(t)} &\leq \underbrace{\frac{27}{|\cH|^2}\sum_{i\in\cH}\sum_{j\in\cH}
\norm{\thetastar_j - \thetastar_i}^2}_{= 27 \Gamma_{\textsf{lin}}} + 3C_1 R^2\sigma^2\left(\frac{\vd \vdy + \log(2|\mathcal{H}|/\delta)}{\tau} \right)\left(1+\frac{1}{|\cH|}\right) + \frac{3\omega^2}{|\cH|}\sum_{j\in\cH}\norm{\theta^{(t)} -\thetastar_j}^2,
\end{align}
where we recall that the model heterogeneity as follows:
\begin{align*}
\Gamma_{\textsf{lin}} := \frac{1}{|\cH|^2}\sum_{i\in\cH}\sum_{j\in\cH}
\norm{\thetastar_i - \thetastar_j}^2.
\end{align*}

\subsection{Optimization Error Recursion}
\label{subsec:error_dyn}

We now relate the parameter error, i.e., $\norm{\theta^{(t)} -\thetastar_j}$, directly to the gradient of the population loss. Let
\begin{align*}
\bar\theta^\star:=\frac{1}{|\cH|}\sum_{i\in\cH}\thetastar_i.
\end{align*}
We note that under isotropy and conditional mean-zero noise, we have 
\begin{align}\label{eq:population_linear_gradient}
\nabla L_\cH(\theta)=2(\theta-\bar\theta^\star).
\end{align}
Moreover, the variance decomposition in Lemma \ref{lem:variance_decomposition} provides the following identity.
\begin{align}\label{eq:Delta_Q_relation}
\Delta^{(t)}:=\frac{1}{|\cH|}\sum_{j\in\cH}\norm{\theta^{(t)}-\thetastar_j}^2
=\frac{1}{4}\norm{\nabla L_\cH(\theta^{(t)})}^2+\frac{1}{2}\Gamma_{\textsf{lin}}.
\end{align}
We then substitute \eqref{eq:Delta_Q_relation} into \eqref{eq:Gt_linear} to obtain
\begin{align}\label{eq:Gt_linear_population}
G^{(t)}\lesssim \Gamma_{\textsf{lin}}
+\omega^2\norm{\nabla L_\cH(\theta^{(t)})}^2
+R^2\sigma^2\left(\frac{\vd\vdy+\log(2|\cH|T/\delta)}{\tau}\right),
\end{align}
where all constants are adsorbed into $\lesssim$.

Let us also define the honest-average gradient error
\begin{align}\label{eq:Zt_linear}
Z^{(t)}:=\norm{\nabla\hat  L_\cH^{(t)}(\theta^{(t)})
-\nabla L_\cH(\theta^{(t)})}^2.
\end{align}
To bound this quantity, we first subtract the population gradient \eqref{eq:population_linear_gradient} from the honest-average fresh-batch gradient. This yields the following expression:
\begin{align*}
\nabla\hat  L_\cH^{(t)}(\theta^{(t)})-\nabla L_\cH(\theta^{(t)})
&=\frac{2}{|\cH|}\sum_{i\in\cH}
(\theta^{(t)}-\thetastar_i)(\hat \Sigma_i-I_\vd)-\bar\xi.
\end{align*}
Therefore, using $\norm{a+b}^2\leq2\norm{a}^2+2\norm{b}^2$, we obtain
\begin{align*}
Z^{(t)}
&\leq 8\norm{\frac{1}{|\cH|}\sum_{i\in\cH}
(\theta^{(t)}-\thetastar_i)(\hat \Sigma_i-I_\vd)}^2
+2\norm{\bar\xi}^2.
\end{align*}

{\color{black}
We proceed to bound the first term. To do so, let us define the following quantities $A_i^{(t)}:=\theta^{(t)}-\thetastar_i$ and recall from Lemma \ref{lem:cov_conc} that $Z_{i,k}:=X_{i,k}X_{i,k}^\top-I_\vd$, and $M_{i,k}^{(t)}:=A_i^{(t)}Z_{i,k}$.
Then, we can write 
\begin{align*}
\frac{1}{|\cH|}\sum_{i\in\cH}A_i^{(t)}(\widehat\Sigma_i-I_\vd)=\frac{1}{|\cH|\tau}\sum_{i\in\cH}\sum_{k=1}^{\tau}M_{i,k}^{(t)}.
\end{align*}
We then recall that the fresh-sample assumption makes the matrices $M_{i,k}^{(t)}$ independent conditionally on $\theta^{(t)}$, while the data isotropy assumption implies
\begin{align*}
\mathbb E[Z_{i,k}\mid\theta^{(t)}]=0 \text{ and }
\mathbb E[M_{i,k}^{(t)}\mid\theta^{(t)}]=0.
\end{align*}
We can also define the self-adjoint matrix
\begin{align*}
\mathcal S(M_{i,k}^{(t)}):=
\begin{pmatrix}
0&M_{i,k}^{(t)}\\
M_{i,k}^{(t)\top}&0
\end{pmatrix},
\end{align*}
where we have $\|\mathcal S(M_{i,k}^{(t)})\|=\|M_{i,k}^{(t)}\|$. As we assume that $\|X_{i,k}\|\leq R$, we have
\begin{align*}
\|Z_{i,k}\|\leq R^2+1 \text{ and }
\|M_{i,k}^{(t)}\|\leq (R^2+1)\|A_i^{(t)}\|.
\end{align*}
In addition, note that we can write
\begin{align*}
M_{i,k}^{(t)}(M_{i,k}^{(t)})^\top
&=A_i^{(t)}Z_{i,k}^2(A_i^{(t)})^\top,\\
(M_{i,k}^{(t)})^\top M_{i,k}^{(t)}
&=Z_{i,k}(A_i^{(t)})^\top A_i^{(t)}Z_{i,k},
\end{align*}
which implies that both diagonal blocks of
$\mathbb E[\mathcal S(M_{i,k}^{(t)})^2\mid\theta^{(t)}]$
have norm at most $(R^2+1)^2\|A_i^{(t)}\|^2$. Thus, we can write
\begin{align*}
v_t&:=\left\|\sum_{i\in\cH}\sum_{k=1}^{\tau}
\mathbb E\left[\mathcal S(M_{i,k}^{(t)})^2
\mid\theta^{(t)}\right]\right\|\leq  \tau b^2\sum_{i\in\cH}\|A_i^{(t)}\|^2.
\end{align*}
Let us then set $\ell:=\log\left(\frac{2|\cH|T(\vd+\vdy)}{\delta}\right).$ We are now ready to Lemma~\ref{lemma:Bernstein} and take union bound over $t=0,\ldots,T-1$ to obtain
\begin{align*}
&\left\|\frac{1}{|\cH|}\sum_{i\in\cH}
A_i^{(t)}(\hat\Sigma_i-I_\vd)\right\|\lesssim b\sqrt{\frac{\ell}{|\cH|\tau}
\left(\frac{1}{|\cH|}\sum_{i\in\cH}\|A_i^{(t)}\|^2\right)}
+b\frac{\ell}{|\cH|\tau}\max_{i\in\cH}\|A_i^{(t)}\|.
\end{align*}
Moreover, we have
\begin{align*}
\max_{i\in\cH}\|A_i^{(t)}\|&\leq\left(\sum_{i\in\cH}\|A_i^{(t)}\|^2\right)^{1/2}=\sqrt{|\cH|\Delta^{(t)}}.
\end{align*}
We then note that under the burn-in condition~\eqref{eq:tau_condition}, the second term is dominated by the first. Therefore, we have
\begin{align*}
\left\|\frac{1}{m}\sum_{i\in\cH}
A_i^{(t)}(\hat\Sigma_i-I_\vd)\right\|
&\lesssim b\sqrt{\frac{\ell}{|\cH|\tau}\Delta^{(t)}}.
\end{align*}
By squaring this inequality and using the definition of $\omega^2$ \eqref{omega2}, we obtain
\begin{align*}
\left\|\frac{1}{|\cH|}\sum_{i\in\cH}
(\theta^{(t)}-\thetastar_i)(\hat\Sigma_i-I_\vd)\right\|^2
&\lesssim\frac{(R^2+1)^2\ell}{\tau|\cH|}\Delta^{(t)}\lesssim\frac{\omega^2}{|\cH|}\Delta^{(t)}.
\end{align*}
}

In addition, we recall that Lemma \ref{lem:noise_bound} implies
\begin{align*}
\norm{\bar\xi}^2
\lesssim R^2\sigma^2
\left(\frac{\vd\vdy+\log(2T/\delta)}{\tau|\cH|}\right),
\end{align*}
Therefore, combining these bounds and substituting \eqref{eq:Delta_Q_relation}, we obtain
\begin{align}\label{eq:Zt_linear_bound}
Z^{(t)}\lesssim\frac{\omega^2}{|\cH|}
\norm{\nabla L_\cH(\theta^{(t)})}^2
+\frac{\omega^2}{|\cH|}\Gamma_{\textsf{lin}}
+R^2\sigma^2\left(\frac{\vd\vdy+\log(2T/\delta)}{\tau|\cH|}\right).
\end{align}

\subsection{Convergence Analysis}
\label{subsec:convergence_linear}

Let $e^{(t)}:=\sF^{(t)}-\nabla L_\cH(\theta^{(t)})$. By $(f,\kappa)$-robustness (Definition \ref{def:robust-agg}) and \eqref{eq:Zt_linear}, we have 
\begin{align}\label{eq:agg_error_linear}
\norm{e^{(t)}}^2\leq2\kappa G^{(t)}+2Z^{(t)}.
\end{align}
The population loss is $2$-smooth under isotropy. Thus, for $\eta\leq1/2$, we have 
\begin{align}\label{eq:descent_linear_population}
L_\cH(\theta^{(t+1)})-L_\cH(\theta^{(t)})
\leq-\frac{\eta}{2}\norm{\nabla L_\cH(\theta^{(t)})}^2
+\frac{\eta}{2}\norm{e^{(t)}}^2.
\end{align}

\begin{theorem}\label{thm:convergence_linear_fresh}
Suppose that Assumptions \ref{assumption:isotropic} and \ref{assm:subg} hold, $\eta\leq1/2$, and the burn-in condition \eqref{eq:tau_condition} holds. Then, for every $\delta\in(0,1)$, with probability at least $1-\delta$,
\begin{align*}
Q_T:=\frac{1}{T}\sum_{t=0}^{T-1}\norm{\nabla L_\cH(\theta^{(t)})}^2
\lesssim\frac{\Delta L^{(0)}}{\eta T}
+\left(\kappa+\frac{1}{|\cH|}\right)\Gamma_{\textsf{lin}}
+R^2\sigma^2\left(\kappa+\frac{1}{|\cH|}\right)
\left(\frac{\vd\vdy+\log(2|\cH|T/\delta)}{\tau}\right).
\end{align*}
\end{theorem}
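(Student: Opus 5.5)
The proof is a standard telescoping of the one-step descent inequality \eqref{eq:descent_linear_population}, in which the aggregation error is controlled by \eqref{eq:agg_error_linear}, \eqref{eq:Gt_linear_population} and \eqref{eq:Zt_linear_bound}. The burn-in condition \eqref{eq:tau_condition} then lets us absorb the iterate-dependent part of the error into the left-hand side.

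First, work on the intersection of the high-probability events of Lemma \ref{lem:cov_conc}, Lemma \ref{lem:noise_bound}, and the matrix Bernstein bound used for $Z^{(t)}$. Each is applied with $\delta/3$ and all are union-bounded over $i\in\cH$ and $t<T$, so the intersection has probability at least $1-\delta$; the constants change only inside the logarithms. On this event, for every $t$, combine \eqref{eq:agg_error_linear} with \eqref{eq:Gt_linear_population} and \eqref{eq:Zt_linear_bound} to get, for an absolute constant $c$,
\begin{align*}
\norm{e^{(t)}}^2\leq c\,\omega^2\Big(\kappa+\tfrac{1}{|\cH|}\Big)\norm{\nabla L_\cH(\theta^{(t)})}^2
+c\Big(\kappa+\tfrac{\omega^2}{|\cH|}\Big)\Gamma_{\textsf{lin}}
+c\Big(\kappa+\tfrac{1}{|\cH|}\Big)R^2\sigma^2\,\frac{\vd\vdy+\log(2|\cH|T/\delta)}{\tau}.
\end{align*}

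Second, plug this into \eqref{eq:descent_linear_population}. The step-size restriction $\eta\leq 1/2$ matches the 2-smoothness of $L_\cH$ under isotropy, so \eqref{eq:descent_linear_population} applies. By the burn-in condition, $\omega^2(\kappa+1/|\cH|)\leq 1/C_0$, and taking $C_0\geq 2c$ gives $c\,\omega^2(\kappa+1/|\cH|)\leq 1/2$. The gradient term on the right is then at most $\frac{\eta}{4}\norm{\nabla L_\cH(\theta^{(t)})}^2$, which yields
\begin{align*}
L_\cH(\theta^{(t+1)})-L_\cH(\theta^{(t)})\leq -\frac{\eta}{4}\norm{\nabla L_\cH(\theta^{(t)})}^2+\frac{\eta}{2}\,\mathcal{E},
\end{align*}
where $\mathcal{E}$ collects the $\Gamma_{\textsf{lin}}$ and noise terms. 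This absorption step is the crux of the argument: it is exactly where $\kappa B^2<1$ is replaced by a condition on $\tau$.

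Third, sum over $t=0,\dots,T-1$ and telescope. Since $L_\cH(\theta^{(T)})\geq L_\cH(\thetastar)$, the sum of the loss differences is at least $-\Delta L^{(0)}$. Dividing by $\eta T/4$ gives $Q_T\leq 4\Delta L^{(0)}/(\eta T)+2\mathcal{E}$. Finally, since $\omega\leq1$, we have $\kappa+\omega^2/|\cH|\leq \kappa+1/|\cH|$, which puts the bound in the stated form.

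The main obstacle is the bookkeeping of the probabilistic events rather than the deterministic recursion. The Bernstein bound for $Z^{(t)}$ is conditional on $\theta^{(t)}$, since $A_i^{(t)}$ depends on the past. I would handle this by using the fresh-sample independence: apply the bound conditionally at each $t$ with failure probability $\delta/(3T)$, then take the tower-property expectation and a union bound over $t$. I also need to check that the second Bernstein term is dominated under \eqref{eq:tau_condition}. This holds because $\max_i\|A_i^{(t)}\|\leq\sqrt{|\cH|\Delta^{(t)}}$, so that term is at most $b\,\ell\sqrt{\Delta^{(t)}/|\cH|}/\tau$, which is dominated by the first term $b\sqrt{\ell\,\Delta^{(t)}/(|\cH|\tau)}$ exactly when $\ell\leq\tau$, as the burn-in ensures.
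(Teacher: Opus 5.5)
Your proposal is correct and follows the paper's proof. Both telescope \eqref{eq:descent_linear_population}, bound $\|e^{(t)}\|^2\le 2\kappa G^{(t)}+2Z^{(t)}$ via \eqref{eq:Gt_linear_population} and \eqref{eq:Zt_linear_bound}, and use $\omega^2(\kappa+1/|\mathcal{H}|)\le 1/C_0$ from the burn-in to absorb the iterate-dependent term; you absorb it at each iteration before summing, whereas the paper sums first and moves $\omega^2(\kappa+1/|\mathcal{H}|)Q_T$ to the left-hand side, which is equivalent on the high-probability event.
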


\begin{proof}
We begin by summing \eqref{eq:descent_linear_population} over $t=0,\ldots,T-1$ to write
\begin{align*}
\frac{\eta}{2}\sum_{t=0}^{T-1}
\norm{\nabla L_\cH(\theta^{(t)})}^2
&\leq L_\cH(\theta^{(0)})-L_\cH(\theta^{(T)})
+\frac{\eta}{2}\sum_{t=0}^{T-1}\norm{e^{(t)}}^2\\
&\leq \Delta L^{(0)}
+\frac{\eta}{2}\sum_{t=0}^{T-1}\norm{e^{(t)}}^2,
\end{align*}
where we recall that $\Delta L^{(0)}:=L_\cH(\theta^{(0)})-L_\cH(\thetastar)$. The second inequality uses the optimality of $\thetastar$. Then, dividing by $\eta T/2$ and applying \eqref{eq:agg_error_linear}, we obtain
\begin{align*}
Q_T&\leq\frac{2\Delta L^{(0)}}{\eta T}
+\frac{1}{T}\sum_{t=0}^{T-1}\norm{e^{(t)}}^2 \leq\frac{2\Delta L^{(0)}}{\eta T}
+2\kappa G_T+2Z_T,
\end{align*}
where $Z_T:=\frac{1}{T}\sum_{t=0}^{T-1}Z^{(t)}$. By averaging \eqref{eq:Gt_linear_population} and \eqref{eq:Zt_linear_bound} over the iterations gives
\begin{align}\label{comparison_GB}
G_T
&\lesssim\Gamma_{\textsf{lin}}
+R^2\sigma^2\left(\frac{\vd\vdy+\log(2|\cH|T/\delta)}{\tau}\right)+\omega^2Q_T,\notag\\
Z_T
&\lesssim\frac{\omega^2}{|\cH|}Q_T
+\frac{\omega^2}{|\cH|}\Gamma_{\textsf{lin}}
+R^2\sigma^2\left(\frac{\vd\vdy+\log(2|\cH|T/\delta)}{\tau|\cH|}\right).
\end{align}
Therefore, by substituting these two bounds into the preceding inequality, we obtain
\begin{align*}
Q_T&\lesssim\frac{\Delta L^{(0)}}{\eta T}
+\left(\kappa+\frac{\omega^2}{|\cH|}\right)\Gamma_{\textsf{lin}}+R^2\sigma^2\left(\kappa+\frac{1}{|\cH|}\right)
\left(\frac{\vd\vdy+\log(2|\cH|T/\delta)}{\tau}\right)
+\omega^2\left(\kappa+\frac{1}{|\cH|}\right)Q_T.
\end{align*}
By \eqref{eq:tau_condition} we have \textcolor{black}{$\omega^2(\kappa+1/|\cH|)\leq1/C_0$}, and choosing $C_0$ sufficiently large makes the coefficient of $Q_T$ strictly smaller than one. We can therefore move the final term to the left-hand side and absorb the resulting numerical factor into the implicit constant absorbed into $\lesssim$. This proves the stated bound.
\end{proof}

\subsection{Non-Asymptotic Parameter Recovery Error Bound}
\label{subsec:param_recovery_linear_fresh}

We now state the recovery guarantee separately from the convergence analysis. We first note that under data isotropy, parameter error and prediction error coincide because, for any $\theta \in \R^{\vdy\times\vd}$ we can write 
\begin{align*}
\mathbb{E}_X\left[\norm{(\theta-\thetastar_j)X}^2\right]
=\operatorname{Tr}\left((\theta-\thetastar_j)^\top
(\theta-\thetastar_j)\mathbb{E}[XX^\top]\right)
=\norm{\theta-\thetastar_j}^2.
\end{align*}

\begin{corollary}[Linear Parameter Recovery Bound]
Suppose that the conditions of Theorem \ref{thm:convergence_linear_fresh} hold. Then, for every $\delta\in(0,1)$, with probability at least $1-\delta$,
\begin{align*}
\frac{1}{T}\sum_{t=0}^{T-1}\frac{1}{|\cH|}\sum_{j\in\cH}
\norm{\theta^{(t)}-\thetastar_j}^2&=\frac{1}{T}\sum_{t=0}^{T-1}\frac{1}{|\cH|}\sum_{j\in\cH}
\mathbb{E}_X\left[\norm{(\theta^{(t)}-\thetastar_j)X}^2\right]\\
&\lesssim\frac{\Delta L^{(0)}}{\eta T}
+\left(1+\kappa+\frac{1}{|\cH|}\right)\Gamma_{\textsf{lin}}
+R^2\sigma^2\left(\kappa+\frac{1}{|\cH|}\right)
\left(\frac{\vd\vdy+\log(2|\cH|T/\delta)}{\tau}\right).
\end{align*}
\end{corollary}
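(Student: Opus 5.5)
The equality on the first line is pure bookkeeping; the inequality follows from it together with Theorem~\ref{thm:convergence_linear_fresh}. No new probabilistic argument is needed, so I would work on the same high-probability event on which Theorem~\ref{thm:convergence_linear_fresh} holds.

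\textbf{Step 1: the equality.} For every iterate, isotropy gives $\mathbb{E}_X[\|(\theta^{(t)}-\thetastar_j)X\|^2]=\operatorname{Tr}((\theta^{(t)}-\thetastar_j)^\top(\theta^{(t)}-\thetastar_j)\mathbb{E}[XX^\top])=\|\theta^{(t)}-\thetastar_j\|^2$, as computed just before the statement. Here $\|\cdot\|$ on the parameter is the Frobenius norm, which is the norm the variance decomposition uses.

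\textbf{Step 2: reduce to $Q_T$.} I would invoke identity \eqref{eq:Delta_Q_relation}, namely $\Delta^{(t)}=\frac14\|\nabla L_\cH(\theta^{(t)})\|^2+\frac12\Gamma_{\textsf{lin}}$. This follows from Lemma~\ref{lem:variance_decomposition} applied to the vectors $a_j=\theta^{(t)}-\thetastar_j$, whose mean is $\theta^{(t)}-\bar\theta^\star$. Combining with \eqref{eq:population_linear_gradient}, $\|\theta^{(t)}-\bar\theta^\star\|^2=\frac14\|\nabla L_\cH(\theta^{(t)})\|^2$. The spread term is $\frac{1}{|\cH|}\sum_j\|\thetastar_j-\bar\theta^\star\|^2=\frac12\Gamma_{\textsf{lin}}$, by the pairwise identity $\frac{1}{|\cH|^2}\sum_{i,j}\|a_i-a_j\|^2=\frac{2}{|\cH|}\sum_i\|a_i-\bar a\|^2$. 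Averaging over $t=0,\dots,T-1$ then gives
\[
\frac{1}{T}\sum_{t}\Delta^{(t)}=\tfrac14 Q_T+\tfrac12\Gamma_{\textsf{lin}}.
\]

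\textbf{Step 3: plug in.} I would substitute the bound on $Q_T$ from Theorem~\ref{thm:convergence_linear_fresh}. Adding $\frac12\Gamma_{\textsf{lin}}$ turns the coefficient $(\kappa+1/|\cH|)$ into $(1+\kappa+1/|\cH|)$, while the initialization and statistical terms are carried over unchanged. Constants are absorbed into $\lesssim$.

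There is no real obstacle. The only points needing care are the factor $2$ in the pairwise-variance identity and checking that the probability event is the same as in the theorem, so that no extra union bound is required.
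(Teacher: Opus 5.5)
Your proposal is correct and follows the paper's proof essentially step for step. You use the isotropy trace identity for the equality, then the variance decomposition around $\bar\theta^\star$ together with $\nabla L_\cH(\theta)=2(\theta-\bar\theta^\star)$ and the pairwise identity to get $\frac14 Q_T+\frac12\Gamma_{\textsf{lin}}$, and finally substitute the $Q_T$ bound from Theorem~\ref{thm:convergence_linear_fresh} on the same event; your remark that the parameter norm must be read as Frobenius for the trace identity to hold is a worthwhile clarification the paper leaves implicit.
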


\begin{proof}
For every iteration $t$, adding and subtracting $\bar\theta^\star$ and applying Lemma \ref{lem:variance_decomposition} give
\begin{align*}
\frac{1}{|\cH|}\sum_{j\in\cH}
\norm{\theta^{(t)}-\thetastar_j}^2
&=\norm{\theta^{(t)}-\bar\theta^\star}^2
+\frac{1}{|\cH|}\sum_{j\in\cH}
\norm{\thetastar_j-\bar\theta^\star}^2.
\end{align*}
The population-gradient identity \eqref{eq:population_linear_gradient} implies
\begin{align*}
\norm{\theta^{(t)}-\bar\theta^\star}^2
=\frac{1}{4}\norm{\nabla L_\cH(\theta^{(t)})}^2.
\end{align*}
Moreover, applying the pairwise form of the variance decomposition to the client parameters yields
\begin{align*}
\frac{1}{|\cH|}\sum_{j\in\cH}
\norm{\thetastar_j-\bar\theta^\star}^2
=\frac{1}{2|\cH|^2}\sum_{i\in\cH}\sum_{j\in\cH}
\norm{\thetastar_i-\thetastar_j}^2
=\frac{1}{2}\Gamma_{\textsf{lin}}.
\end{align*}
Hence, combining the last three bounds and averaging over the iterations yields
\begin{align*}
\frac{1}{T}\sum_{t=0}^{T-1}\frac{1}{|\cH|}\sum_{j\in\cH}
\norm{\theta^{(t)}-\thetastar_j}^2
=\frac{1}{4}Q_T+\frac{1}{2}\Gamma_{\textsf{lin}}.
\end{align*}
Finally, substituting the bound on $Q_T$ from Theorem \ref{thm:convergence_linear_fresh} and absorbing numerical constants proves the result.
\end{proof}

\section{Nonlinear Regression}
\label{appendix:nonlinear}

In this section, we extend our analysis to the nonlinear regression setting and derive non-asymptotic guarantees for adversarially robust federated learning. Our analysis follows the same underlying structure in linear regression, while accounting for the additional intricacies introduced by the nonlinear parameterization. In particular, we first characterize the gradient decomposition across honest clients in function space and derive a bound on the resulting gradient heterogeneity under standard regularity and concentration conditions. We then combine this bound with the optimization-error recursion to establish a non-asymptotic ergodic convergence bound and the corresponding function-recovery guarantee.

\subsection{Regression Model and Gradients}
\label{subsec:nonlinear_model}
 
We recall that each honest client $i\in\cH$ has a ground-truth parameter $\thetastar_i\in\R^p$
and generates data according to
\begin{align*}
Y_{i,k} = h_{\thetastar_i}(X_{i,k}) + V_{i,k}, \forall k=1,\dots,\tau,
\end{align*}
where $h_\theta:\R^{\vd}\to\R^{\vdy}$ is a nonlinear function parameterized by $\theta\in\R^p$ (note that now $\theta$ is a vector that parameterizes the nonlinear function $h_\theta(\cdot)$), $V_{i,k}\in\R^{\vdy}$
satisfies Assumption \ref{assm:subg}, and $\|X_{i,k}\|\leq R$ almost surely. At a fixed iteration, with the iteration index on the fresh samples suppressed, we recall that the empirical loss is
\begin{align*}
\hat  L_i^{(t)}(\theta) := \frac{1}{\tau}\sum_{k=1}^{\tau}\norm{Y_{i,k} - h_\theta(X_{i,k})}^2.
\end{align*}
 
We then define the residual $r_{i,k}(\theta):=h_\theta(X_{i,k})-Y_{i,k}$, and write 
\begin{align*}
\nabla_\theta \hat  L_i^{(t)}(\theta) = \frac{2}{\tau}\sum_{k=1}^\tau r_{i,k}(\theta) J_\theta(X_{i,k})^\top.
\end{align*}
and by substituting the ground-truth model $Y_{i,k} = h_{\thetastar_i}(X_{i,k})+V_{i,k}$, we have 
\begin{align*}
\nabla_\theta \hat  L_i^{(t)}(\theta^{(t)})
= \frac{2}{\tau}\sum_{k=1}^\tau\bigl(h_{\theta^{(t)}}(X_{i,k})-h_{\thetastar_i}(X_{i,k})\bigr)
J_{\theta^{(t)}}(X_{i,k})^\top
- \frac{2}{\tau}\sum_{k=1}^\tau V_{i,k}\,J_{\theta^{(t)}}(X_{i,k})^\top.
\end{align*}

\subsection{Gradient Heterogeneity Decomposition}
\label{subsec:nl_grad_diff}

At iteration $t$, let
\begin{align*}
\hat  s_i^{(t)}&:=\frac{1}{\tau}\sum_{k=1}^{\tau}
\bigl(h_{\theta^{(t)}}(X_{i,k})-h_{\thetastar_i}(X_{i,k})\bigr)\Jt(X_{i,k})^\top,\\
s_i^{(t)}&:=\mathbb{E}_X\left[\bigl(h_{\theta^{(t)}}(X)-h_{\thetastar_i}(X)\bigr)\Jt(X)^\top\right].
\end{align*}
Here and below, the iteration index on the fresh samples is suppressed. We also define the client noise term and its honest average as
\begin{align*}
\xi_i^{(t)}:=\frac{2}{\tau}\sum_{k=1}^{\tau}V_{i,k}\Jt(X_{i,k})^\top \text{ and }
\bar\xi^{(t)}:=\frac{1}{|\cH|}\sum_{i\in\cH}\xi_i^{(t)}.
\end{align*}
Then, we have 
\begin{align}\label{eq:grad_diff_g}
\nabla \hat  L_i^{(t)}(\theta^{(t)})-\nabla \hat  L_\cH^{(t)}(\theta^{(t)})
=2\left(\hat  s_i^{(t)}-\frac{1}{|\cH|}\sum_{j\in\cH}\hat  s_j^{(t)}\right)
-\left(\xi_i^{(t)}-\bar\xi^{(t)}\right).
\end{align}

\subsection{Concentration Inequalities for Nonlinear Regression}
\label{subsec:nonlinear_concentration}

The fresh-sample assumption ensures that, conditionally on $\theta^{(t)}$, the samples used at iteration $t$ are independent of the current iterate.

\begin{lemma}\label{lem:emp_pop}
Suppose that Assumptions \ref{assm:jacobian} and \ref{assumption:subGaussian_residual} hold. Then, for every $\delta\in(0,1)$, with probability at least $1-\delta$, simultaneously for all $i\in\cH$ and $t=0,\ldots,T-1$, it holds that
\begin{align}\label{lem:e2}
\norm{\hat  s_i^{(t)}-s_i^{(t)}}_F
\leq C_1\bar J\sqrt{E_i^{(t)}}
\sqrt{\frac{p+\log(2|\cH|T/\delta)}{\tau}}.
\end{align}
\end{lemma}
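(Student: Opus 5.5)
We prove the bound by conditioning on the current iterate and applying Lemma~\ref{lem:subg_mean} to the centered summands, followed by a union bound over clients and iterations. Fix $t$ and $i\in\cH$, and condition on $\theta^{(t)}$. By the fresh-sample assumption, the pairs $\{X_{i,k}^{(t)}\}_{k=1}^{\tau}$ are i.i.d.\ and independent of $\theta^{(t)}$. Define
$$W_{i,k}^{(t)} := \bigl(h_{\theta^{(t)}}(X_{i,k})-h_{\thetastar_i}(X_{i,k})\bigr)\Jt(X_{i,k})^\top .$$
Then
$$\hat s_i^{(t)} - s_i^{(t)} = \frac{1}{\tau}\sum_{k=1}^{\tau}\bigl(W_{i,k}^{(t)}-\mathbb{E}[W_{i,k}^{(t)}\mid\theta^{(t)}]\bigr),$$
and $s_i^{(t)}=\mathbb{E}[W_{i,k}^{(t)}\mid\theta^{(t)}]$ because the covariate marginal is common across honest clients. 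The centered summands are conditionally independent and mean-zero.

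To bound their $\psi_2$-norm, we use the consequence of Assumptions~\ref{assm:jacobian} and~\ref{assumption:subGaussian_residual} stated after Assumption~\ref{assumption:subGaussian_residual}: $\|W_{i,k}^{(t)}-\mathbb{E}W_{i,k}^{(t)}\|_{\psi_2}\le C_3\bar J\sqrt{E_i^{(t)}}$. If we re-derived this, it would go as follows. For any unit-Frobenius $U$, we have $|\langle U,W\rangle|\le \|h_{\theta^{(t)}}-h_{\thetastar_i}\|\,\|\Jt\|\le \bar J\, r_i^{(t)}$. Hence the scalar projection is dominated by $\bar J r_i^{(t)}$, whose $\psi_2$-norm is at most $C_2\bar J\sqrt{E_i^{(t)}}$. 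Centering costs only an absolute constant factor (see \cite{vershynin2018high}).

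Next we apply Lemma~\ref{lem:subg_mean} conditionally on $\theta^{(t)}$, with $m=\tau$, $K=C_3\bar J\sqrt{E_i^{(t)}}$, confidence level $\delta/(|\cH|T)$, and the ambient dimension of $\hat s_i^{(t)}$. Since the gradient lives in $\R^p$, that dimension is $p$, i.e.\ $pq$ in Lemma~\ref{lem:subg_mean} becomes $p$. This gives
$$\|\hat s_i^{(t)}-s_i^{(t)}\|_F\le C\bar J\sqrt{E_i^{(t)}}\sqrt{\frac{p+\log(|\cH|T/\delta)}{\tau}}$$
with conditional probability at least $1-\delta/(|\cH|T)$. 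Because $E_i^{(t)}$ is $\theta^{(t)}$-measurable, the conditional bound integrates to the same unconditional probability. A union bound over the $|\cH|T$ pairs $(i,t)$ then yields the simultaneous statement; we absorb constants into $C_1$ and replace $\log(|\cH|T/\delta)$ with the slightly larger $\log(2|\cH|T/\delta)$.

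The main obstacle is the iterate dependence. The scale $K$ depends on $\theta^{(t)}$, which is random and determined by earlier batches, so Lemma~\ref{lem:subg_mean} cannot be applied unconditionally. The plan handles this by conditioning on $\theta^{(t)}$ and using the fact that the batch at round $t$ is fresh. Because the bound is stated relative to $\sqrt{E_i^{(t)}}$, no uniform control over $\theta$ (such as chaining) is needed. A secondary subtlety is the degenerate case $E_i^{(t)}=0$. There the residual vanishes almost surely, so both sides are zero and the claim is trivial.
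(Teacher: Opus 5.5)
Your proposal is correct and follows the paper's proof. Like the paper, you condition on $\theta^{(t)}$, use fresh-sample independence, and bound $\|W_{i,k}^{(t)}-\mathbb{E}W_{i,k}^{(t)}\|_{\psi_2}\lesssim \bar J\sqrt{E_i^{(t)}}$ via Assumptions~\ref{assm:jacobian} and~\ref{assumption:subGaussian_residual}; you then apply Lemma~\ref{lem:subg_mean} with $m=\tau$ in dimension $p$ and take a union bound over clients and iterations. Your extra details make explicit what the paper leaves implicit: the domination-plus-centering argument for the $\psi_2$ bound, the measurability of $E_i^{(t)}$ used to remove the conditioning, and the degenerate case $E_i^{(t)}=0$.
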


\begin{proof}
We begin by noting that conditionally on $\theta^{(t)}$, the matrices 
\begin{align*}
W_{i,k}^{(t)}-\mathbb{E}W_{i,k}^{(t)} \text{ with }
W_{i,k}^{(t)}:=\bigl(h_{\theta^{(t)}}(X_{i,k})-h_{\thetastar_i}(X_{i,k})\bigr)\Jt(X_{i,k})^\top,
\end{align*}
are independent and sub-Gaussian from Assumption \ref{assumption:subGaussian_residual}. Then, Assumptions \ref{assm:jacobian} and \ref{assumption:subGaussian_residual} yield
\begin{align*}
\norm{W_{i,k}^{(t)}-\mathbb{E}W_{i,k}^{(t)}}_{\psi_2}
\leq C_1\bar J\sqrt{E_i^{(t)}}.\end{align*}

Therefore, the result follows from Lemma \ref{lem:subg_mean} and a union bound over the honest clients and iterations.
\end{proof}

The same argument, together with Assumption \ref{assm:subg}, yields
\begin{align}\label{eq:noise_bound_nl}
\norm{\xi_i^{(t)}}_F
&\leq C_1\bar J\sigma\sqrt{\frac{\textcolor{black}{p}+\log(2|\cH|T/\delta)}{\tau}},\notag\\
\norm{\bar\xi^{(t)}}_F
&\leq C_1\bar J\sigma\sqrt{\frac{\textcolor{black}{p}+\log(2T/\delta)}{\tau|\cH|}},
\end{align}
simultaneously for all honest clients and iterations, with probability at least $1-\delta$. 

\subsection{Per-iteration Gradient Heterogeneity Bound}
\label{subsec:nonlinear_heterogeneity}

We recall that 
\begin{align*}
\Gamma_{\textsf{nonlin}}
&:=\frac{1}{|\cH|^2}\sum_{i\in\cH}\sum_{j\in\cH}
\mathbb{E}_X\left[\norm{h_{\thetastar_i}(X)-h_{\thetastar_j}(X)}^2\right],\\
\bar E^{(t)}
&:=\frac{1}{|\cH|}\sum_{i\in\cH}
\mathbb{E}_X\left[\norm{h_{\theta^{(t)}}(X)-h_{\thetastar_i}(X)}^2\right],
\end{align*}
and, for some $\bar\omega>0$, let
\begin{align}\label{eq:omega_tau}
\bar\omega^2:=C_1\bar J^2\left(\frac{\textcolor{black}{p}+\log(2|\cH|T/\delta)}{\tau}\right).
\end{align}

\begin{lemma}\label{lem:Gt_nonlinear}
Suppose that Assumptions \ref{assm:jacobian}, \ref{assumption:subGaussian_residual}, and \ref{assm:subg} hold. Then, with probability at least $1-\delta$, simultaneously for all $t=0,\ldots,T-1$,
\begin{align*}
G^{(t)}\lesssim \bar J^2\Gamma_{\textsf{nonlin}}+\bar\omega^2\bar E^{(t)}
+\bar J^2\sigma^2\left(\frac{\textcolor{black}{p}+\log(2|\cH|T/\delta)}{\tau}\right).
\end{align*}
\end{lemma}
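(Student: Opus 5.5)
Starting from the decomposition \eqref{eq:grad_diff_g}, the plan is to split each centered empirical residual-Jacobian term into a population part and a fluctuation part:
\begin{align*}
\hat s_i^{(t)}-\frac{1}{|\cH|}\sum_{j\in\cH}\hat s_j^{(t)}
=\Bigl(s_i^{(t)}-\frac{1}{|\cH|}\sum_{j\in\cH}s_j^{(t)}\Bigr)
+(\hat s_i^{(t)}-s_i^{(t)})-\frac{1}{|\cH|}\sum_{j\in\cH}(\hat s_j^{(t)}-s_j^{(t)}).
\end{align*}
Together with the noise difference $\xi_i^{(t)}-\bar\xi^{(t)}$, this gives four terms. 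We apply $\norm{a+b+c+d}^2\le 4(\norm{a}^2+\norm{b}^2+\norm{c}^2+\norm{d}^2)$ and average over $i\in\cH$. Every probabilistic statement is taken on the intersection of the events of Lemma \ref{lem:emp_pop} and \eqref{eq:noise_bound_nl}, each at level $\delta/2$; this changes only constants inside the logarithms.

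\textbf{Population part.} The key observation is that, because all honest clients share the covariate marginal, $s_i^{(t)}$ and $s_j^{(t)}$ are expectations over the same $X$ with the same Jacobian $\Jt(X)$. Hence
\begin{align*}
s_i^{(t)}-s_j^{(t)}=\mathbb{E}_X\bigl[(h_{\thetastar_j}(X)-h_{\thetastar_i}(X))\Jt(X)^\top\bigr].
\end{align*}
The $\theta^{(t)}$-dependence cancels, which is exactly why the bound should have no $\bar J^2\bar E^{(t)}$ term. By Jensen (Lemma \ref{lem:jensen}) and Assumption \ref{assm:jacobian},
\begin{align*}
\norm{s_i^{(t)}-s_j^{(t)}}^2\le \bar J^2\,\mathbb{E}_X\norm{h_{\thetastar_i}(X)-h_{\thetastar_j}(X)}^2 .
\end{align*}
Writing $s_i^{(t)}-\frac{1}{|\cH|}\sum_j s_j^{(t)}=\frac{1}{|\cH|}\sum_j(s_i^{(t)}-s_j^{(t)})$ and applying Jensen over $j$, then averaging over $i$, gives a contribution at most $\bar J^2\Gamma_{\textsf{nonlin}}$.

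\textbf{Fluctuation part.} Lemma \ref{lem:emp_pop} gives $\norm{\hat s_i^{(t)}-s_i^{(t)}}^2\le C_1\bar J^2E_i^{(t)}(p+\log(2|\cH|T/\delta))/\tau$. By the definition \eqref{eq:omega_tau}, this is at most a constant times $\bar\omega^2E_i^{(t)}$, uniformly over $i$ and $t$. Averaging over $i$ yields $\bar\omega^2\bar E^{(t)}$. For the averaged fluctuation term, Jensen gives
\begin{align*}
\Bigl\|\frac{1}{|\cH|}\sum_j(\hat s_j^{(t)}-s_j^{(t)})\Bigr\|^2\le \frac{1}{|\cH|}\sum_j\norm{\hat s_j^{(t)}-s_j^{(t)}}^2\lesssim \bar\omega^2\bar E^{(t)},
\end{align*}
so it is of the same order. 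A sharper $1/|\cH|$ factor is available from independence but is not needed here.

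\textbf{Noise part.} Using $\norm{\xi_i^{(t)}-\bar\xi^{(t)}}^2\le 2\norm{\xi_i^{(t)}}^2+2\norm{\bar\xi^{(t)}}^2$ and \eqref{eq:noise_bound_nl}, this term is bounded by $\bar J^2\sigma^2(p+\log(2|\cH|T/\delta))/\tau$.

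Summing the three parts and absorbing constants gives the claim simultaneously for all $t$.

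The main obstacle I expect is the fluctuation step, specifically justifying Lemma \ref{lem:emp_pop} at a random iterate. Since $\theta^{(t)}$ depends on earlier batches, Lemma \ref{lem:subg_mean} must be applied conditionally on $\theta^{(t)}$, using the fresh-sample independence and Assumption \ref{assumption:subGaussian_residual}. The conditional failure probability $\delta/(2|\cH|T)$ is then integrated out and union-bounded over $(i,t)$. The cancellation in the population part relies crucially on the common covariate marginal; without that assumption an extra $\bar J^2\bar E^{(t)}$ term would appear.
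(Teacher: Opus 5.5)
Your proposal is correct and follows the paper's proof step for step. Like the paper, you add and subtract $s_i^{(t)}$ and its honest average, and bound the resulting $\bar J^2\Gamma_{\textsf{nonlin}}$ term by Jensen plus the bounded Jacobian, using the cancellation of the $\theta^{(t)}$-terms under the common covariate marginal; you then use Lemma \ref{lem:emp_pop} with Jensen for the fluctuation terms and \eqref{eq:noise_bound_nl} for the noise. Your write-up is more explicit than the paper's on conditioning on $\theta^{(t)}$ and on splitting $\delta$ between the two events.
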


\begin{proof}
By adding and subtracting $s^{(t)}_i$ and $\frac{1}{|\cH|}\sum_{j\in\cH}s_j^{(t)}$ yields
\begin{align*}
\hat  s_i^{(t)}-\frac{1}{|\cH|}\sum_{j\in\cH}\hat  s_j^{(t)} &=s_i^{(t)}-\frac{1}{|\cH|}\sum_{j\in\cH}s_j^{(t)}+\hat  s_i^{(t)}-s_i^{(t)}-\frac{1}{|\cH|}\sum_{j\in\cH}\left(\hat  s_j^{(t)}-s_j^{(t)}\right).
\end{align*}
The first term contains only the model heterogeneity since the terms involving $h_{\theta^{(t)}}(X)$ cancel. By Jensen's inequality and Assumption \ref{assm:jacobian},
\begin{align*}
\frac{1}{|\cH|}\sum_{i\in\cH}
\norm{s_i^{(t)}-\frac{1}{|\cH|}\sum_{j\in\cH}s_j^{(t)}}_F^2
\leq \bar J^2\Gamma_{\textsf{nonlin}}.
\end{align*}
Lemma \ref{lem:emp_pop}, Jensen's inequality, \eqref{lem:e2}, and \eqref{eq:noise_bound_nl} bound the remaining terms. We then substitute these bounds into \eqref{eq:grad_diff_g}, squaring, and averaging over the honest clients proves the result.
\end{proof}

Let
\begin{align}\label{eq:Zt_def}
Z^{(t)}:=\norm{\nabla\hat  L_\cH^{(t)}(\theta^{(t)})-\nabla L_\cH(\theta^{(t)})}^2.
\end{align}
We now derive the bound on $Z^{(t)}$ explicitly. By the definitions of $\hat  s_i^{(t)}$, $s_i^{(t)}$, and $\bar\xi^{(t)}$, we have 
\begin{align*}
\nabla\hat  L_\cH^{(t)}(\theta^{(t)})-\nabla L_\cH(\theta^{(t)})
=\frac{2}{|\cH|}\sum_{i\in\cH}
\left(\hat  s_i^{(t)}-s_i^{(t)}\right)-\bar\xi^{(t)}.
\end{align*}
It follows from $\norm{a+b}^2\leq2\norm{a}^2+2\norm{b}^2$ that
\begin{align*}
Z^{(t)}
&\leq8\norm{\frac{1}{|\cH|}\sum_{i\in\cH}
\left(\hat  s_i^{(t)}-s_i^{(t)}\right)}^2
+2\norm{\bar\xi^{(t)}}^2.
\end{align*}
We note that conditionally on $\theta^{(t)}$, the matrices $\hat  s_i^{(t)}-s_i^{(t)}$ are independent and mean-zero across honest clients. \textcolor{black}{In addition, for every $U\in\mathbb S_F^{p\times1}$, conditional sub-Gaussian Hoeffding \citep[Theorem 2.7.3]{vershynin2018high} implies
\begin{align*}
&\mathbb P\left(\left|\left\langle U,\frac{1}{|\cH|}\sum_{i\in\cH}(\hat s_i^{(t)}-s_i^{(t)})\right\rangle_F\right|\geq u\middle|\theta^{(t)}\right) \leq2\exp\left(-\frac{c\tau|\cH|^2u^2}{\bar J^2\sum_{i\in\cH}E_i^{(t)}}\right).
\end{align*}
Therefore, by taking a union bound over a fixed net of $\mathbb S_F^{p\times1}$ and over the iterations yields the following expression}
\begin{align*}
\norm{\frac{1}{|\cH|}\sum_{i\in\cH}
\left(\hat  s_i^{(t)}-s_i^{(t)}\right)}^2
&\lesssim\frac{\bar\omega^2}{|\cH|}
\left(\frac{1}{|\cH|}\sum_{i\in\cH}E_i^{(t)}\right) =\frac{\bar\omega^2}{|\cH|}\bar E^{(t)}.
\end{align*}
Moreover, the second inequality in \eqref{eq:noise_bound_nl} provides the bound for the second term in the bound of $Z^{(t)}$, i.e.,
\begin{align*}
\norm{\bar\xi^{(t)}}^2
\lesssim\bar J^2\sigma^2
\left(\frac{\textcolor{black}{p}+\log(2T/\delta)}{\tau|\cH|}\right).
\end{align*}
Therefore, combining these bounds proves
\begin{align}\label{eq:Zt_bound}
Z^{(t)}\lesssim\frac{\bar\omega^2}{|\cH|}\bar E^{(t)}
+\bar J^2\sigma^2\left(\frac{\textcolor{black}{p}+\log(2T/\delta)}{\tau|\cH|}\right),
\end{align}
simultaneously over the iterations, with probability at least $1-\delta$.

\subsection{Bound on \texorpdfstring{$G_T$}{GT} and Convergence Analysis}\label{subsec:nonlinear_convergence}

Let $\thetastar$ be a minimizer of the honest-average population loss $L_\mathcal{H}(\theta)$. We note that conditional mean-zero noise implies
\begin{align}\label{eq:E_loss_relation}
\bar E^{(t)}=L_\cH(\theta^{(t)})-L_\cH(\thetastar)+\bar E^\star \text{ where we have }
\bar E^\star:=\frac{1}{|\cH|}\sum_{i\in\cH}
\mathbb{E}_X\left[\norm{h_{\thetastar}(X)-h_{\thetastar_i}(X)}^2\right].
\end{align}
As $\thetastar$ minimizes $L_\cH(\theta)$, by evaluating the loss at each client optimum and averaging provides $\bar E^\star\leq\Gamma_{\textsf{nonlin}}$. Assumption \ref{ass:PL} therefore yields
\begin{align}\label{eq:E_PL_relation}
\bar E^{(t)}\leq\Gamma_{\textsf{nonlin}}
+\frac{1}{2\mu^\prime}\norm{\nabla L_\cH(\theta^{(t)})}^2.
\end{align}

\begin{theorem}\label{thm:convergence_nl}
Suppose that Assumptions \ref{assm:subg},  \ref{assm:jacobian}, \ref{assumption:subGaussian_residual}, \ref{ass:smooth}, and \ref{ass:PL} hold. Let $\eta\leq1/L^\prime$ and suppose that the number of fresh samples per client satisfies
\begin{align}\label{eq:sample_cond_nl}
\tau\geq\frac{4\textcolor{black}{C_0}C_1\bar J^2}{\mu^\prime}
\left(\kappa+\frac{1}{|\cH|}\right)
\left(\textcolor{black}{p}+\log\left(\frac{2|\cH|T}{\delta}\right)\right).
\end{align}
Then, for every $\delta\in(0,1)$, with probability at least $1-\delta$,
\begin{align*}
Q_T&:=\frac{1}{T}\sum_{t=0}^{T-1}\norm{\nabla L_\cH(\theta^{(t)})}^2\lesssim\frac{\Delta L^{(0)}}{\eta T}+\textcolor{black}{\left[\kappa\bar J^2+\bar\omega^2\left(\kappa+\frac{1}{|\cH|}\right)\right]\Gamma_{\textsf{nonlin}}}\\
&+\bar J^2\sigma^2\left(\kappa+\frac{1}{|\cH|}\right)
\left(\frac{\textcolor{black}{p}+\log(2|\cH|T/\delta)}{\tau}\right),
\end{align*}
where $\Delta L^{(0)}:=L_\cH(\theta^{(0)})-L_\cH(\thetastar)$.
\end{theorem}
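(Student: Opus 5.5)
We mirror the proof of Theorem \ref{thm:convergence_linear_fresh}, replacing the linear identities with the nonlinear ones above. First, from Assumption \ref{ass:smooth} and $\eta\leq 1/L^\prime$, the descent lemma applied to $\theta^{(t+1)}=\theta^{(t)}-\eta\sF^{(t)}$ with $e^{(t)}:=\sF^{(t)}-\nabla L_\cH(\theta^{(t)})$ gives
\begin{align*}
L_\cH(\theta^{(t+1)})-L_\cH(\theta^{(t)})\leq-\frac{\eta}{2}\norm{\nabla L_\cH(\theta^{(t)})}^2+\frac{\eta}{2}\norm{e^{(t)}}^2.
\end{align*}
This follows from expanding $-\eta\langle\nabla L_\cH,\nabla L_\cH+e\rangle+\frac{L^\prime\eta^2}{2}\norm{\nabla L_\cH+e}^2$ and using $L^\prime\eta\leq1$ together with $\norm{a+b}^2\le \norm{a}^2+2\langle a,b\rangle+\norm{b}^2$. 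Summing over $t$, telescoping, and using $L_\cH(\theta^{(T)})\geq L_\cH(\thetastar)$ yields $Q_T\leq \frac{2\Delta L^{(0)}}{\eta T}+\frac1T\sum_t\norm{e^{(t)}}^2$.

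Next, we split the aggregation error as $e^{(t)}=(\sF^{(t)}-\nabla\hat L_\cH^{(t)}(\theta^{(t)}))+(\nabla\hat L_\cH^{(t)}(\theta^{(t)})-\nabla L_\cH(\theta^{(t)}))$. Definition \ref{def:robust-agg} then gives $\norm{e^{(t)}}^2\leq 2\kappa G^{(t)}+2Z^{(t)}$, so $Q_T\lesssim \frac{\Delta L^{(0)}}{\eta T}+\kappa G_T+Z_T$. We intersect the high-probability events of Lemma \ref{lem:Gt_nonlinear} and of \eqref{eq:Zt_bound}, each at level $\delta/2$; the factor of two is absorbed into the logarithms. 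On this event we average over $t$ and insert \eqref{eq:E_PL_relation}, $\bar E^{(t)}\leq\Gamma_{\textsf{nonlin}}+\frac{1}{2\mu^\prime}\norm{\nabla L_\cH(\theta^{(t)})}^2$, to obtain
\begin{align*}
G_T&\lesssim(\bar J^2+\bar\omega^2)\Gamma_{\textsf{nonlin}}+\frac{\bar\omega^2}{2\mu^\prime}Q_T+\bar J^2\sigma^2\left(\frac{p+\log(2|\cH|T/\delta)}{\tau}\right),\\
Z_T&\lesssim\frac{\bar\omega^2}{|\cH|}\Gamma_{\textsf{nonlin}}+\frac{\bar\omega^2}{2\mu^\prime|\cH|}Q_T+\bar J^2\sigma^2\left(\frac{p+\log(2|\cH|T/\delta)}{\tau|\cH|}\right).
\end{align*}
Substituting these bounds gives
\begin{align*}
Q_T\lesssim\frac{\Delta L^{(0)}}{\eta T}+\left[\kappa\bar J^2+\bar\omega^2\left(\kappa+\tfrac{1}{|\cH|}\right)\right]\Gamma_{\textsf{nonlin}}+\bar J^2\sigma^2\left(\kappa+\tfrac{1}{|\cH|}\right)\frac{p+\log(2|\cH|T/\delta)}{\tau}+c\,\frac{\bar\omega^2}{\mu^\prime}\left(\kappa+\tfrac{1}{|\cH|}\right)Q_T,
\end{align*}
where $c$ is the absolute constant hidden in the preceding $\lesssim$.

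The main obstacle is closing this recursion, which means making the coefficient of $Q_T$ at most $1/2$. By \eqref{eq:omega_tau} and the burn-in \eqref{eq:sample_cond_nl},
\begin{align*}
\frac{\bar\omega^2}{\mu^\prime}\left(\kappa+\frac{1}{|\cH|}\right)=\frac{C_1\bar J^2(\kappa+1/|\cH|)(p+\log(2|\cH|T/\delta))}{\mu^\prime\tau}\leq\frac{1}{4C_0}.
\end{align*}
Choosing $C_0\geq c/2$ therefore gives $c\,\bar\omega^2(\kappa+1/|\cH|)/\mu^\prime\leq1/2$. Moving that term to the left-hand side and doubling the constants proves the claim. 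The delicate bookkeeping is to ensure that the constant $c$ coming from Lemma \ref{lem:Gt_nonlinear} and \eqref{eq:Zt_bound} is independent of $\kappa$, $\tau$, $T$, and the problem parameters, so that a single universal $C_0$ suffices; it is, since all hidden constants are numerical or come from $C_1$. A secondary point is that $Q_T$ is random, but the absorption is a deterministic algebraic step on the good event, so no further union bound is needed.
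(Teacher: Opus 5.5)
Your proposal is correct and follows the paper's proof essentially step for step. The chain is the same: the descent lemma with $\|e^{(t)}\|^2\le 2\kappa G^{(t)}+2Z^{(t)}$, averaging Lemma~\ref{lem:Gt_nonlinear} and \eqref{eq:Zt_bound}, inserting \eqref{eq:E_PL_relation}, and absorbing the $Q_T$ term via the burn-in \eqref{eq:sample_cond_nl}. What you add is detail the paper leaves implicit (the explicit descent-lemma expansion, the $\delta/2$ event split, and the sharper $1/(4C_0)$ coefficient bound), which does not change the route.
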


\begin{proof}
Let $e^{(t)}:=\sF^{(t)}-\nabla L_\cH(\theta^{(t)})$. We note that the robustness of the aggregation rule and \eqref{eq:Zt_def} imply
\begin{align}\label{eq:agg_error_nl}
\norm{e^{(t)}}^2\leq2\kappa G^{(t)}+2Z^{(t)}.
\end{align}
By $L^\prime$-smoothness and $\eta\leq1/L^\prime$,
\begin{align*}
L_\cH(\theta^{(t+1)})-L_\cH(\theta^{(t)})
\leq-\frac{\eta}{2}\norm{\nabla L_\cH(\theta^{(t)})}^2
+\frac{\eta}{2}\norm{e^{(t)}}^2.
\end{align*}
Then, we sum the above inequality over the iterations and use the optimality of $\thetastar$ to obtain
\begin{align*}
Q_T &\leq\frac{2\Delta L^{(0)}}{\eta T}
+\frac{1}{T}\sum_{t=0}^{T-1}\norm{e^{(t)}}^2 \leq\frac{2\Delta L^{(0)}}{\eta T}+2\kappa G_T+2Z_T,
\end{align*}
where we have $Z_T:=\frac{1}{T}\sum_{t=0}^{T-1}Z^{(t)}$. By averaging Lemma \ref{lem:Gt_nonlinear} and \eqref{eq:Zt_bound}, we obtain
\begin{align*}
G_T
&\lesssim\bar J^2\Gamma_{\textsf{nonlin}}+\bar\omega^2\bar E_T
+\bar J^2\sigma^2\left(\frac{\textcolor{black}{p}+\log(2|\cH|T/\delta)}{\tau}\right),\\
Z_T
&\lesssim\frac{\bar\omega^2}{|\cH|}\bar E_T
+\bar J^2\sigma^2\left(\frac{\textcolor{black}{p}+\log(2|\cH|T/\delta)}{\tau|\cH|}\right),
\end{align*}
where $\bar E_T:=\frac{1}{T}\sum_{t=0}^{T-1}\bar E^{(t)}$. In addition, we \eqref{eq:E_PL_relation} over iterations to obtain
\begin{align*}
\bar E_T\leq\Gamma_{\textsf{nonlin}}+\frac{Q_T}{2\mu^\prime}.
\end{align*}
Then, by combining the preceding bounds, we have 
\begin{align*}
Q_T &\lesssim\frac{\Delta L^{(0)}}{\eta T}
+\textcolor{black}{\left[\kappa\bar J^2+\bar\omega^2\left(\kappa+\frac{1}{|\cH|}\right)\right]\Gamma_{\textsf{nonlin}}}\\
&+\bar J^2\sigma^2\left(\kappa+\frac{1}{|\cH|}\right)
\left(\frac{\textcolor{black}{p}+\log(2|\cH|T/\delta)}{\tau}\right)+\frac{\bar\omega^2}{\mu^\prime}
\left(\kappa+\frac{1}{|\cH|}\right)Q_T.
\end{align*}
Finally, substituting \eqref{eq:omega_tau} into \eqref{eq:sample_cond_nl} yields
\begin{align*}
\frac{\bar\omega^2}{\mu^\prime}
\left(\kappa+\frac{1}{|\cH|}\right)\leq\frac{1}{\textcolor{black}{C_0}},
\end{align*}
\textcolor{black}{As $C_0$ is chosen sufficiently large relative to the universal constants hidden above,} this allows us to move the final term to the left-hand side and absorb the resulting numerical factor into the constant absorbed in $\lesssim$.
\end{proof}

We now combine Lemma \ref{lem:Gt_nonlinear}, \eqref{eq:E_PL_relation}, and Theorem \ref{thm:convergence_nl} to write the following bound.

\begin{lemma}\label{lem:GT_nonlinear}
Suppose that the conditions of Theorem \ref{thm:convergence_nl} hold. Then, with probability at least $1-\delta$,
\begin{align*}
G_T\lesssim\textcolor{black}{(\bar J^2+\bar\omega^2)\Gamma_{\textsf{nonlin}}}
+\frac{\bar\omega^2\Delta L^{(0)}}{\mu^\prime\eta T}
+\bar J^2\sigma^2\left(\frac{\textcolor{black}{p}+\log(2|\cH|T/\delta)}{\tau}\right).
\end{align*}
\end{lemma}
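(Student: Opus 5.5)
We prove Lemma~\ref{lem:GT_nonlinear} by combining three results already in hand: the per-iteration heterogeneity bound of Lemma~\ref{lem:Gt_nonlinear}, the PL-based function-error relation \eqref{eq:E_PL_relation}, and the convergence bound of Theorem~\ref{thm:convergence_nl}. All three are used on the same high-probability event, so we first intersect the events of Lemma~\ref{lem:emp_pop}, of \eqref{eq:noise_bound_nl}, and of the Hoeffding bound used for $Z^{(t)}$. We run each at confidence $\delta/3$ and adjust $C_1$ so the logarithms are unchanged up to constants. On this event, Lemma~\ref{lem:Gt_nonlinear} holds simultaneously for all $t$.

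\textbf{Step 1: average over iterations.} Averaging Lemma~\ref{lem:Gt_nonlinear} over $t=0,\ldots,T-1$ gives
\begin{align*}
G_T\lesssim \bar J^2\Gamma_{\textsf{nonlin}}+\bar\omega^2\bar E_T+\bar J^2\sigma^2\left(\frac{p+\log(2|\cH|T/\delta)}{\tau}\right).
\end{align*}

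\textbf{Step 2: replace $\bar E_T$.} Averaging \eqref{eq:E_PL_relation} yields $\bar E_T\leq\Gamma_{\textsf{nonlin}}+Q_T/(2\mu^\prime)$. Substituting this produces the intermediate form \eqref{G_T_Q_T_nonlinear}, in which the heterogeneity coefficient is $(\bar J^2+\bar\omega^2)\Gamma_{\textsf{nonlin}}$.

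\textbf{Step 3: insert the convergence bound.} We plug in Theorem~\ref{thm:convergence_nl} for $Q_T$. This gives three contributions.
\begin{itemize}
\item The initialization contribution is $\bar\omega^2\Delta L^{(0)}/(\mu^\prime\eta T)$, which is exactly the claimed term.
\item The heterogeneity contribution is $\frac{\bar\omega^2}{\mu^\prime}\left[\kappa\bar J^2+\bar\omega^2(\kappa+1/|\cH|)\right]\Gamma_{\textsf{nonlin}}$.
\item The noise contribution is $\frac{\bar\omega^2}{\mu^\prime}(\kappa+1/|\cH|)\,\bar J^2\sigma^2\,\frac{p+\log(\cdot)}{\tau}$.
\end{itemize}

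\textbf{Step 4: absorb the extra terms.} This is the delicate bookkeeping step. Condition \eqref{eq:sample_cond_nl} gives $\frac{\bar\omega^2}{\mu^\prime}(\kappa+1/|\cH|)\leq 1/C_0\leq 1$. This immediately absorbs the noise contribution into the statistical term. It also absorbs the $\bar\omega^2\cdot\bar\omega^2(\kappa+1/|\cH|)/\mu^\prime$ piece of the heterogeneity contribution into $\bar\omega^2\Gamma_{\textsf{nonlin}}$.

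The remaining piece $\frac{\bar\omega^2}{\mu^\prime}\kappa\bar J^2\,\Gamma_{\textsf{nonlin}}$ is handled with the same inequality: since $\kappa\leq\kappa+1/|\cH|$, it is at most $\bar J^2\Gamma_{\textsf{nonlin}}/C_0$. It is therefore absorbed into the $\bar J^2\Gamma_{\textsf{nonlin}}$ term.

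The main obstacle is only to confirm that the absorption constant is uniform, which it is because $C_0$ is a fixed large numerical constant. Collecting the surviving terms gives exactly the stated bound, with probability at least $1-\delta$.
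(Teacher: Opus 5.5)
Your proposal is correct and follows the paper's proof: average Lemma~\ref{lem:Gt_nonlinear} over iterations, replace $\bar E_T$ using the averaged form of \eqref{eq:E_PL_relation}, plug in the $Q_T$ bound of Theorem~\ref{thm:convergence_nl}, and absorb the $\bar\omega^2/\mu^\prime$-weighted extra terms via \eqref{eq:sample_cond_nl}. The paper only says these factors are ``bounded by the sample-size condition''; you spell the absorption out explicitly, including the $\frac{\bar\omega^2}{\mu^\prime}\kappa\bar J^2\Gamma_{\textsf{nonlin}}$ piece via $\kappa\le\kappa+1/|\cH|$, and you also make the event intersection explicit.
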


\begin{proof}
We begin by averaging the per-iteration bound in Lemma \ref{lem:Gt_nonlinear} gives
\begin{align*}
G_T\lesssim\bar J^2\Gamma_{\textsf{nonlin}}+\bar\omega^2\bar E_T
+\bar J^2\sigma^2\left(\frac{\textcolor{black}{p}+\log(2|\cH|T/\delta)}{\tau}\right).
\end{align*}
By the averaged form of \eqref{eq:E_PL_relation}, we then have
\begin{align*}
\bar E_T\leq\Gamma_{\textsf{nonlin}}+\frac{Q_T}{2\mu^\prime}.
\end{align*}
Therefore, by substituting this inequality and then applying Theorem \ref{thm:convergence_nl}, we obtain
\begin{align*}
G_T
&\lesssim(\bar J^2+\bar\omega^2)\Gamma_{\textsf{nonlin}}
+\frac{\bar\omega^2}{\mu^\prime}
\left(\frac{\Delta L^{(0)}}{\eta T}
+\textcolor{black}{\left[\kappa\bar J^2+\bar\omega^2\left(\kappa+\frac{1}{|\cH|}\right)\right]\Gamma_{\textsf{nonlin}}}\right)\\
&+\bar J^2\sigma^2\left(1+
\frac{\bar\omega^2}{\mu^\prime}
\left(\kappa+\frac{1}{|\cH|}\right)\right)
\left(\frac{\textcolor{black}{p}+\log(2|\cH|T/\delta)}{\tau}\right).
\end{align*}
The sample-size condition \eqref{eq:sample_cond_nl} bounds the factors involving $\bar\omega^2$, and absorbing universal numerical constants yields the stated result.
\end{proof}

\subsection{Non-Asymptotic Function Recovery Error Bound}
\label{subsec:param_recovery_nonlinear}

We conclude the nonlinear analysis by restating the function-recovery guarantee and providing its complete derivation.

\begin{corollary}[Parameter Recovery Bound]
Suppose that the conditions of Theorem \ref{thm:convergence_nl} hold. Then, for every $\delta\in(0,1)$, with probability at least $1-\delta$, it holds that
\begin{align*}
&\frac{1}{T}\sum_{t=0}^{T-1}\frac{1}{|\cH|}\sum_{j\in\cH}
\mathbb{E}_X\left[\norm{h_{\theta^{(t)}}(X)-h_{\thetastar_j}(X)}^2\right]\\
&\lesssim\frac{\Delta L^{(0)}}{\mu^\prime\eta T}
+\textcolor{black}{\left[1+\frac{\kappa\bar J^2}{\mu^\prime}
+\frac{\bar\omega^2}{\mu^\prime}\left(\kappa+\frac{1}{|\cH|}\right)\right]\Gamma_{\textsf{nonlin}}}+\frac{\bar J^2\sigma^2}{\mu^\prime}
\left(\kappa+\frac{1}{|\cH|}\right)
\left(\frac{\textcolor{black}{p}+\log(2|\cH|T/\delta)}{\tau}\right).
\end{align*}
\end{corollary}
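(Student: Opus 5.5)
The plan is to reduce the function-recovery error at each iterate to the loss suboptimality plus a heterogeneity floor, and then to the gradient norm through the PL condition. The bound will then follow from the ergodic bound on $Q_T$ in Theorem~\ref{thm:convergence_nl}.

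The first step is the identity \eqref{eq:E_loss_relation}, which uses conditional mean-zero noise together with the common covariate marginal across honest clients. It gives
\begin{align*}
\frac{1}{|\cH|}\sum_{j\in\cH}\mathbb{E}_X\bigl[\norm{h_{\theta^{(t)}}(X)-h_{\thetastar_j}(X)}^2\bigr]=\bar E^{(t)}=L_\cH(\theta^{(t)})-L_\cH(\thetastar)+\bar E^\star .
\end{align*}
Optimality of $\thetastar$ then gives $\bar E^\star\le\Gamma_{\textsf{nonlin}}$. The justification is that $\bar E^\star$ is the function-space part of $L_\cH(\thetastar)$, which is at most the corresponding part of $L_\cH(\thetastar_i)$ for each $i$; averaging over $i\in\cH$ yields exactly $\Gamma_{\textsf{nonlin}}$.

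Next I apply the PL inequality (Assumption~\ref{ass:PL}) to bound $L_\cH(\theta^{(t)})-L_\cH(\thetastar)\le \frac{1}{2\mu^\prime}\norm{\nabla L_\cH(\theta^{(t)})}^2$. This recovers \eqref{eq:E_PL_relation}. Averaging over $t=0,\dots,T-1$ gives the left-hand side of the corollary bounded by $\Gamma_{\textsf{nonlin}}+Q_T/(2\mu^\prime)$.

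The final step substitutes the bound on $Q_T$ from Theorem~\ref{thm:convergence_nl}, on the same high-probability event; no extra union bound is needed, since the event already covers all iterations. Dividing that bound by $2\mu^\prime$ produces three pieces:
\begin{itemize}
\item the optimization term $\Delta L^{(0)}/(\mu^\prime\eta T)$;
\item the bias term $\bigl[\kappa\bar J^2+\bar\omega^2(\kappa+1/|\cH|)\bigr]\Gamma_{\textsf{nonlin}}/\mu^\prime$, which combines with the additive $\Gamma_{\textsf{nonlin}}$ to give the bracketed coefficient in the statement;
\item the noise term scaled by $1/\mu^\prime$.
\end{itemize}
Numerical constants are absorbed into $\lesssim$. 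If one wants the main-text form with $L^\prime$, take $\eta=1/L^\prime$.

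No step here is a real obstacle, since all concentration work is already contained in Theorem~\ref{thm:convergence_nl}. The only point needing care is the identity \eqref{eq:E_loss_relation}. The cross terms $\mathbb{E}[\langle V, h_\theta(X)-h_{\thetastar_i}(X)\rangle]$ vanish by Assumption~\ref{assm:subg}. The noise variance contributions are $\theta$-independent, so they cancel in the difference $L_\cH(\theta^{(t)})-L_\cH(\thetastar)$. The step $\bar E^\star\le\Gamma_{\textsf{nonlin}}$ uses the same cancellation.
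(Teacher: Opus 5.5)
Your proposal is correct and matches the paper's proof step for step. Both use the identity \eqref{eq:E_loss_relation}, bound $\bar E^\star\le\Gamma_{\textsf{nonlin}}$ by comparing $L_\cH(\thetastar)$ with $L_\cH(\thetastar_i)$ and averaging over $i\in\cH$, apply PL to get $\frac{1}{T}\sum_{t}\bar E^{(t)}\le\Gamma_{\textsf{nonlin}}+Q_T/(2\mu^\prime)$, and then substitute the $Q_T$ bound of Theorem~\ref{thm:convergence_nl} on its high-probability event, with your explanation of why the noise cross terms vanish and the variance cancels being slightly more explicit than the paper's one-line justification.
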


\begin{proof}
Recall from \eqref{eq:E_loss_relation} that
\begin{align*}
\bar E^{(t)}
=L_\cH(\theta^{(t)})-L_\cH(\thetastar)+\bar E^\star.
\end{align*}
To control $\bar E^\star$, evaluate the average functional error at each client optimum and then average over these choices. As $\thetastar$ minimizes the honest-average population loss, we have
\begin{align*}
\bar E^\star
&\leq\frac{1}{|\cH|}\sum_{i\in\cH}
\left(\frac{1}{|\cH|}\sum_{j\in\cH}
\mathbb{E}_X\left[\norm{h_{\thetastar_i}(X)-h_{\thetastar_j}(X)}^2\right]\right)\\
&=\frac{1}{|\cH|^2}\sum_{i\in\cH}\sum_{j\in\cH}
\mathbb{E}_X\left[\norm{h_{\thetastar_i}(X)-h_{\thetastar_j}(X)}^2\right]
:=\Gamma_{\textsf{nonlin}}.
\end{align*}
We note that the PL condition (Assumption \ref{ass:PL}) implies
\begin{align*}
L_\cH(\theta^{(t)})-L_\cH(\thetastar)
\leq\frac{1}{2\mu^\prime}
\norm{\nabla L_\cH(\theta^{(t)})}^2.
\end{align*}
Therefore, by combining the previous bounds, we obtain
\begin{align*}
\bar E^{(t)}
\leq\Gamma_{\textsf{nonlin}}+\frac{1}{2\mu^\prime}
\norm{\nabla L_\cH(\theta^{(t)})}^2.
\end{align*}
Finally, averaging over $t=0,\ldots,T-1$ yields
\begin{align*}
\frac{1}{T}\sum_{t=0}^{T-1}\bar E^{(t)}
\leq\Gamma_{\textsf{nonlin}}+\frac{Q_T}{2\mu^\prime}.
\end{align*}
Hence, substituting the bound on $Q_T$ from Theorem \ref{thm:convergence_nl} and absorbing numerical constants proves the result.
\end{proof}

\end{document}